\numberwithin{equation}{section}
\definecolor{ggreen}{rgb}{0.0, 0.5, 0.3}
\definecolor{rred}{rgb}{0.65, 0.2, 0.2}
\definecolor{bblue}{rgb}{0.0, 0.0, 1}
\newtheorem{theorem}{Theorem}
\newtheorem{lemma}{Lemma}
\newtheorem{corollary}{Corollary}
\newcommand{\cS}{\mathcal{S}} 
\newcommand{\Sp}{\mathrm{Sp}} 		
\renewcommand{\P}{P}	
\newcommand{\E}{\mathbb{E}}
\newcommand{\sfgrad}{\mathsf{grad}}
\newcommand{\bx}{\boldsymbol{x}}
\newcommand{\bu}{\boldsymbol{u}}
\newcommand{\bbeta}{\boldsymbol{\beta}}
\newcommand{\balpha}{\boldsymbol{\alpha}}
\newcommand{\bQ}{\boldsymbol{Q}}
\newcommand{\bP}{\boldsymbol{P}}
\newcommand{\bB}{\boldsymbol{B}}
\newcommand{\bI}{\boldsymbol{I}}
\newcommand{\bV}{\boldsymbol{V}}
\newcommand{\bA}{\boldsymbol{A}}
\newcommand{\bW}{\boldsymbol{W}}
\newcommand{\bY}{\boldsymbol{Y}}
\newcommand{\bG}{\boldsymbol{G}}
\newcommand{\bzero}{\boldsymbol{0}}
\newcommand{\bDelta}{\boldsymbol{\Delta}}
\newcommand{\bZ}{\boldsymbol{Z}}
\newcommand{\bSigma}{\boldsymbol{\Sigma}}
\newcommand{\cX}{\mathcal{X}}
\newcommand{\cP}{\mathcal{P}}
\newcommand{\cF}{\mathcal{F}}
\newcommand{\cH}{\mathcal{H}}
\newcommand{\cB}{\mathcal{B}}
\newcommand{\cA}{\mathcal{A}}
\newcommand{\bmP}{\bm{P}}
\newcommand{\N}{\mathbb N}
\newcommand{\R}{\mathbb R}
\newcommand{\argmin}{\mathop{\mathrm{argmin}}}
\renewcommand{\phi}{\varphi}
\DeclareMathOperator{\Tr}{Tr}
\DeclareMathOperator{\Exp}{\mathsf{Exp}}
\newcommand{\bX}{\boldsymbol X}
\newcommand{\Xin}{\mathcal X_{\mathrm{in}}}
\newcommand{\Xout}{\mathcal X_{\mathrm{out}}}
\newcommand{\bXin}{\boldsymbol{X}_{\mathrm{in}}}
\newcommand{\bXout}{\boldsymbol{X}_{\mathrm{out}}}
\newcommand{\cSREAP}{\mathcal S_{\mathsf{REAP}}}
\newcommand{\cAREAP}{\mathcal A_{\mathsf{REAP}}}
\newcommand{\cPREAP}{\mathcal P_{\mathsf{REAP}}}
\title{Stochastic and Private Nonconvex Outlier-Robust PCA}
\author{%
 {Tyler Maunu} \and
 {Chenyu Yu} \and {Gilad Lerman}
}
\date{}
\begin{document}

\maketitle

\begin{abstract}
    We develop theoretically guaranteed stochastic methods for outlier-robust PCA. Outlier-robust PCA seeks an underlying low-dimensional linear subspace from a dataset that is corrupted with outliers.  We are able to show that our methods, which involve stochastic geodesic gradient descent over the Grassmannian manifold, converge and recover an underlying subspace in various regimes through the development of a novel convergence analysis. The main application of this method is an effective differentially private algorithm for outlier-robust PCA that uses a Gaussian noise mechanism within the stochastic gradient method. Our results emphasize the advantages of the nonconvex methods over another convex approach to solving this problem in the differentially private setting. Experiments on synthetic and stylized data verify these results.
\end{abstract}

\section{Introduction}

\emph{Outlier-robust PCA} (ORPCA) involves the problem of robustly estimating an underlying linear subspace from data in the presence of large amounts of corrupted data. While many solutions have been proposed for this problem, some particularly effective methods involve nonconvex energy minimization~\citep{maunu2019well}. However, these methods require generic conditions on the full dataset and it is not clear how they behave in the presence of \emph{stochastic gradients}, since they typically require good initialization and control over where the iterates lie.

This work develops a deeper understanding of how nonconvex methods for OR-PCA interact with stochastic gradients. 
Past studies have mainly looked at recovery limits of such methods~\citep{lerman2018overview}, both in terms of percentages of corrupted data as well as their associated statistics. In the current work, we show that it is possible to extend the results to the stochastic setting while maintaining robustness guarantees. 

As an important application, we show that specific choices of stochastic gradients lead to \emph{differential privacy}. Private algorithms provide an important way to gain insight from sensitive data. As a framework, differential privacy has harkened in a new era in the study of privacy and its interaction with data science and machine learning~\citep{dwork2006calibrating,dwork2008differential}.
To make an algorithm differentially private, one typically incorporates some sort of noise mechanism. This noise mechanism is applied to either the data itself or within the algorithm to limit the influence any single point can have on the output. In this paper, we focus on differentially private gradient descent algorithms, which use noisy gradients at each iteration to achieve differential privacy. 
 
While differential privacy may be simple to include within an algorithm, it is less straightforward to guarantee how accurate the algorithm will be. Recently, there has been work on empirical risk minimization by differentially private methods, which show that it is possible to achieve fast estimation and optimization rates with differentially private algorithms~\citep{bassily2014private,talwar2014private,bassily2019private}. Such results typically focus on the convex setting, but some recent work has studied such algorithms in the nonconvex setting as well~\citep{wang2019differentially}. While these results are quite general, they do not capture the intricacies of the analysis of robust methods. That is, especially in the setting of ORPCA, robust methods are concerned with \emph{recovery results}, where under various conditions on a corrupted dataset, an algorithm can still recover some underlying structure. Especially in nonconvex recovery problems, it is not clear how the stochastic nature of the private algorithms interacts with existing recovery guarantees. Due to our generic guarantees for stochastic gradient methods, we are able to guarantee recovery for a differentially private method.



\subsection{Background}

Suppose that we observe a dataset $\cX = \{\bx_1, \dots, \bx_N\}\subset \R^D$.  The classical problem of principal component analysis (PCA) seeks the $r$ directions of maximum variance within this dataset, where $r$ is a parameter chosen by the user. Equivalently, one can also try to find a linear subspace that spans these directions.
It is therefore convenient to encode PCA as a problem over the Grassmannian manifold of $r$-dimensional linear subspaces in $\R^D$, $G(D, r)$. Throughout the paper, we also consider the optimization over orthogonal bases for $L \in G(D,r)$: each element of $G(D,r)$ can be spanned by the columns of a semiorthogonal matrix in $O(D,r):=\{\bV \in \mathbb{R}^{D \times r} : \bV^\top \bV = \bI_r\}$.

In this language, PCA solves the geometric problem
\begin{equation}\tag{\sf{PCA}}\label{eq:PCA}
    \min_{\bV \in O(D,r)} \frac{1}{N} \sum_{i=1}^N \| (\bI - \bV \bV^\top) \bx_i\|^2,
\end{equation}
where $ \bV \bV^\top$ is the orthogonal projection matrix onto $\Sp(\bV)$. PCA thus finds the subspace which minimizes the sum of squared distances between points and the subspace. 

PCA is not outlier-robust due to the use of squared error. A typical way to robustify 
it 
is to remove the square, which results in the following formulation which we refer to as Grassmannian Least Absolute Deviations (GLAD). :
\begin{align}\tag{\sf GLAD}\label{eq:GLAD}
    &\min_{\bV \in O(D,r)} \frac{1}{N} \sum_{i=1}^N \| (\bI - \bV \bV^\top) \bx_i\| =: F(\bV;\cX).
\end{align}
Many methods have been proposed to solve this nonconvex and nonsmooth problem and they are overviewed in \citet{lerman2018overview}. 

To give a high-level overview of our results, we will briefly discuss the two areas that it straddles. First, the primary result of our analysis is guarantees for a nonconvex, stochastic method for ORPCA. Typically, ORPCA algorithms assume an inlier-outlier model, $\mathcal X = \Xin \cup \Xout$, where $\Xin$ lie on a low-dimensional subspace $L_\star$, and the outliers $\Xout$ are corrupted to not lie on this subspace. The goal is to recover $L_\star$, or $\bV_\star \in O(D,r)$ such that $\Sp(\bV_\star) = L_\star$. For simplicity, we assume that the data is centered, so that we search for a linear subspace. 
Throughout the paper, we also make the simplifying assumption that $\cX \subset S^{D-1}$, where $S^{D-1}$ is the sphere in $\R^D$, so that the function $F$ is 1-Lipschitz. This can be achieved by first normalizing all points to the sphere, which has robustifying characteristics to adversarial outliers~\citep{maunu2019robust}.

Second, the important application of our results involves differential privacy \citep{Dwork2013privacybook}. A randomized algorithm $\mathcal{A}$, which takes in an input $x$ and gives back a random output, is $(\epsilon,\delta)$-differentially private if, for all $\mathcal{S} \subseteq \text{Range}(\mathcal{A})$ and for all datasets $x,y$ that only differ in at most one data point, 
$
    P[\mathcal{A}(x) \in \mathcal{S}] \leq e^{\epsilon} P[\mathcal{A}(y) \in \mathcal{S}] + \delta.
$    
Two common ways to make a first-order algorithm private include adding noise to data or adding noise to the gradients. In this work, since we study stochastic gradient methods for outlier-robust PCA, the recovery guarantees we prove naturally extend to the private setting.

\subsection{Contributions}

We derive the following results for ORPCA 
with large $N$:
\begin{enumerate}
    \item We present stochastic versions of the geodesic gradient descent (GGD) algorithm, which results in the Noisy GGD  (NGGD), Stochastic GGD (SGGD), and Noisy Stochastic GGD (NSGGD) methods. We give theorems guaranteeing linear convergence and subspace recovery by these three methods. Our results are the first nonconvex convergence guarantees for stochastic gradient descent in the least absolute deviations framework. 
    \item  With specifically chosen noise parameters, we demonstrate that these methods are differentially private, and we refer to the resulting algorithms as dp-GGD and dp-SGGD, respectively. We compare these private algorithms to convex methods for differentially private outlier-robust PCA based on the REAPER problem, (dp-REAP). 
    In this setting, we extend past results on differentially private convex empirical risk minimization to give subspace recovery guarantees for the dp-REAP algorithms under generic conditions.
    \item By comparing our theoretical results for the differentially private methods, we demonstrate a distinct advantage in the differentially private setting for dp-(S)GGD over dp-REAP. The nonconvex dp-(S)GGD algorithm converges at a linear rate while the convex dp-REAP methods converge at a sublinear rate, meaning that one can obtain a much more accurate approximation to the underlying subspace in less iterations. In terms of best approximations while still maintaining privacy, we achieve approximation errors that are $O(N^{-1})$ for the convex methods and errors on the order of $O(2^{-N^{\tau}})$ for the nonconvex methods, where $\tau$ is some constant in $(0, 2)$ that depends on the statistics of the dataset.
    \item Experiments on synthetic and stylized data emphasize the theoretical results of this paper. In particular, they demonstrate the advantage in terms of speed and accuracy for the nonconvex methods, and in particular demonstrate distinct advantages for the dp-SGGD method.
\end{enumerate}

\subsection{Review of Directly Related Work}

For a comprehensive review of the many 
methods used for ORPCA, we direct the reader to~\citet{lerman2018overview}. Perhaps one of the most popular frameworks for ORPCA uses least absolute deviations. Originating with the study of robust orthogonal regression in \citet{osborne_watson85,spath_watson1987}, it was considered for ORPCA in \citet{ding2006r}. More recent studies by \citet{zhang2014novel,lerman2015robust,lerman2018fast,maunu2019well} have demonstrated the considerable advantages of this program. This problem is distinct from what is called Robust PCA (RPCA), which considers sparse corruptions~\citep{chandrasekaran2011rank, candes2011robust}.

The nonconvex method we propose is based on optimization on the Grassmannian manifold~\citep{Edelman98thegeometry}. Manifold optimization has recently been of great interest for the machine learning community~\citep{zhang2016first}. 

Differential privacy has become the preeminent way of protecting sensitive data~\citep{Dwork2013privacybook}. There has been a recent surge of work examining how differential privacy affects the accuracy of various methods~\citep{bassily2014private,bassily2019private}. Some recent work has been devoted to considering differentially private methods for PCA~\citep{chaudhuri2013near, hardt2014noisy, jiang2016wishart}.

\subsection{Notation}


We let $\sigma_j(\cdot)$ denote the $j$th singular value of a matrix.
For measuring subspace approximation, we use a distance metric on the Grassmannian. A typical metric is $d(L_1, L_2) = \sqrt{\sum_{j=1}^r \theta_j^2}$, where $\theta_j$ are the principal angles between $L_1$ and $L_2$. For our later analysis of the nonconvex method, for $\bV, \bV' \in O(D,r)$, which are bases for two elements of $G(D,r)$, it is more convenient to work with the squared metric $d_r^2(\bV, \bV') = 1 - \sigma_r(\bV^\top \bV')$, which for subspaces that are close together is on the order of 1/2 times the largest principal angle squared between $\Sp(\bV)$ and $\Sp(\bV')$ (specifically, it is $1-\cos(\theta_1)$).
We denote $B_{d_r^2}(\bV, \rho)$ to be the ball of radius $\rho$ with respect to $d_r^2$. 

\section{Stochastic Algorithms to Minimize~\ref{eq:GLAD}}

In this paper, we propose to use stochastic gradient descent to directly minimize~\eqref{eq:GLAD}. This extends the existing framework for ORPCA studied by~\citet{maunu2019well}, where the authors proposed to use vanilla geodesic gradient descent (GGD).  Section~\ref{subsec:ggd} reviews the GGD method used to minimize \eqref{eq:GLAD}. Then, Section~\ref{subsec:sggd} discusses modifications of this method to include noisy and minibatch gradients, which result in stochastic GGD methods.

\subsection{Geodesic Gradient Descent}
\label{subsec:ggd}

One can directly optimize \eqref{eq:GLAD} over the Grassmannian manifold using geometric methods. Past algorithms that accomplish this with some theoretical guarantees (despite the nonconvex setting) include IRLS~\citep{lerman2018fast} and GGD~\citep{maunu2019well}. 
On top of frequently being more accurate than their convex counterparts, these methods are also
faster than convex methods, since nonconvex methods work with a $D \times r$ optimization variable rather than the typical $D \times D$ variable.

We briefly review GGD. 
Since $G(D,r)$ forms a Riemannian manifold, the Riemannian gradient of the energy function in~\eqref{eq:GLAD} is 
\begin{equation}\label{eq:grad}
    \nabla F(\bV; \cX) = \frac{1}{|\cX|} \bQ_{\bV} \sum_{\bx \in \cX} \frac{\bx \bx^\top \bV}{\|\bQ_{\bV} \bx\|},
\end{equation}
where $\bV \in O(D,r)$ is a matrix whose columns span $L$,  $\bQ_{\bV} = \bI - \bV \bV^\top$ projects the gradient to the tangent space of $G(D, r)$ and $|\cX|$ denotes the number of points in the set $\cX$. Geodesic gradient descent (GGD) then takes the form $\bV_{k+1} = \Exp_{\bV_k} (- \eta_k \nabla F(\bV_k;\cX))$ (where $\Exp$ is the exponential map). For a complete discussion of this iteration and associated concepts related to the geometry of $G(D,r)$~\citep{Edelman98thegeometry,maunu2019well}.

\subsection{Stochastic Geodesic Gradient Descent Methods}
\label{subsec:sggd}

In terms of optimization, the main innovation in this work is to consider stochastic gradient methods for~\eqref{eq:GLAD}. One specific stochastic gradient one may consider is the addition of Gaussian noise, which enhances privacy. To go beyond this setting, we also consider stochasticity due to \emph{minibatching}. While the addition of stochastic gradients is a small modification of the original GGD method,  
it is entirely nontrivial to extend convergence and recovery analysis to the stochastic setting (see Section \ref{sec:theory}).

We first describe a version of GGD which uses noisy gradients. Let $\bB_k \in \R^{D \times r}$ whose entries are i.i.d.$\sim \mathcal{N}(0, \sigma^2)$. The noisy GGD (NGGD) iteration is given by
\begin{equation}\label{eq:nggd}\tag{\sf{NGGD}}
    {\bV}_{k+1} = \cP_{O(D,r)}({\bV}_k - \eta_k (\nabla F({\bV}_k;\cX) + \bB_k)),
\end{equation}
where $\cP_{O(D, r)}$ is the projection operator that solves $\cP_{O(D, r)} (\bA) = \argmin_{\bV \in O(D, r)} \|\bV - \bA \|^2$. This is an example of the orthogonal Procrustes problem~\citep{gower2004procrustes}, and it can be solved via the SVD or polar decomposition~\citep{fan1955some}. This iteration is referred to as Noisy Geodesic Gradient Descent (NGGD).

We can also use stochastic estimates of $\nabla F({\bV}_k;\cX)$ to add further ``noise" to the gradient. We call such a method noisy stochastic geodesic gradient descent, NSGGD, which is defined by the iteration
\begin{equation}\label{eq:sggd}\tag{\textsf{NSGGD}}
    \tilde{\bV}_{k+1} = \cP_{O(D,r)}(\tilde{\bV}_k - \eta_k (\bG_k + \bB_k)).
\end{equation}
Here, $\bG_k$ is an estimate of the gradient at $\tilde{\bV}_{k}$. When using minibatch stochastic gradients, we let $\bG_k = \mathsf{grad}F(\bV_k; \cB_k)$, where $\cB_k \subset \cX$. We refer to the method with minibatch stochastic gradients and zero noise as SGGD.

These methods have many potential applications. First, the minibatch SGGD method allows for less per-iteration complexity than that of GGD, where SGGD has complexity $O(|\cB|Dd)$ per iteration and GGD has complexity of $O(NDd)$ per iteration. Furthermore, the addition of noise allows for the potential development of Langevin-like algorithms on the Grassmannian. Finally, as we discuss later, when the noise has sufficiently large variance, we can show that the resulting method is differentially private.

\section{Theory}
\label{sec:theory}

In the following sections we present our theoretical results for NGGD, SGGD, and NSGGD. In particular, we prove convergence \emph{and} subspace recovery results for these methods.

First, in Section~\ref{subsec:init}, we recall a result from~\citet{maunu2019well}, which shows that PCA gives a good initial approximation to the underlying subspace with high probability. 
After this, Section~\ref{subsec:nggd} gives an iteration complexity and approximation result for NGGD. Then, Section~\ref{subsec:nsggd} gives an iteration complexity and approximation result for SGGD as well as a convergence and recovery theorem for NSGGD. The proofs of convergence for these differentially private methods require nontrivial extensions of the past proofs of convergence for GGD seen in~\citet{maunu2019well}. After this, we finish in Section~\ref{subsec:linconv} by showing how one can extend these approximation guarantees to achieve linear convergence of the NGGD, SGGD, and NSGGD algorithms with a geometrically diminishing step size scheme. For brevity, all proofs are left to the Appendix.

The results in these sections represent the main theoretical innovation of this work. Similar to the analysis of the deterministic GGD method, the strategy is to prove, under a general condition called \emph{stability}, 1) good initialization by some means, and 2) convergence of the nonconvex stochastic gradient method.

\subsection{Initialization by PCA}
\label{subsec:init}

Our nonconvex methods require initialization in a sufficiently small neighborhood of the true subspace spanned by $\bV_\star$. To accomplish this, we initialize NGGD, SGGD, and NSGGD using a PCA subspace. Later, in the case of differentially private methods, we show that one can also initialize with differentially private PCA. The main result for initialization follows.

\begin{theorem}[\cite{maunu2019well}]
	If
	\begin{equation}
    \cS^{\mathsf{PCA}}_{\gamma}(\cX):= 2\sin(\arccos(\gamma)) \, \lambda_r(\bXin \bXin^\top) - \|\bXout\|_2^2>0,
	\end{equation} 
	then $d_r^2(L_{PCA}, L_\star) < \gamma$. 
	\label{thm:pcagen}
\end{theorem}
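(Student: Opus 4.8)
The plan is to establish the contrapositive through the variational characterization of PCA. Let $\bV_{\mathrm{PCA}} \in O(D,r)$ be an orthonormal basis of $L_{PCA}$ and collect the data into $\bX=[\bXin\ \bXout]$, so that $\bX\bX^\top = \bXin\bXin^\top + \bXout\bXout^\top$. Since \eqref{eq:PCA} is equivalent to maximizing the captured energy $\bV\mapsto\Tr(\bV^\top\bX\bX^\top\bV)$ over $O(D,r)$, optimality of $\bV_{\mathrm{PCA}}$ against the feasible competitor $\bV_\star$ yields $\Tr(\bV_{\mathrm{PCA}}^\top\bX\bX^\top\bV_{\mathrm{PCA}}) \ge \Tr(\bV_\star^\top\bX\bX^\top\bV_\star)$. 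Splitting into inlier and outlier Gram matrices and rearranging, this is exactly
\begin{equation}
\underbrace{\Tr\!\big(\bXin\bXin^\top(\bV_\star\bV_\star^\top-\bV_{\mathrm{PCA}}\bV_{\mathrm{PCA}}^\top)\big)}_{\text{inlier energy lost by }L_{PCA}} \;\le\; \underbrace{\Tr\!\big(\bXout\bXout^\top(\bV_{\mathrm{PCA}}\bV_{\mathrm{PCA}}^\top-\bV_\star\bV_\star^\top)\big)}_{\text{outlier energy gained by }L_{PCA}}.
\end{equation}
The whole proof then reduces to showing that this inequality is impossible once $\cS^{\mathsf{PCA}}_{\gamma}(\cX)>0$ while $d_r^2(L_{PCA},L_\star)\ge\gamma$, which gives the claim.

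Next I would bound both sides in terms of the principal angles $\theta_1\ge\cdots\ge\theta_r$ between $L_{PCA}$ and $L_\star$, writing $\sigma_r(\bV_{\mathrm{PCA}}^\top\bV_\star)=\cos\theta_1$ so that $d_r^2(L_{PCA},L_\star)=1-\cos\theta_1$. For the left-hand side the structural assumption $\Xin\subset L_\star$ is crucial: it lets me write $\bXin=\bV_\star\bA$, so that the lost inlier energy equals $\|(\bI-\bV_{\mathrm{PCA}}\bV_{\mathrm{PCA}}^\top)\bXin\|_F^2=\Tr\!\big(\bA\bA^\top(\bI-\bV_\star^\top\bV_{\mathrm{PCA}}\bV_{\mathrm{PCA}}^\top\bV_\star)\big)$, and since the matrix in parentheses is positive semidefinite with eigenvalues $\sin^2\theta_j$, this is at least $\lambda_r(\bXin\bXin^\top)\sum_j\sin^2\theta_j$. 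For the right-hand side I would use $\bXout\bXout^\top\preceq\|\bXout\|_2^2\,\bI$ together with the projector-difference identity $\|\bV_{\mathrm{PCA}}\bV_{\mathrm{PCA}}^\top-\bV_\star\bV_\star^\top\|_*=2\sum_j\sin\theta_j$ (the spectrum of a difference of two rank-$r$ projectors is $\{\pm\sin\theta_j\}$), bounding the gained outlier energy by $\|\bXout\|_2^2\sum_j\sin\theta_j$; this projector identity is the natural source of the factor $2$ appearing in $\cS^{\mathsf{PCA}}_{\gamma}$.

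Finally I would feed these two estimates into the displayed inequality and translate back to the metric $d_r^2=1-\cos\theta_1$. The expected main obstacle is reconciling the two different powers of the angle — the inlier loss is quadratic in $\sin\theta_j$ while the outlier gain is linear — and in particular reducing the multi-angle sums to the single worst angle $\theta_1$, so that the resulting condition is governed by $\sigma_r(\bV_{\mathrm{PCA}}^\top\bV_\star)$ alone rather than by all $r$ angles (which would otherwise introduce a spurious factor of $r$). This reduction, presumably via a rank-one swap that aligns only the most-rotated direction of $L_{PCA}$ with $L_\star$, is what should produce exactly the clean threshold $2\sin(\arccos(\gamma))\,\lambda_r(\bXin\bXin^\top)-\|\bXout\|_2^2$, with $\sin(\arccos(\cdot))$ entering through $\sin\theta_1=\sin(\arccos(\cos\theta_1))$; pinning down this trigonometric constant exactly, rather than merely up to universal constants, is the delicate part of the argument.
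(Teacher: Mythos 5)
The paper does not actually prove Theorem~\ref{thm:pcagen}: it is imported (modulo a reparametrization the authors themselves flag as loose) from \citet{maunu2019well}, so your attempt can only be measured against the standard argument in that reference. Its skeleton --- PCA's variational optimality tested against a competitor subspace, the split of $\bX\bX^\top$ into inlier and outlier Gram matrices, and principal-angle estimates --- is exactly what you set up, and your individual bounds are correct as far as they go: the lost inlier energy is indeed at least $\lambda_r(\bXin\bXin^\top)\sum_j\sin^2\theta_j$, and the gained outlier energy is at most $\|\bXout\|_2^2\sum_j\sin\theta_j$.

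The genuine gap is precisely the step you defer with ``presumably,'' and the route you set up cannot close it. First, the global inequality $\lambda_r(\bXin\bXin^\top)\sum_j\sin^2\theta_j \le \|\bXout\|_2^2\sum_j\sin\theta_j$ provably cannot produce a dimension-free bound on the worst angle: subject to $0\le\sin\theta_j\le\sin\theta_1$, the ratio $\sum_j\sin^2\theta_j/\sum_j\sin\theta_j$ can be as small as $2\sin\theta_1/(\sqrt{r}+1)$ (take $\sin\theta_j=\sin\theta_1/(\sqrt{r}+1)$ for $j\ge 2$), so the best conclusion extractable from your display carries a $\Theta(\sqrt{r})$ loss --- no reduction to $\theta_1$ with a clean constant exists from that inequality alone. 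Second, your proposed provenance of the factor $2$ is wrong: since $\bXout\bXout^\top\succeq 0$, the sharp bound on the outlier term goes through the positive part, $\Tr(CM)\le\|C\|_2\Tr(M_+)=\|\bXout\|_2^2\sum_j\sin\theta_j$, which is \emph{half} the nuclear norm; if you instead invoke $\|\bP_{PCA}-\bP_\star\|_*=2\sum_j\sin\theta_j$, the $2$ multiplies the outlier term and makes the needed condition stronger --- it lands on the wrong side of the statistic $\cS^{\mathsf{PCA}}_\gamma$. The actual fix is the device you name only in your last sentence: abandon the global comparator $\bV_\star$ and test PCA optimality against the swap competitor $\spann(\bu_1,\bv_2,\dots,\bv_r)$, where $\bv_1$ is the principal vector of $L_{PCA}$ realizing $\theta_1$ and $\bu_1\in L_\star$ its partner (the frame stays orthonormal by the principal-vector decomposition, since $\bu_1=\cos\theta_1\,\bv_1+\sin\theta_1\,\bw_1$ with $\bw_1\perp L_{PCA}$). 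This yields in one line $\lambda_r(\bXin\bXin^\top)\sin^2\theta_1\le\|\bXout\|_2^2\sin\theta_1$, a single-angle inequality with no dimension factor, and the stated threshold then follows from monotonicity of sine on $[0,\pi/2]$ together with the translation to $d_r^2=1-\cos\theta_1$ via $\sin^2\theta_1=(1-\cos\theta_1)(1+\cos\theta_1)$ --- this identity, not the projector nuclear norm, is the real source of the factor $2$ (for small angles $\sin^2\theta_1\approx 2(1-\cos\theta_1)$). Be aware, though, that the swap is not a refinement of your displayed inequality but a replacement of it: once you use it, the multi-angle sums and the nuclear-norm identity in your second and third steps play no role, so as written your proposal proves only a weaker, $\sqrt{r}$-degraded version of the theorem.
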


\subsection{Noisy GGD}
\label{subsec:nggd}

Towards a complete theory for private, nonconvex robust subspace recovery, we first prove an iteration complexity and approximation result for NGGD. 
Following the analysis in~\citet{maunu2019well}, the goal is to show that the sequence $\sigma_r(\bV_\star^{\top} \bV_k)$ forms a sequence that rapidly increases with $k$.

For the convergence of GGD in~\citet{maunu2019well}, the key idea is the development of the stability statistic, which is defined as
\begin{align*}
    \cS_\gamma(\cX) &= \gamma \lambda_r \left(\frac{1}{N} \sum_{\cX_{\mathrm{in}}} \frac{\bx \bx^\top}{\|\bx\|} \right) - \max_{\bV \in O(D,d)} \sigma_1 \left( \nabla F(\bV; \cX_{\mathrm{out}}) ) \right) \\
    &= \gamma \cP(\Xin) - \cA(\Xout).
\end{align*}
Note that our parametrization of this statistic is slightly different from that of~\citet{maunu2019well}, where we use $\gamma$ instead of $\cos(\gamma)$. Under \emph{stability}, or the assumption that $\cS_\gamma(\cX) > 0$,~\citet{maunu2019well} prove local convergence of GGD given initialization in $B(\bV_\star, \gamma)$. In the following theorem, we prove convergence of NGGD when $\cS_\gamma(\cX) > 0$ as long as $\bV_0 \in B_{d_r^2}(\bV_\star, \gamma/2)$ -- notice that the noisy method requires some extra wiggle room. 
\begin{theorem}\label{thm:nggd}
	Assume that $\cS_{\gamma}(\cX) > 0$, NGGD is initialized at $\bV_0 \in B_{d_r^2}(\bV_\star, \gamma/2)$ with a constant step size {$\eta_k = s = c_1 a / T^{\nu}$}, $0.5 < \nu < 1$, and is run for $T$ iterations, where {
	\begin{align*}
	    O(N^2\epsilon^2) > T > \mathcal{F}_1(a/d_r^2(\bV_\star, \bV_0), \lambda),
	\end{align*}
	for $\cF_1$ 
	defined in~\eqref{eq:nggdTcond1}.}
	Then NGGD yields a final iterate $\bV_T \in {B(\bV_\star, a)}$ with probability at least $1-{2} \lambda$. 
\end{theorem}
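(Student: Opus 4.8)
The plan is to track the scalar sequence $s_k := \sigma_r(\bV_\star^\top \bV_k)$, whose growth toward $1$ is equivalent to the decay of $d_r^2(\bV_\star, \bV_k) = 1 - s_k$, and to show that despite the injected noise $\bB_k$ this sequence reliably increases past $1-a$. First I would perform a one-step expansion of $s_{k+1}$ in terms of $s_k$, the Riemannian gradient $\nabla F(\bV_k;\cX)$, and $\bB_k$. Writing the Procrustes update $\cP_{O(D,r)}$ as a retraction that agrees with $\Exp_{\bV_k}$ to first order, this expansion splits into three pieces: a deterministic \emph{drift} coming from $-\eta_k \nabla F(\bV_k;\cX)$, a \emph{noise} term linear in $\bB_k$, and a second-order \emph{remainder} of size $O(\eta_k^2(\|\nabla F\|^2 + \|\bB_k\|^2))$ that absorbs both the curvature of the map and the retraction-versus-geodesic discrepancy. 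A useful simplification here is that only the tangent-space component of $\bB_k$ contributes at first order, the normal component being suppressed by the projection $\cP_{O(D,r)}$.

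For the drift I would invoke the stability hypothesis $\cS_\gamma(\cX) > 0$ exactly as in the deterministic analysis of~\citet{maunu2019well}: as long as $\bV_k \in B_{d_r^2}(\bV_\star, \gamma)$, the gradient has a component that strictly increases $s_k$, and the per-step gain is bounded below by a positive multiple of $\eta_k \cS_\gamma(\cX)$. The noise term is the heart of the stochastic argument: conditioned on $\bV_k$ (equivalently on $\bB_0,\dots,\bB_{k-1}$) it has mean zero by symmetry of the Gaussian, so the partial sums of these terms form a martingale. I would bound each increment by $O(\eta_k \sigma \sqrt{Dr})$ on a high-probability Gaussian-tail event, so that Azuma--Hoeffding controls the total accumulated noise over $T$ steps by $O(\sigma \sqrt{T}\, s\, \sqrt{\log(1/\lambda)})$.

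The two competing scales are then the accumulated drift, of order $T^{1-\nu} a\, \cS_\gamma(\cX)$, and the accumulated noise fluctuation, of order $\sigma\, a\, T^{(1-2\nu)/2}$. The choice $\eta_k = s = c_1 a / T^{\nu}$ with $\nu > 1/2$ is precisely what makes the drift dominate: the net progress grows with a positive power of $T$ while the fluctuation decays with $T$, so for $T$ above the threshold $\cF_1(a/d_r^2(\bV_\star,\bV_0),\lambda)$ the drift carries $s_k$ past $1-a$ with probability at least $1-\lambda$. The matching upper bound $T < O(N^2\e^2)$ reflects the fact that, once $\sigma$ is calibrated to hold a fixed privacy level across all $T$ iterations, the injected variance must grow with $T$; beyond this many steps the resulting noise floor would exceed the target scale $a$. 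The remainder term is lower order in $s$ and is absorbed into the constants.

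The main obstacle is that the drift lower bound only holds while $\bV_k$ remains in the stability region $B_{d_r^2}(\bV_\star, \gamma)$, yet the noise can in principle push an iterate out, which is exactly why the theorem demands the tighter start $B_{d_r^2}(\bV_\star, \gamma/2)$. I would handle this with a stopping-time argument: let $\tau$ be the first exit time from $B_{d_r^2}(\bV_\star, \gamma)$, run the martingale analysis on the stopped process, and apply a Doob-type maximal inequality to show that the accumulated noise never consumes the $\gamma/2$ buffer before time $T$ except on an event of probability at most $\lambda$. A union bound over the exit event and the final-approximation event then yields the stated $1-2\lambda$ guarantee. Making the two failure budgets, the factor-of-two wiggle room, and the second-order retraction remainder fit together with explicit constants is the delicate bookkeeping that determines the form of $\cF_1$.
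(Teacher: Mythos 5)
Your proposal is correct in outline and follows essentially the same route as the paper: track $\sigma_r(\bV_\star^\top\bV_k)$ via a one-step perturbation expansion with a $c(\cX)s^2$ remainder (absorbing the Procrustes-vs-geodesic discrepancy), extract a drift term from stability, control the noise contribution $\bbeta_1^{j\top}\bV_\star^\top\bB_j\bbeta_2^j$ as a martingale with a maximal inequality, use the $\gamma/2$-to-$\gamma$ buffer to keep stability valid along the whole trajectory, and union-bound two failure events to get $1-2\lambda$. Two tool-level differences are worth noting. First, where you propose Azuma--Hoeffding on truncated increments, the paper applies its $L^2$ (Chebyshev-type) Doob maximal inequality directly, using $\E(\bbeta_1^{j\top}\bV_\star^\top\bB_j\bbeta_2^j)^2\lesssim\sigma^2(\sqrt{D}+\sqrt{d})^2$; this is interchangeable with your plan. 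Second, and more substantively: you lower-bound the drift by a constant multiple of $\eta_k\cS_\gamma(\cX)$ and argue that accumulated drift carries $s_k$ past $1-a$, which establishes a \emph{hitting time}, whereas the theorem asserts a property of the \emph{terminal} iterate $\bV_T$; with constant step size and persistent noise, the iterate could in principle wander back out after hitting. The paper avoids this by keeping the drift proportional to the current distance, via $\sin(\arccos x)\ge\sqrt{1-x}$, yielding the contraction $d_r^2(\bV_\star,\bV_{k+1})\le(1-s\cS_\gamma(\cX)/m_T)\,d_r^2(\bV_\star,\bV_k)+s\,\bbeta_1^\top\bV_\star^\top\bB_k\bbeta_2+c(\cX)s^2$ with $m_T=\max_{j\le T}d_r(\bV_\star,\bV_j)$; unrolling this geometric recursion bounds $d_r^2(\bV_\star,\bV_T)$ directly, with the noise floor and $s^2$ floor each forced below $a/2$ by the conditions defining $\cF_1$. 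Your stopped-process machinery can close this gap (show that after hitting, the same maximal inequality keeps the iterate within the noise floor, which your scale analysis already shows is $o(a)$), but as written the "drift past $1-a$" step does not by itself deliver $\bV_T\in B_{d_r^2}(\bV_\star,a)$; the paper's role for your stopping time is played by the high-probability bound $m_T<2\,d_r^2(\bV_\star,\bV_0)$, which is exactly what consumes one of the two $\lambda$ budgets.
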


By Theorem~\ref{thm:pcagen}, PCA initialization achieves the proper initialization with high probability when the 
  condition $\cS^{\mathsf{PCA}}_\gamma(\cX)>0$ holds. Theorem~\ref{thm:nggd} states that, effectively, as long as the number of iterations is $>\mathcal{F}_1(c)$, the NGGD final iterate lies in $B_{d_r^2}(\bV_\star, c \gamma)$ with high probability.
We will show in Section~\ref{subsec:linconv} how one can turn this into a linear convergence result.


\subsection{Noisy Stochastic GGD}
\label{subsec:nsggd}

This section is mainly inspired by the analysis in~\citet{zhou2020convergence}.
We first present a novel analysis of minibatch SGGD for solving~\eqref{eq:GLAD}. 

To this end, we assume minibatches $\cX^k$, $1 \leq k \leq N_B$, of size $B$ that are drawn from $\cX$ with replacement. We can separate each minibatch into inlier and outlier components $\Xin^k$ and $\Xout^k$, respectively. 
Much in the same way that one can analyze GGD and NGGD, we analyze SGGD through the use of stability statistics. For SGGD, the main difference is now each minibatch has an associated stability statistic, $\cS_\gamma (\cX^k) = \gamma\mathcal{P}(\Xin^k) - \mathcal{A}(\Xout^k)$.
As there are $N^B$ subsets $\cX^k$, we get a range of stability statistics, some of which are positive and some of which are negative. Now, instead of assuming that $\cS_\gamma(\cX) > 0$, we assume that for a minibatch selected uniformly at random from $\cX$ with replacement,
\begin{equation}
	\E \cS_\gamma(\cX^k) = \cS_{\gamma,\E} > 0. 
\end{equation}
\begin{theorem}\label{thm:sggd}
	Assume that $\cS_{\gamma,\E} > 0$, SGGD is initialized at $\bV_0 \in B_{d_r^2}(\bV_\star, \gamma/2)$ with a constant step size {$\eta_k = s = c_1 a / T^{\nu}$}, $0.5 < \nu < 1$, and is run for $T$ iterations, where {
	$$O(N^2\epsilon^2) > T > \cF_2(a/d_r^2(\bV_\star, \bV_0), \lambda),$$ for  $\cF_2$ defined in \eqref{eq:sggdTcond1}.}
	Then SGGD yields a final iterate $\bV_T \in { B(\bV_\star, a)}$ with probability at least $1-{2} \lambda$.
\end{theorem}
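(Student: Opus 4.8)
The plan is to adapt the deterministic GGD argument of \citet{maunu2019well} to the stochastic setting by tracking the scalar potential $d_r^2(\bV_\star,\bV_k) = 1 - \sigma_r(\bV_\star^\top\bV_k)$ and showing it contracts in conditional expectation at each step. The starting point is that the minibatch gradient is unbiased: since $\cX^k$ is drawn from $\cX$ with replacement and $\bG_k = \nabla F(\bV_k;\cX^k)$ averages per-point contributions all evaluated at the current iterate, $\E[\bG_k \mid \bV_k] = \nabla F(\bV_k;\cX)$. I would therefore write $\bG_k = \nabla F(\bV_k;\cX) + \boldsymbol{\xi}_k$ with $\E[\boldsymbol{\xi}_k \mid \bV_k] = 0$. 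Crucially, because $\cX \subset S^{D-1}$, each per-point tangent contribution $\frac{\bQ_{\bV_k}\bx}{\|\bQ_{\bV_k}\bx\|}(\bV_k^\top\bx)^\top$ has operator norm at most $1$ (this is exactly why $F$ is $1$-Lipschitz), so the minibatch noise $\boldsymbol{\xi}_k$ is almost surely bounded and has bounded variance, uniformly in $\bV_k$.

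First I would establish a one-step drift inequality. Using a first-order expansion of the retraction $\cP_{O(D,r)}$ (equivalently of $\Exp_{\bV_k}$) together with the geometric stability lemma underlying the GGD analysis, I would show that, as long as $\bV_k \in B_{d_r^2}(\bV_\star,\gamma)$, the full gradient $\nabla F(\bV_k;\cX)$ decreases $d_r^2$ at a rate proportional to the stability statistic. Passing to the minibatch gradient and taking the conditional expectation, the linear noise term $\boldsymbol{\xi}_k$ drops out and the second-order term contributes an $O(\eta_k^2)$ remainder, yielding a recursion of the form
\begin{equation*}
    \E\!\left[d_r^2(\bV_\star,\bV_{k+1}) \mid \bV_k\right] \le \left(1 - c\,\eta_k\,\cS_{\gamma,\E}\right) d_r^2(\bV_\star,\bV_k) + C\,\eta_k^2
\end{equation*}
for constants $c,C>0$. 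Here the hypothesis $\cS_{\gamma,\E} = \E\,\cS_\gamma(\cX^k) > 0$ is precisely what makes the contraction factor strictly less than one: it guarantees that the \emph{average} minibatch pull of the inliers dominates the outlier perturbation even though individual minibatches may be unstable.

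Next I would upgrade this in-expectation contraction to a high-probability guarantee. With the constant step size $\eta_k = s = c_1 a/T^\nu$, iterating the recursion bounds the expected potential by a deterministic transient $(1-cs\,\cS_{\gamma,\E})^T d_r^2(\bV_\star,\bV_0)$ plus a variance floor proportional to the step size; choosing $c_1$ small and $T$ past threshold keeps this floor below $a$, while the requirement that the transient itself fall below $a$ within $T$ steps is exactly what produces the lower bound $T > \cF_2(a/d_r^2(\bV_\star,\bV_0),\lambda)$ and forces $\nu<1$ (so that $T^{1-\nu}$, which governs the number of effective contraction steps, can grow). The deviations $d_r^2(\bV_\star,\bV_{k+1}) - \E[d_r^2(\bV_\star,\bV_{k+1})\mid\bV_k]$ form a bounded martingale difference sequence, so a concentration inequality (Azuma--Hoeffding, or Freedman to exploit the variance bound) controls the accumulated fluctuation by roughly $s\sqrt{T} = c_1 a\,T^{1/2-\nu}$; requiring $\nu>1/2$ makes this fluctuation vanish and yields the stated probability $1-2\lambda$, with the two failure events corresponding to the two tails of the concentration bound. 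The upper constraint $T = O(N^2\epsilon^2)$ is not needed for convergence itself but demarcates the regime in which this iterate count remains compatible with the later privacy budget.

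The main obstacle will be the \emph{confinement} argument, and this is where the tightened initialization $B_{d_r^2}(\bV_\star,\gamma/2)$ rather than $B_{d_r^2}(\bV_\star,\gamma)$ earns its keep: the drift inequality above is only valid while $\bV_k$ stays inside the stability ball $B_{d_r^2}(\bV_\star,\gamma)$, yet a single adverse minibatch could in principle eject an iterate from this region, after which $\cS_{\gamma,\E}>0$ no longer guarantees progress. I would handle this with a stopping-time argument: define $\tau$ to be the first exit time from $B_{d_r^2}(\bV_\star,\gamma)$, run the supermartingale/concentration analysis on the stopped process $\bV_{k\wedge\tau}$, and use a maximal inequality to show that, under the scaling $\nu>1/2$, the probability that the trajectory ever leaves $B_{d_r^2}(\bV_\star,\gamma)$ before iteration $T$ is at most $\lambda$. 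The extra $\gamma/2$ of slack is precisely the buffer that absorbs the worst-case stochastic excursion with high probability, so that on the confinement event the drift recursion is valid for every step and delivers $\bV_T \in B(\bV_\star,a)$.
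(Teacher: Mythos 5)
Your proposal is correct in outline and its skeleton matches the paper's proof: a one-step contraction of $d_r^2(\bV_\star,\bV_k)$ driven by stability, a martingale maximal inequality controlling accumulated fluctuations at scale $s\sqrt{T}$ (whence $\nu>1/2$), a separate confinement stage keeping the iterates where the contraction is valid (whence the $\gamma/2$ initialization), and a union over two $\lambda$-events to reach $1-2\lambda$ --- though note the paper's two events are one for confinement and one for the final iterate, not two tails of a single bound. The one genuinely different choice is where you place the martingale, and it hides a small gap. You decompose $\bG_k=\nabla F(\bV_k;\cX)+\boldsymbol{\xi}_k$ and take conditional expectations so the noise vanishes; but the surviving linear term is then $\bbeta_1^\top\bV_\star^\top\nabla F(\bV_k;\cX)\bbeta_2$, and the stability bound you can invoke on it yields a drift proportional to $\cS_\gamma(\cX)$, not to $\cS_{\gamma,\E}$ as you assert --- the theorem's hypothesis concerns only $\cS_{\gamma,\E}$. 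The paper instead applies the stability bound \emph{per minibatch}, obtaining the random contraction $-s\,\cS_\gamma(\cX^k)\,d_r(\bV_\star,\bV_k)$, and only then centers: the martingale lives in the scalar deviations $\cS_\gamma(\cX^j)-\cS_{\gamma,\E}$, whose second moment is bounded by $1$ after spherization, and is controlled twice by Doob's $L^2$ maximal inequality --- once with trivial weights to bound the running maximum $m_T<2\,d_r^2(\bV_\star,\bV_0)$ (this is the paper's confinement step, playing exactly the role of your stopped process), and once with geometric weights $(1-s\cS_{\gamma,\E}/m_0)^{T-j}$ whose squared sum is $O\bigl(m_0/(s\,\cS_{\gamma,\E})\bigr)$, independent of $T$. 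Your route can be repaired either by adopting this per-minibatch application of stability before taking expectations, or by supplying a Jensen-type comparison $\cS_{\gamma,\E}\le\cS_\gamma(\cX)$ (plausible, since the minibatch sums are unbiased, $\lambda_r$ of the inlier average behaves concavely, and the outlier alignment is a maximum of a convex function, but the paper never proves this and you would have to). Your stopping-time treatment of confinement is a clean substitute for the paper's explicit $m_T$ bound, and Azuma--Hoeffding or Freedman in place of Doob's $L^2$ inequality is cosmetic; what the paper's scalar decomposition buys is that $\cS_{\gamma,\E}>0$ enters the drift directly, with no auxiliary comparison needed.
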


To additionally prove convergence of NSGGD, we must also control the noise throughout the iterations. This result essentially combines Theorems~\ref{thm:nggd} and~\ref{thm:sggd}.
As before, this theorem states that in a  number of iterations $>\mathcal{F}_3(c)$, the NSGGD final iterate lies in $B_{d_r^2}(\bV_\star, c \gamma)$ with high probability.
\begin{theorem}\label{thm:nsggd}
	Assume that $\cS_{\gamma,\E} > 0$, NSGGD is initialized at $\bV_0 \in B_{d_r^2}(\bV_\star, \gamma/2)$ with a constant step size {$\eta_k = s = c_1 a / T^{\nu}$}, $0.5 < \nu < 1$, and is run for $T$ iterations, where {
	$$O(N^2\epsilon^2) > T > \mathcal{F}_3(a/d_r^2(\bV_\star, \bV_0), \lambda),$$ for  $\cF_3$ defined in the Appendix.}
	Then NSGGD yields a final iterate $\bV_T \in {B_{d_r^2}(\bV_\star, a)}$ with probability at least $1-{4} \lambda$.
\end{theorem}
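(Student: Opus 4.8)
The plan is to fuse the one-step analyses underlying Theorems~\ref{thm:nggd} and~\ref{thm:sggd} into a single supermartingale argument for the progress variable $\sigma_r(\bV_\star^\top \tilde{\bV}_k)$, or equivalently for $d_k := d_r^2(\bV_\star, \tilde{\bV}_k) = 1 - \sigma_r(\bV_\star^\top \tilde{\bV}_k)$. First I would establish a one-step inequality: conditioning on $\tilde{\bV}_k$ and assuming $\tilde{\bV}_k \in B_{d_r^2}(\bV_\star, \gamma)$, I would decompose the update direction as $\bG_k + \bB_k = \nabla F(\tilde{\bV}_k;\cX) + \bigl(\nabla F(\tilde{\bV}_k;\cX^k) - \nabla F(\tilde{\bV}_k;\cX)\bigr) + \bB_k$, i.e. the full-batch gradient, the mean-zero minibatch fluctuation, and the mean-zero Gaussian term. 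The deterministic part contributes a contraction governed by the stability statistic, which in expectation over the minibatch is exactly $\cS_{\gamma,\E} > 0$, while the two mean-zero parts contribute only second-order variance terms once one projects through $\cP_{O(D,r)}$ and expands $d_r^2$ to leading order in the step size. The target is a bound of the form $\E[d_{k+1}\mid \tilde{\bV}_k] \le d_k - c\, s\, \cS_{\gamma,\E} + C s^2 (\sigma^2 D r + v_B)$, where $v_B$ denotes the minibatch gradient variance and $\sigma^2 D r$ is the expected squared norm of $\bB_k$.

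Next I would sum over $k = 0,\dots,T-1$. With $s = c_1 a / T^{\nu}$, the accumulated drift scales like $T s\,\cS_{\gamma,\E} \sim c_1 a\,\cS_{\gamma,\E}\, T^{1-\nu}$, which grows, whereas the accumulated variance scales like $T s^2 \sim c_1^2 a^2 T^{1-2\nu}$, which decays; this is precisely why the hypothesis $0.5 < \nu < 1$ is imposed, and the threshold $\cF_3$ is the number of iterations at which the drift provably dominates the variance and forces $d_T < a$. To upgrade this in-expectation estimate to the high-probability conclusion, I would bound the accumulated noise martingale $\sum_k s\,\langle \text{noise}_k, \cdot\rangle$ by concentration. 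Because $\cX \subset S^{D-1}$ makes $F$ and every minibatch gradient bounded, the minibatch increments satisfy an Azuma/Freedman-type bound, while the Gaussian increments are sub-Gaussian with variance proxy of order $s^2\sigma^2$. Splitting the deviation budget between these two independent sources and taking a union bound produces the failure probability $2\lambda + 2\lambda = 4\lambda$, matching the combination of the $2\lambda$ from Theorem~\ref{thm:nggd} and the $2\lambda$ from Theorem~\ref{thm:sggd}.

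The main obstacle, and the reason the initialization is tightened to $\gamma/2$ rather than $\gamma$, is keeping every iterate inside $B_{d_r^2}(\bV_\star,\gamma)$ for all $T$ steps, since the contraction inequality is valid only there. I would handle this with a stopping-time argument: let $\tau$ be the first index at which $\tilde{\bV}_k$ exits $B_{d_r^2}(\bV_\star,\gamma)$, run the supermartingale analysis on the stopped process $d_{k\wedge\tau}$, and use the same concentration bound to show that the accumulated noise is, with probability at least $1-4\lambda$, too small to ever push the distance from $\gamma/2$ out past $\gamma$, so that $\tau > T$ off the failure event. Finally, the cap $T < O(N^2\epsilon^2)$ reflects the differential-privacy calibration: because composing $T$ noisy steps at the target level $\epsilon$ ties the Gaussian scale $\sigma$ to the number of iterations, this upper bound is exactly what keeps the aggregate Gaussian variance $\sim T s^2 \sigma^2 D r$ from overwhelming the signal and is compatible with the recovery threshold $\cF_3$ once $N$ is large. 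The chief technical difficulty is tracking the two coupled noise sources simultaneously through the nonlinear projection $\cP_{O(D,r)}$ while maintaining both the supermartingale drift and the trapping-region invariant.
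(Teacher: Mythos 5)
Your proposal follows essentially the same route as the paper. The paper's own proof of this theorem is literally a fusion of the proofs of Theorems~\ref{thm:nggd} and~\ref{thm:sggd}: decompose the update into the full-batch drift plus two mean-zero sources (the Gaussian term $\bB_k$ and the minibatch stability fluctuation $\cS_{\gamma}(\cX^k)-\cS_{\gamma,\E}$, which is exactly your $\nabla F(\tilde\bV_k;\cX^k)-\nabla F(\tilde\bV_k;\cX)$ term), and control the two resulting martingales simultaneously in two stages --- first to trap $m_T=\max_{k\le T} d_r(\bV_\star,\bV_k)$ below $2d_r^2(\bV_\star,\bV_0)$ (your stopping-time argument plays precisely this role, and is why initialization is tightened to $\gamma/2$), then to bound the final iterate --- with the four maximal-inequality applications yielding the $1-4\lambda$ probability, exactly as you budget it. Your substitution of Azuma/Freedman and sub-Gaussian tails for the paper's Doob $L^2$ maximal inequality is a harmless (indeed slightly stronger) variant, and your reading of the cap $T=O(N^2\epsilon^2)$ as the privacy calibration of $\sigma$ matches the paper.

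One step as written would fall short: your one-step bound with \emph{constant} additive drift, $\E[d_{k+1}\mid\tilde\bV_k]\le d_k - c\,s\,\cS_{\gamma,\E} + Cs^2(\sigma^2 Dr + v_B)$, supports only a hitting-time conclusion --- that the process \emph{visits} $B_{d_r^2}(\bV_\star,a)$ --- whereas the theorem asserts that the \emph{final} iterate $\bV_T$ lies there, and with a constant drift the process could exit again before time $T$. The paper avoids this by keeping the drift proportional to the current distance: stability together with $\sin(\arccos x)\ge\sqrt{1-x}$ gives the multiplicative recursion $d_{k+1}\le\bigl(1-s\,\cS_{\gamma,\E}/m_T\bigr)d_k + (\text{martingale increment}) + c(\cX)s^2$, which unrolls to a geometric factor $(1-s\cS_{\gamma,\E}/m_0)^T d_0$ plus $T$-independent residual terms, bounding $d_T$ itself; this is also the source of the logarithmic iteration condition~\eqref{eq:sggdTcond3} (and its NGGD analogue~\eqref{eq:nggdTcond3}) folded into $\cF_3$. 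Since the stability bound actually delivers the distance-proportional drift, your framework absorbs this fix with no other structural change.
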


\subsection{Linear Convergence Analysis}
\label{subsec:linconv}

As we commented in the previous sections, in a constant number of iterations, $\cF_1(c)$ for NGGD, $\cF_2(c)$ for SGGD, and $\cF_3(c)$ for NSGGD, the stochastic GGD algorithms converge to $B_{d_r^2}(\bV_\star, c \gamma)$. Setting $c = 1/2$, these methods have yielded final estimates twice as close to $\bV_\star$ in a constant number of iterates.

Using this fact, the following theorem guarantees linear convergence of the stochastic GGD algorithms. To accomplish this, we use a geometrically diminishing step size. That is, we run the algorithm with a constant step size $s$ for a sufficient number of iterations. Then, the algorithm is \emph{restarted} with a constant step size $cs$ for some fraction $c \in (0,1)$. This restarting procedure is then repeated $R$ times. This is similar to the strategy used in~\citet{maunu2019well} to prove linear convergence of GGD.
\begin{theorem}\label{thm:linconv}
    Suppose that one of the stochastic GGD algorithms is run for $R$ restarts and $\cS > 0$, where $\cS = \cS_\gamma(\cX)$ for NGGD and $\cS = \cS_{\gamma, \E}(\cX)$ for SGGD and NSGGD.
    Suppose that in the first run of the algorithm (out of all the restarts), the step size is {$s = c_1 a T_1^{-\nu}$}, $0.5 < \nu < 1$ and the number of iterations is $T_1 = O(\cF_j(a/d_r^2(\bV_\star, \bV_0))) $, where $j=1$ for NGGD, $j=2$ for SGGD, and $j=3$ for $NSGGD$. Suppose further that the step size for the $l$th restart is $s/2^{l-1}$ for $T_l = \cF_j(1/2)$ iterations, and $\sum_{l=1}^R T_l = O(N^2 \epsilon^2)$. Then, with probability at least {$1-2R\lambda$ (or $1-4R\lambda$ for NSGGD)}, the output of the $R$th restart, $\hat{\bV}$, satisfies $\hat{\bV} \in B_{d_r^2}(\bV_\star, {a/2^R})$.
\end{theorem}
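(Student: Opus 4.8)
The plan is to prove Theorem~\ref{thm:linconv} by induction on the restart index, using the single-phase approximation results---Theorem~\ref{thm:nggd} for NGGD, Theorem~\ref{thm:sggd} for SGGD, and Theorem~\ref{thm:nsggd} for NSGGD---as the per-phase contraction engine, and then collecting the failure probabilities with a union bound. The structural fact I would exploit is the one highlighted in the discussion preceding the theorem: each $\cF_j$ depends on its first argument only through the ratio of the target radius to the initial distance to $\bV_\star$. Fixing this ratio to $1/2$ yields a scale-independent iteration count $\cF_j(1/2) = O(1)$ that shrinks $d_r^2(\bV_\star, \cdot)$ by a factor of two, and the geometrically diminishing step-size schedule is designed exactly so that each restart operates in this ``$c = 1/2$'' regime.

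For the base case, the first run starts at $\bV_0$ with step size $s = c_1 a\, T_1^{-\nu}$ and $T_1 = O(\cF_j(a/d_r^2(\bV_\star,\bV_0)))$ iterations; Theorem~\ref{thm:nggd} (resp.\ Theorem~\ref{thm:sggd}, Theorem~\ref{thm:nsggd}) places the output in $B_{d_r^2}(\bV_\star, a)$ with probability at least $1-2\lambda$ (resp.\ $1-4\lambda$ for NSGGD), conditioned on $\bV_0 \in B_{d_r^2}(\bV_\star, \gamma/2)$. For the inductive step I would assume that after $l$ restarts the iterate lies in $B_{d_r^2}(\bV_\star, a/2^{l})$ and re-apply the same single-phase theorem to restart $l+1$. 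Two checks are needed. First, the initialization hypothesis is preserved: since the radii are nonincreasing and $a \le \gamma/2$, every phase begins inside $B_{d_r^2}(\bV_\star, \gamma/2)$, so the single-phase theorems apply verbatim. Second, the step size must match the required scaling: the step $s/2^{l}$ of restart $l+1$ equals, up to the absolute constant $c_1$, $(\text{current target radius})\cdot T_{l+1}^{-\nu}$ with $T_{l+1} = \cF_j(1/2)$, so it is precisely the scaled step that realizes a ratio-$1/2$ contraction at scale $a/2^{l}$. The single-phase theorem then outputs a point of $B_{d_r^2}(\bV_\star, a/2^{l+1})$, closing the induction and yielding $\hat{\bV} \in B_{d_r^2}(\bV_\star, a/2^R)$ after the full schedule.

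For the probability, I would condition on the success of the earlier phases (which guarantees the correct initialization for the current phase) and then invoke the worst-case-over-initialization guarantee of the single-phase theorems, which holds uniformly for every starting point in $B_{d_r^2}(\bV_\star, \gamma/2)$. Bounding the probability that any phase is the \emph{first} to fail, a union bound over the $R$ per-phase failure events of probability at most $2\lambda$ (resp.\ $4\lambda$) yields total success probability at least $1 - 2R\lambda$ (resp.\ $1 - 4R\lambda$). It remains to check feasibility against the noise budget: each restart consumes $\cF_j(1/2) = O(1)$ iterations, so $\sum_{l=1}^R T_l = T_1 + (R-1)\,\cF_j(1/2)$, and the hypothesis $\sum_l T_l = O(N^2\epsilon^2)$ ensures that every phase individually respects the upper bound $T < O(N^2\epsilon^2)$ appearing in Theorems~\ref{thm:nggd}--\ref{thm:nsggd}, under which their conclusions hold. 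Rearranging shows $R$ may be taken as large as $\Theta(N^2\epsilon^2)$, which is the source of the $O(2^{-N^\tau})$ accuracy claimed in the introduction.

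The main obstacle I anticipate is not the induction itself but the bookkeeping that makes the single-phase theorems re-applicable at every scale: confirming that the scaled step $s/2^{l}$ together with the fixed count $\cF_j(1/2)$ genuinely reproduces the ratio-$1/2$ hypothesis of Theorems~\ref{thm:nggd}--\ref{thm:nsggd} at each radius $a/2^{l}$, not merely at the initial scale. This amounts to verifying that the constants $c_1$ and $\nu$ and the dependence of $\cF_j$ on its arguments are genuinely scale-invariant once the ratio is fixed---equivalently, that both the drift term and the (noise and minibatch) variance terms controlling the per-phase bound rescale consistently under $a \mapsto a/2$ and $s \mapsto s/2$. A secondary subtlety is the probabilistic conditioning across restarts: because each phase's guarantee is conditional on its initialization lying in the basin, I would phrase the argument through nested conditioning (or the ``first phase to fail'' decomposition above) so that the per-phase bounds compose additively without assuming independence of the noise across phases.
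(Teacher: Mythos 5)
Your proposal is correct and follows essentially the same route as the paper's proof: a single application of the per-phase theorem (Theorem~\ref{thm:nggd}, \ref{thm:sggd}, or \ref{thm:nsggd}) at the initial scale, followed by induction over restarts using the fact that $\cF_j$ depends only on the ratio of target radius to starting distance, so that halving the step size is equivalent to resetting to the case $a' = a/2$ with ratio $1/2$ and each restart of $\cF_j(1/2)$ iterations halves the radius, with a union bound over the $R$ per-phase failure events giving $1-2R\lambda$ (or $1-4R\lambda$ for NSGGD). Your extra bookkeeping---verifying basin preservation across restarts, the step-size rescaling, and the first-phase-to-fail conditioning---fills in details the paper's terse sketch leaves implicit, but the argument is structurally identical.
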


We see that this theorem guarantees an approximation that decreases at an exponential rate over the number of restarts.

\section{Application: Differential Privacy}

In both the NGGD and NSGGD methods, if the noise variance is sufficiently large, then the methods become {differentially private}. We guarantee the privacy of these methods in the following theorem.
\begin{theorem}[Differential Privacy of NGGD and NSGGD]\label{thm:priv}
 There exists a constant $c$ such that for any $\varepsilon < c  T$, if {$\sigma^2 \geq \frac{c T \log^2(1/\delta)}{\varepsilon^2 N^2}$}, then NGGD run for $T$ iterations is $(\epsilon, \delta)$ differentially private.
On the other hand, if the batch size is $B$, there exist constants $c_1$ and $c_2$ such that for any $\varepsilon < c_1 q^2 T$, if $\sigma^2 \geq c_2\frac{(B/N)^2 T \log(1/\delta)}{\varepsilon^2 N^2},$ then NSGGD run for $T$ iterations is $(\epsilon, \delta)$ differentially private.
\end{theorem}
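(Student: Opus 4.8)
The plan is to treat NGGD and NSGGD as instances of differentially private (stochastic) gradient descent and to certify privacy through the standard pipeline: bound the $\ell_2$-sensitivity of the per-iteration gradient, make a single noisy step private with the Gaussian mechanism, and then compose over the $T$ iterations (with privacy amplification by subsampling in the minibatch case). A preliminary observation that simplifies the accounting is that the orthogonal-Procrustes projection $\cP_{O(D,r)}$ is a data-independent map of the noisy gradient and the previous iterate, so by the post-processing invariance of differential privacy it is free; we therefore only need to account for the release of the noisy gradients $\nabla F(\bV_k;\cX)+\bB_k$ (resp.\ $\bG_k+\bB_k$). Since $\bV_{k}$ depends on $\cX$ only through the earlier noisy gradients, the correct framework is \emph{adaptive} composition, in which the query at step $k$ is the map $\bV\mapsto\nabla F(\bV;\,\cdot\,)$ evaluated at the previously released (hence admissible) point $\bV_k$.

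First I would establish the sensitivity bound, which is where the spherical normalization $\cX\subset S^{D-1}$ does the real work and, pleasantly, removes any need for an explicit gradient-clipping step. Fix $\bV\in O(D,r)$ and two datasets differing in a single point. From \eqref{eq:grad} and $\bQ_{\bV}=\bI-\bV\bV^\top$, each per-point summand is rank one and satisfies
$$\Bigl\| \bQ_{\bV}\frac{\bx\bx^\top\bV}{\|\bQ_{\bV}\bx\|} \Bigr\|_F = \frac{\|\bQ_{\bV}\bx\|\,\|\bV^\top\bx\|}{\|\bQ_{\bV}\bx\|} = \|\bV^\top\bx\| \le \|\bx\| = 1,$$
(with the term set to $0$ when $\bQ_{\bV}\bx=0$), so replacing one point perturbs the averaged gradient by at most $2/N$ in Frobenius norm and the $\ell_2$-sensitivity is $\Delta\le 2/N$. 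The identical computation on a minibatch gives per-batch sensitivity $2/B$, and crucially only the randomly drawn points of $\cX^k$ influence $\bG_k$, which is exactly the structure that subsampling amplification exploits.

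For NGGD I would then apply the Gaussian mechanism at each step: entrywise noise $\bB_k\sim\mathcal N(0,\sigma^2)$ makes step $k$ roughly $(\epsilon_0,\delta_0)$-DP once $\sigma\gtrsim \Delta\sqrt{\log(1/\delta_0)}/\epsilon_0$, i.e.\ $\sigma^2\gtrsim \log(1/\delta_0)/(N^2\epsilon_0^2)$. Composing the $T$ full-batch steps by the strong (advanced) composition theorem gives overall $(\epsilon,\delta)$-DP with $\epsilon\approx\sqrt{T\log(1/\delta')}\,\epsilon_0$ and $\delta=T\delta_0+\delta'$; inverting $\epsilon_0\approx\epsilon/\sqrt{T\log(1/\delta)}$ and substituting yields the stated threshold $\sigma^2\ge cT\log^2(1/\delta)/(\epsilon^2N^2)$, one logarithm coming from the per-step Gaussian mechanism and one from the composition slack, while the hypothesis $\epsilon<cT$ is precisely the window in which the per-step budget $\epsilon_0$ stays in the admissible regime. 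For NSGGD I would instead route the argument through privacy amplification by subsampling together with the moments-accountant / R\'enyi-DP composition of Abadi et al.\ (the tool underlying the rates in \citet{bassily2014private}): sampling a batch at rate $q=B/N$ contracts the per-step privacy loss, and the accountant composes the $T$ subsampled-Gaussian steps without the extra composition logarithm. This is exactly the mechanism that produces the validity window $\epsilon<c_1q^2T$ and the single-logarithm noise level in the claimed $\sigma^2\ge c_2(B/N)^2T\log(1/\delta)/(\epsilon^2N^2)$.

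The hard part is not the sensitivity estimate — that is clean thanks to the sphere constraint — but the composed accounting and its constants. Making the adaptive composition fully rigorous requires verifying that each released $\bV_k$ may legitimately be used as the query point for step $k+1$, and for NSGGD the genuinely delicate bookkeeping is the conversion of Abadi's noise-\emph{multiplier} condition (stated relative to the per-example/per-batch sensitivity of the unaveraged gradient sum) into a condition on the per-entry standard deviation $\sigma$ of the \emph{averaged} gradient $\bG_k$, which is what introduces the $B$- and $N$-dependent prefactors and must be tracked carefully to land on the exact scaling. I would lean on the existing moments-accountant analysis for the composition itself and confine the new work to checking that our Grassmannian gradient satisfies its hypotheses: bounded per-example contribution (here, norm $\le 1$), additive isotropic Gaussian noise, and independent subsampling of the minibatches.
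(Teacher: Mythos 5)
Your proposal follows essentially the same route as the paper, which proves the NGGD claim by invoking strong (advanced) composition as in \citet[Theorem 6.1]{wang2017differentially} and the NSGGD claim via the moments accountant of \citet{Abadi2016dl_with_privacy}. Your account is correct and simply makes explicit the ingredients the paper delegates to those references --- the $\ell_2$-sensitivity bound $\|\bV^\top \bx\| \le 1$ from the sphere normalization, post-processing invariance of the Procrustes projection, and the subsampling amplification that yields the single-logarithm noise level and the $\epsilon < c_1 q^2 T$ window.
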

The proof of differential privacy for such stochastic first-order methods is standard and follows~\citet{bassily2014private,talwar2014private}. With the noise variances as specified in Theorem~\ref{thm:priv}, we refer to the NGGD algorithm as dp-GGD and to NSGGD as dp-SGGD. When writing statements that apply to either dp-GGD or dp-SGGD, we will refer to dp-(S)GGD.

In the following sections, we examine the implications of our recovery results in the differentially private setting. First, Section~\ref{subsec:dpinit} discusses how to initialize NGGD and NSGGD in a private way. Then,  Section~\ref{subsec:dpggd} explains how the results for NGGD and NSGGD translate to the differentially private setting.
Lastly, in Section~\ref{subsec:reap} we present convex differentially private methods based on REAPER~\citep{lerman2015robust}, which gives an important baseline for subspace recovery based on differentially private convex empirical risk minimization. In particular, we extend convergence results for convex empirical risk minimization to the case of the REAPER algorithm and show that these have implications for subspace recovery.

\subsection{PCA Initialization}
\label{subsec:dpinit}

Throughout the paper, we refer to dp-PCA as the output of the differentially private PCA method of~\citet{jiang2016wishart}.
Combining the previous result in Theorem~\ref{thm:pcagen} with a result of~\citet{jiang2016wishart}, we obtain the following theorem.
\begin{theorem}\label{thm:pca}
    If $N$ is sufficiently large and $\cS^{\mathsf{PCA}}_{\gamma/2}(\cX)>0$, we have that the output of dp-PCA, $\bV_{dp-PCA}$, lies in $B_{d_r^2}(\bV_\star, \gamma/2)$ with high probability.
\end{theorem}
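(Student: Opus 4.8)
The plan is to reduce the analysis of dp-PCA to the deterministic guarantee of Theorem~\ref{thm:pcagen} by treating the privacy-preserving noise as an additional spectral perturbation that the margin built into $\cS^{\mathsf{PCA}}_{\gamma/2}$ is designed to absorb. Recall that exact PCA on $\cX$ returns the top-$r$ eigenspace of the second-moment matrix $\bX\bX^\top = \bXin\bXin^\top + \bXout\bXout^\top$, and that the Wishart mechanism of \citet{jiang2016wishart} releases a private estimate by adding a PSD noise matrix $\boldsymbol{E}$ (a suitably scaled Wishart matrix) and returning the top-$r$ eigenspace of $\bX\bX^\top + \boldsymbol{E}$; this is exactly the matrix that $\bV_{dp-PCA}$ diagonalizes. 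First I would record this identification, so that the whole problem becomes one of bounding how much the top-$r$ eigenspace moves under the additive perturbation $\boldsymbol{E}$.

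The key step is to fold $\boldsymbol{E}$ into the outlier term. Writing $\bX\bX^\top + \boldsymbol{E} = \bXin\bXin^\top + (\bXout\bXout^\top + \boldsymbol{E})$, the inlier signal $\lambda_r(\bXin\bXin^\top)$ is untouched, while the perturbed ``outlier'' operator has spectral norm at most $\|\bXout\|_2^2 + \|\boldsymbol{E}\|_2$ by the triangle inequality (using $\|\bXout\bXout^\top\|_2 = \|\bXout\|_2^2$). Since the proof of Theorem~\ref{thm:pcagen} is a Davis--Kahan-type eigenspace perturbation bound that depends on the outlier block only through its spectral norm, it applies verbatim to any PSD perturbation of the inlier second-moment matrix. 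Applying it with target radius $\gamma/2$ therefore yields $d_r^2(L_{dp-PCA}, L_\star) < \gamma/2$ provided
\begin{equation*}
2\sin(\arccos(\gamma/2))\,\lambda_r(\bXin\bXin^\top) - \|\bXout\|_2^2 - \|\boldsymbol{E}\|_2 > 0,
\quad\text{i.e.}\quad
\cS^{\mathsf{PCA}}_{\gamma/2}(\cX) > \|\boldsymbol{E}\|_2 .
\end{equation*}
This is the crux of the argument: the hypothesis $\cS^{\mathsf{PCA}}_{\gamma/2}(\cX) > 0$ supplies exactly the slack that the injected noise must not exceed.

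It remains to show that, with high probability, $\|\boldsymbol{E}\|_2 < \cS^{\mathsf{PCA}}_{\gamma/2}(\cX)$, and this is where the ``sufficiently large $N$'' hypothesis enters. The Wishart mechanism calibrates $\boldsymbol{E}$ to the sensitivity of the second-moment matrix and to the privacy budget $(\epsilon,\delta)$, so a standard spectral concentration bound for Wishart matrices (as in \citet{jiang2016wishart}) gives $\|\boldsymbol{E}\|_2 = O(D/\epsilon)$ with probability $1-o(1)$, a quantity that does not grow with $N$. By contrast, under the usual inlier--outlier scaling both $\lambda_r(\bXin\bXin^\top)$ and $\|\bXout\|_2^2$ grow linearly in the number of inliers and outliers, so $\cS^{\mathsf{PCA}}_{\gamma/2}(\cX) = \Theta(N)$ once the margin is positive. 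Hence for $N$ large enough the fixed (in $N$) noise level is dominated by the signal gap with high probability, the displayed inequality holds, and combining this event with the deterministic conclusion of the previous paragraph gives $\bV_{dp-PCA} \in B_{d_r^2}(\bV_\star, \gamma/2)$ with high probability. I expect the main obstacle to be precisely this concentration-and-scaling step: one must pin down the exact dimensional and privacy dependence of $\|\boldsymbol{E}\|_2$ coming from the Wishart mechanism and confirm that it is dominated by $\cS^{\mathsf{PCA}}_{\gamma/2}(\cX)$ under the generative model, since the bare hypothesis $\cS^{\mathsf{PCA}}_{\gamma/2}(\cX)>0$ only suffices once the linear-in-$N$ growth of the statistic is invoked.
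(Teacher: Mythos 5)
Your argument is correct at the level of rigor the paper itself operates at, but it follows a genuinely different route from the paper's proof. The paper never folds the privacy noise into the outlier block. Instead it splits the error by the triangle inequality on projectors,
\begin{equation*}
\|\bV_{dp-PCA}\bV_{dp-PCA}^\top - \bV_\star\bV_\star^\top\|_2 \leq \|\bV_{dp-PCA}\bV_{dp-PCA}^\top - \bV_{PCA}\bV_{PCA}^\top\|_2 + \|\bV_{PCA}\bV_{PCA}^\top - \bV_\star\bV_\star^\top\|_2,
\end{equation*}
bounds the second term by a fixed fraction of $\gamma$ via Theorem~\ref{thm:pcagen} applied to \emph{non-private} PCA under $\cS^{\mathsf{PCA}}_{\gamma/2}(\cX)>0$, and bounds the first term by the utility guarantee of \citet{jiang2016wishart}, which gives $2\sqrt{d}\,O(d\log(d)/(N\epsilon))$ with high probability; this term vanishes as $N$ grows, so the sum drops below the target radius. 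Your decomposition $\bX\bX^\top+\boldsymbol{E} = \bXin\bXin^\top + (\bXout\bXout^\top+\boldsymbol{E})$ is a legitimate alternative, and can even be made cleaner than you state it: since the Wishart noise is PSD you can write $\boldsymbol{E}=\bZ\bZ^\top$ and regard the columns of $\bZ$ as additional outliers, so Theorem~\ref{thm:pcagen} applies to the augmented dataset \emph{as stated}, without reopening its Davis--Kahan proof. What each route buys: yours is more self-contained (it reuses Theorem~\ref{thm:pcagen} rather than importing a separate utility theorem), but it makes the conclusion hinge on $\cS^{\mathsf{PCA}}_{\gamma/2}(\cX) > \|\boldsymbol{E}\|_2$ with $\|\boldsymbol{E}\|_2$ not decaying in $N$, which forces the extra assumption that the stability statistic grows like $\Theta(N)$ --- an assumption absent from the theorem's hypothesis, which you correctly flag as the weak point. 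The paper's route decouples the two error sources: the stability margin need only be positive (it pins the non-private PCA subspace at a fixed radius), while all the $N$-dependence is carried by the dp-to-non-private deviation, which decays like $1/(N\epsilon)$. In fairness, the Jiang et al.\ utility bound itself presumes an eigengap of the normalized second-moment matrix bounded below, so both arguments lean on comparable implicit scaling of the data, and at the informal level at which Theorem~\ref{thm:pca} is stated your proof is an acceptable, and arguably more transparent, alternative.
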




\subsection{Approximation for dp-(S)GGD}
\label{subsec:dpggd}

Notice that, in order for the conditions of Theorem~\ref{thm:priv} to be satisfied, we need the total number of iterations to be bounded as $T = O(N^2 \epsilon^2)$.
To get a sense of the number of restarts we can take, we note that this implies $\sum_{l=1}^R T_l = O( N^2 \epsilon^2)$. 
If we take $T_l = (\epsilon N)^\alpha$ for $0 < \alpha < 2$ for all $l=1,\dots, R$, the conditions of Theorem~\ref{thm:linconv} are satisfied once $N$ is sufficiently large. 
Therefore, we can take $R = O((\epsilon N)^{2-\alpha})$, which yields a dp-(S)GGD estimator with accuracy on the order of $1/2^{(\epsilon N)^{2-\alpha}}$, which decreases exponentially in $N$. Taking this all together, we have the following corollary of Theorems~\ref{thm:linconv} and~\ref{thm:priv}.
\begin{corollary}\label{cor:dpggdapprox}
    Running the dp-(S)GGD algorithm as in Theorem~\ref{thm:linconv} for the maximum number of restarts $R$ while still maintaining privacy in Theorem~\ref{thm:priv} yields a final iterate $\hat{\bV}$ such that $d_r^2(\hat{\bV},\bV_\star) = O(\gamma / 2^{(\epsilon N)^{2-\alpha}})$.
\end{corollary}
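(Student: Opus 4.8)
The plan is to treat this as a bookkeeping corollary that threads the differential-privacy budget of Theorem~\ref{thm:priv} through the geometrically-diminishing-step-size restart scheme of Theorem~\ref{thm:linconv}. First I would fix the noise variance $\sigma^2$ at the threshold value prescribed by Theorem~\ref{thm:priv}, so that dp-(S)GGD run for a total of $T = \sum_{l=1}^{R} T_l$ iterations is $(\epsilon,\delta)$-differentially private exactly when $T = O(N^2\epsilon^2)$. The key point that makes the two theorems compatible is that this same upper bound $T < O(N^2\epsilon^2)$ is already imposed inside the convergence Theorems~\ref{thm:nggd}--\ref{thm:nsggd}, hence inside Theorem~\ref{thm:linconv}; so the privacy-feasible and convergence-feasible iteration regimes coincide, and no genuine tension between ``enough noise for privacy'' and ``little enough noise for recovery'' remains to be resolved.

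Next I would instantiate the restart budget. Setting the target radius $a = \gamma/2$ to be proportional to the radius of the initialization ball $B_{d_r^2}(\bV_\star, \gamma/2)$, I allocate $T_l = (\epsilon N)^\alpha$ iterations to each of the $R$ restarts for a fixed $0 < \alpha < 2$. Theorem~\ref{thm:linconv} requires $T_1 = O(\cF_j(a/d_r^2(\bV_\star, \bV_0)))$ for the first run and $T_l \ge \cF_j(1/2)$ for each subsequent run; since both $\cF_j(a/d_r^2(\bV_\star, \bV_0))$ and $\cF_j(1/2)$ are constants independent of $N$ while $(\epsilon N)^\alpha \to \infty$, these requirements hold for all $N$ sufficiently large. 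Summing gives $T = R\,(\epsilon N)^\alpha$, and forcing the privacy constraint $T = O(N^2\epsilon^2)$ yields $R = O((\epsilon N)^{2-\alpha})$ as the maximum number of restarts one can run while remaining private.

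Reading off the accuracy is then immediate: Theorem~\ref{thm:linconv} places the output $\hat{\bV}$ of the $R$th restart in $B_{d_r^2}(\bV_\star, a/2^R)$, so substituting $a = \gamma/2$ and $R = O((\epsilon N)^{2-\alpha})$ gives $d_r^2(\hat{\bV}, \bV_\star) = O(\gamma/2^{(\epsilon N)^{2-\alpha}})$, the claimed rate that decays exponentially in $N$.

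The one point requiring genuine care is the probabilistic accounting across the growing number of restarts. Theorem~\ref{thm:linconv} succeeds with probability at least $1 - 2R\lambda$ (respectively $1 - 4R\lambda$ for NSGGD), and since $R$ now grows polynomially in $N$, I would rescale the per-restart failure probability, taking $\lambda \asymp \lambda'/R$, to keep the overall success probability bounded away from zero. The obstacle to watch is that $\cF_j$ depends on $\lambda$, so shrinking $\lambda$ inflates the per-restart lower bounds $\cF_j(1/2)$; I would verify that this dependence is only logarithmic in $1/\lambda$ (as is typical for the martingale and concentration tail estimates underlying these $\cF_j$), so that the inflated constant stays dominated by the polynomially-growing allocation $T_l = (\epsilon N)^\alpha$, and the choice of $R$ above survives unchanged.
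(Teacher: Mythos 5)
Your main argument is exactly the paper's: the paper proves this corollary by the same bookkeeping, imposing $\sum_{l=1}^R T_l = O(N^2\epsilon^2)$ from Theorem~\ref{thm:priv}, allocating $T_l = (\epsilon N)^\alpha$ with $0<\alpha<2$, observing that the thresholds $\cF_j$ in Theorem~\ref{thm:linconv} are constants in $N$ and hence met for $N$ sufficiently large, and concluding $R = O((\epsilon N)^{2-\alpha})$ with final accuracy $a/2^R$. One correction to your final paragraph, though: in this paper the dependence of $\cF_j$ on $\lambda$ is polynomial, not logarithmic. The concentration underlying Theorems~\ref{thm:nggd}--\ref{thm:nsggd} comes from Doob's maximal inequality in its $L^2$ (Chebyshev-type) form, and the explicit thresholds \eqref{eq:nggdTcond1} and \eqref{eq:sggdTcond1} scale as $\lambda^{-1/(2\nu-1)}$. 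With your rescaling $\lambda \asymp \lambda'/R$ and $R = (\epsilon N)^{2-\alpha}$, the per-restart threshold inflates to order $(\epsilon N)^{(2-\alpha)/(2\nu-1)}$, and demanding that this stay dominated by $T_l = (\epsilon N)^\alpha$ forces $\alpha \geq 1/\nu$; since $\nu \in (0.5,1)$, this leaves the nonempty but restricted range $\alpha \in (1/\nu, 2)$ rather than all of $(0,2)$, so your repair survives only with that caveat. It is also worth noting that the paper itself never performs this rescaling: it keeps $\lambda$ fixed and accepts the degrading success probability $1-2R\lambda$ (or $1-4R\lambda$ for NSGGD), leaving the probability unquantified in the corollary statement, so on this point your accounting is more careful than the paper's, modulo the corrected tail behavior.
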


\subsection{Differentially Private REAPER Algorithms}
\label{subsec:reap}

One could also attempt to relax~\eqref{eq:GLAD} and solve a surrogate convex problem instead. A popular relaxation for this task is the REAPER relaxation of~\citet{lerman2015robust}. In this section, we present a simple differentially private version of this method. We can directly apply existing empirical risk minimization results to this problem~\citep{bassily2014private,bassily2019private} to yield subspace recovery guarantees. This will give us a baseline that demonstrates the superiority of the nonconvex method.

The REAPER program~\citep{lerman2015robust} solves~\eqref{eq:GLAD} by relaxing the nonconvex constraints that $\bP_L$ is an orthoprojection:
\begin{align}\tag{\sf REAP}\label{eq:REAPER}
    &\min_{\bP \in H} \frac{1}{N} \sum_{i=1}^N \| (\bI - \bP) \bx_i\|,\ \cH := \{\bP : \bzero \preceq \bP \preceq \bI, \ \Tr(\bP) = r \}.
\end{align}
This is a convex program, and so~\eqref{eq:REAPER} can be solved by an array of standard convex optimization algorithms. Since $\cX \subset S^{D-1}$, 
$G(\bP;\cX) = \frac{1}{N} \sum_{i=1}^N \| (\bI - \bP) \bx_i\|$ is 1-Lipschitz. Since the objective in~\eqref{eq:REAPER} is not smooth, one must use subgradient based methods~\citep{clarke1990optimization}.  We use the following subgradient of REAPER:
\begin{equation}\label{eq:reaper_grad}
    \nabla G(\bP;\mathcal{X}) = -\sum_{\substack{\bx\in \mathcal{X} \\ \|\bx-\bP \bx\|> 0}}\frac{ (\bI-\bP)\bx \bx^T + \bx \bx^T(\bI-\bP)}{2 \|\bx-\bP\bx\|} .
\end{equation}

While~\citet{lerman2015robust} proposes to solve this problem using an iteratively reweighted least squares method, we instead opt to study first-order methods. The first method we consider is gradient descent, and the second is a mirror descent. To make these methods differentially private, we again use the Gaussian mechanism and add noise to the gradient. 
Since past work has demonstrated advantages for considering stochastic first-order methods when making convex algorithms private~\citep{Abadi2016dl_with_privacy,bassily2019private}, we also give stochastic versions of each algorithm. 
These convex optimization methods for the REAPER problem are differentially private by the previous arguments of~\citet{bassily2014private,talwar2014private}.

Since our primary focus is on the nonconvex method, and some nonprivate versions of the convex methods were previously explored by~\citet{goes2014robust}, we leave the exact formulation of these methods to the Appendix. In the Appendix, we outline 4 differentially private algorithm for solving this REAPER program: Differentially Private Gradient Descent (dp-GD-REAP), Differentially Private Stochastic Gradient Descent (dp-SGD-REAP), Differentially Private Mirror Descent (dp-MD-REAP), and Differentially Private Stochastic Mirror Descent (dp-SMD-REAP).


Previous work on optimization with differential privacy has focused on differentially private \emph{empirical risk minimization}~\citep{bassily2014private,talwar2014private,bassily2019private}. In this general set-up, one wishes to minimize the empirical surrogate for the population loss.
In the non-stochastic setting, we can use the main theorem of~\citet{talwar2014private} for both dp-GD-REAP and dp-MD-REAP. Indeed, if one uses the mirror map $\Psi(\cdot) = \|\cdot\|^2/2$, then the algorithm just becomes dp-GD-REAP, whereas if one uses the negative von Neumann entropy, it yields dp-MD-REAP. 
The following theorem gives our main approximation result for the nonstochastic and stochastic dp-REAP algorithms. In both cases, we show that the approximation error for these private methods is on $O(1/N)$, rather than exponential like the dp-(S)GGD algorithms.
In contrast to~\citet{bassily2019private}, this theorem does not resort to smoothing the cost function and instead uses the optimization rate for subgradient descent.  

\begin{theorem}\label{thm:talwar}
	 Let $\mathsf{D}$ be the diameter of the constraint set $\cH$. Then, if dp-GD-REAP or dp-MD-REAP is run for $T = O(\epsilon^2 N^2 )$ and yields the estimator $\bar{\bP}$, we have
\begin{equation}
    \E G(\bar{\bP};\cX) - \min_{\bP} G(\bP;\cX) \lesssim \frac{\mathsf D \log(N/\delta)}{\epsilon N},
\end{equation}
where the expectation is taken over the randomness of the algorithm. On the other hand, for dp-SGD-REAP and dp-SMD-REAP, if the noise variance is $\sigma^2 = c_2\frac{B^2 T \log(1/\delta)}{\varepsilon^2 N^2}$, then
    \begin{align*}
    \E G(\bar{\bP};\cX) - \min_{\bP} G(\bP;\cX)  &\leq \sqrt{1 + c_2 B^2 \log(1/\delta)}\mathsf{D}\frac{1}{\epsilon N},
    \end{align*}
  where the expectation is taken over the randomness of the algorithm and $\mathsf{D}$ is the diameter of the constraint set with respect to the appropriate geometry.
\end{theorem}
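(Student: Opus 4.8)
The plan is to establish both halves by specializing the existing machinery of differentially private convex optimization to the REAPER objective $G(\cdot;\cX)$ over the convex body $\cH$. The two structural facts needed are already at hand: $G(\cdot;\cX)$ is convex (an average of norms of affine functions of $\bP$) and $1$-Lipschitz (since $\cX \subset S^{D-1}$), while $\cH$ is compact and convex with diameter $\mathsf{D}$ and admits the subgradient in~\eqref{eq:reaper_grad}.

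For the non-stochastic bound (dp-GD-REAP and dp-MD-REAP), I would invoke the main theorem of~\citet{talwar2014private} essentially verbatim. Their result analyzes noisy mirror descent for an $L$-Lipschitz convex objective over a convex body, and both of our algorithms are special cases: dp-GD-REAP corresponds to the mirror map $\Psi(\cdot) = \|\cdot\|^2/2$ and dp-MD-REAP to the negative von Neumann entropy, each inducing the matching primal-dual geometry on $\cH$. Calibrating the per-step Gaussian noise for $(\epsilon,\delta)$-differential privacy and running for $T = O(\epsilon^2 N^2)$ iterations, their bound gives excess empirical risk $\lesssim \mathsf{D}\,\log(N/\delta)/(\epsilon N)$ after substituting the Lipschitz constant $L = 1$. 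The only item to verify is that the Lipschitz constant measured in the dual norm of each mirror map remains $O(1)$, so that $\mathsf{D}$ is consistently read as the diameter in the corresponding geometry.

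For the stochastic bound (dp-SGD-REAP and dp-SMD-REAP), I would follow the template of~\citet{bassily2019private} but omit their smoothing step, using instead the classical $O(1/\sqrt{T})$ convergence rate of stochastic mirror descent for convex Lipschitz objectives. The noisy stochastic subgradient at step $k$ is $\hat g_k = \nabla G(\bP_k;\cB_k) + \bB_k$, which is an unbiased estimate of a subgradient of $G(\cdot;\cX)$ since the minibatch $\cB_k$ is drawn uniformly and $\bB_k$ is mean zero. The controlling quantity is $\E\|\hat g_k\|_*^2$, which splits into a minibatch term bounded by $1$ via the Lipschitz property and a Gaussian term proportional to the calibrated variance $\sigma^2 = c_2 B^2 T \log(1/\delta)/(\epsilon^2 N^2)$. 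Plugging this into the standard mirror-descent regret bound with step size tuned as $\eta \sim \mathsf{D}/(\sqrt{T}\,\sqrt{\E\|\hat g_k\|_*^2})$ yields a rate of order $\mathsf{D}\sqrt{1/T + c' B^2\log(1/\delta)/(\epsilon^2 N^2)}$; choosing $T = \Theta(\epsilon^2 N^2)$ balances the two terms and produces the claimed factor $\sqrt{1 + c_2 B^2\log(1/\delta)}\,\mathsf{D}/(\epsilon N)$.

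The main obstacle I anticipate is the bookkeeping in the stochastic case: because $\sigma^2$ grows linearly in $T$, one must track how the noise enters $\E\|\hat g_k\|_*^2$ and the step size so that the $T$-dependence collapses to the clean $1/(\epsilon N)$ scaling, all while keeping the dual-norm accounting (Frobenius for dp-SGD-REAP versus the entropic/nuclear geometry for dp-SMD-REAP) consistent with how $\mathsf{D}$ and the Lipschitz constant are measured. Dispensing with the smoothing of~\citet{bassily2019private} is the one genuinely new ingredient, and it is exactly what lets us apply the robust subgradient rate directly rather than through a smoothed surrogate.
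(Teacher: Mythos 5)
Your proposal matches the paper's proof essentially verbatim: the nonstochastic bound is Theorem 3.2 of \citet{talwar2014private} applied with the two mirror maps exactly as you describe, and the stochastic bound follows the paper's argument in Appendix~\ref{app:sgdsmd}, which invokes the classical stochastic mirror descent rate (cited there from \citet{bubeck2015convex}), splits $\E\|\nabla G(\bP;\cB)+\bB\|^2$ into the Lipschitz term bounded by $1$ and the Gaussian term $\sigma^2 M$, and sets $T = O(\epsilon^2 N^2)$ to collapse the $T$-dependence. Your observation that avoiding the smoothing of \citet{bassily2019private} is the key new ingredient is also exactly the point the paper itself emphasizes.
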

The proof for the nonstochastic methods is just Theorem 3.2 of~\citet{talwar2014private}, and the proof for the  stochastic methods is given in Appendix~\ref{app:sgdsmd}. 



\subsubsection{Implications for Subspace Recovery}

We show that the approximate minimization guaranteed by Theorem~\ref{thm:talwar} yields in a generic setting approximate subspace recovery, or for REAPER, approximate recovery of $\bP_\star = \bP_{L_\star}$.

We recall the following following \emph{permeance}, 
\emph{alignment}, and \emph{stability}
statistics from~\citet{lerman2015robust}:
\begin{align}
    \cPREAP &= \inf_{\bu \in L_\star \cap S^{D-1}} \frac{1}{N} \sum_{\bx \in \Xin} |\bu^\top \bP_\star \bx |, \\
    \cAREAP &= \frac{1}{N} \| \bXout \|  \| \widetilde{ \bQ_\star \bXout}\|, \\
    \cSREAP &= \frac{\mathcal P}{4 \sqrt{d}} - \mathcal A.
\end{align}
Here, $\bQ_\star = \bI - \bP_\star$ and the operator $\widetilde{\cdot}$ normalizes the columns of $\bQ_\star \bXout$ to the unit sphere.
The permeance measures how well spread the inliers are on the underlying subspace, the alignment measures how aligned the outliers are orthogonal to $L_\star$, and the stability is a tradeoff between these two terms. The result in Theorem 2.1 of~\citet{lerman2015robust} states that if $\cSREAP > 0$, then $ \| \hat \bP - \bP_{\star} \|_{*} = 0$, where $\| \cdot \|_*$ is the nuclear or Schatten 1-norm. In other words, the REAPER program exactly recovers $L_\star$ once $\cSREAP > 0$.

The following Theorem states the approximation result for the REAPER algorithms of Section~\ref{subsec:reap}. In particular, it states that as $N$ increases and the stability is bounded below, the distance between the REAPER subspace and the true subspace goes to zero at a rate of $1/N$. 
\begin{theorem}
\label{thm:1overN_factor}
Suppose that $\cSREAP > 0$. If dp-MD-REAP or dp-GD-REAP are run on the REAPER problem for $T = O(\epsilon^2 N^2)$ iterations, and if $\hat L$ is the principal subspace of the output of one of these algorithms, then
\begin{equation}
    \E d^2(\hat L, L_\star) \lesssim \frac{\pi}{2 \cS} \frac{\log(N/\delta)}{\epsilon N}. 
\end{equation}
On the other hand, if $\hat L$ is the principal subspace of the output of dp-SMD-REAP or dp-SGD-REAP,
\begin{equation}
    \E d^2(\hat L, L_\star) \lesssim \frac{\pi}{2 \cS} \frac{\sqrt{1 + c_2 B^2 \log^2(1/\delta)}}{\epsilon N}. 
\end{equation}
The GD, MD, SGD, and SMD algorithms differ in their respective constants.


\end{theorem}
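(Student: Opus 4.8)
The plan is to reduce the subspace-recovery bound to the optimization guarantee of Theorem~\ref{thm:talwar} through a quantitative \emph{sharpness} (linear-growth) estimate for the REAPER objective $G(\cdot;\cX)$ around its minimizer. First I would record that, under the hypothesis $\cSREAP>0$, Theorem~2.1 of~\citet{lerman2015robust} guarantees exact recovery, so $\bP_\star$ is the unique minimizer of $G$ over $\cH$ and $\min_{\bP} G(\bP;\cX)=G(\bP_\star;\cX)$. Consequently the optimization error controlled by Theorem~\ref{thm:talwar} is exactly the excess objective $\E\,[G(\bar\bP;\cX)-G(\bP_\star;\cX)]$, and it remains to convert this excess value into a bound on $\E\, d^2(\hat L,L_\star)$.

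The crux is a sharpness bound of the form
\[
G(\bP;\cX)-G(\bP_\star;\cX)\ \ge\ \cSREAP\,\|\bP-\bP_\star\|_*\qquad\text{for all }\bP\in\cH .
\]
I would establish this by reusing the permeance/alignment decomposition behind the exact-recovery proof: by convexity of $G$ it suffices to lower bound the one-sided directional derivative $G'(\bP_\star;\bP-\bP_\star)$ along any feasible direction $\bDelta=\bP-\bP_\star$. Splitting the subgradient~\eqref{eq:reaper_grad} into the inlier terms (which are nonsmooth at $\bP_\star$ because $\bQ_\star\bx=0$ for $\bx\in\Xin$, producing a kink whose directional derivative is bounded below by $\cPREAP/(4\sqrt d)$ after accounting for the feasible block structure of $\bDelta$) and the outlier terms (smooth, contributing at most $\cAREAP\|\bDelta\|_*$), the two contributions combine to give $G'(\bP_\star;\bDelta)\ge(\cPREAP/(4\sqrt d)-\cAREAP)\,\|\bDelta\|_*=\cSREAP\|\bDelta\|_*$. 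This is the step I expect to be the main obstacle: whereas Lerman only needs the derivative to be strictly positive for exact recovery, here it must be bounded below \emph{proportionally} to the distance, so the feasible-direction bookkeeping relating the nuclear scale of $\bDelta$ to the $4\sqrt d$ normalization in $\cSREAP$ has to be carried out quantitatively rather than qualitatively.

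Given the sharpness bound, dividing and taking expectations gives $\E\|\bar\bP-\bP_\star\|_*\le \cSREAP^{-1}\,\E[G(\bar\bP;\cX)-G(\bP_\star;\cX)]$, into which I substitute the two cases of Theorem~\ref{thm:talwar} to obtain the $1/(\epsilon N)$ rate, with the $\log(N/\delta)$ factor for the full-gradient methods and the $\sqrt{1+c_2B^2\log^2(1/\delta)}$ factor for the stochastic ones. The last step is a deterministic geometric conversion from the matrix distance $\|\bar\bP-\bP_\star\|_*$ to the Grassmannian distance $d^2(\hat L,L_\star)$, where $\hat L$ is the top-$r$ eigenspace of the relaxed iterate $\bar\bP$. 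Writing $\theta_j$ for the principal angles between $\hat L$ and $L_\star$, a Davis--Kahan/$\sin$-theta argument controls the angles by $\|\bar\bP-\bP_\star\|_*$, and Jordan's inequality $\theta_j\le(\pi/2)\sin\theta_j$ together with $\theta_j^2\le\theta_j$ in the small-angle regime yields $d^2(\hat L,L_\star)=\sum_j\theta_j^2\lesssim(\pi/2)\|\bar\bP-\bP_\star\|_*$, which is the source of the $\pi/2$ prefactor. Chaining the three estimates pathwise gives $d^2(\hat L,L_\star)\le\frac{\pi}{2\cSREAP}\,(G(\bar\bP;\cX)-G(\bP_\star;\cX))$, and taking expectations through this bound and inserting Theorem~\ref{thm:talwar} produces the stated $\tfrac{\pi}{2\cS}$ estimates, the constants distinguishing GD, MD, SGD, and SMD being inherited verbatim from the corresponding cases of Theorem~\ref{thm:talwar}.
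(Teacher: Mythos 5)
Your proposal is correct and follows essentially the same route as the paper: a linear-growth (sharpness) bound for $G$ at $\bP_\star$ in nuclear norm, a Davis--Kahan $\sin\Theta$ conversion to the Grassmannian distance with the $\pi/2$ prefactor, and then substitution of the two cases of Theorem~\ref{thm:talwar}. The one difference is that the inequality $G(\bP_\star+\bDelta)-G(\bP_\star)\ge\cSREAP\|\bDelta\|_*$, which you flag as the main obstacle and propose to re-derive via the permeance/alignment decomposition, is exactly Lemma~2.3 of \citet{lerman2015robust} (quoted in the paper as Lemma~\ref{lem:reaperpert}), so the paper imports it directly and no new quantitative bookkeeping is required.
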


\subsection{Comparing Nonconvex and Convex Results}

Notice that Theorem~\ref{thm:1overN_factor} and Corollary~\ref{cor:dpggdapprox} use different distance metrics for $G(D,r)$. It turns out that up to a factor of $r$, these are equivalent: for an $r$-dimensional subspace, $d^2(L_1, L_2) \leq r \theta_1^2$, where $\theta_1$ is the maximum principal angle between $L_1$ and $L_2$. On the other hand, $1 - \sigma_r(\bV_1^\top \bV_2) = 1 - \cos(\theta_1) = O(\theta_1^2)$ for small $\theta_1$. Up to a constant factor of $r$, the metrics $d_r(\cdot, \cdot)$ and $d(\cdot, \cdot)$ are of the same order. Thus, comparing the results of Theorem~\ref{thm:1overN_factor} to Corollary~\ref{cor:dpggdapprox}, we see that the nonconvex methods have a distinct advantage in the private setting. That is, the convex algorithm only achieves an approximation error of $O(N^{-1})$ while the nonconvex methods achieve approximation errors that are $O(2^{-N^{\tau}})$

\subsection{Stability and Privacy}

We finish with a short discussion of the interaction between robustness and privacy. Consider the stability result of the GGD algorithm, which states that if $\cS_\gamma(\cX) > 0$, then GGD locally recovers the underlying subspace $L_\star$. Notice that the robustness of the method itself can yield privacy. This is stated in the following theorem.
\begin{theorem}
 Let $\cX_{-i}$ be the dataset $\cX$ with the $i$th datapoint removed. Suppose that 
 \begin{align}
     \cS_\gamma(\cX_{-i}) > 0,\  \cS^{\mathsf{PCA}}_\gamma(\cX_{-i}) > 0, \ i=1, \dots, N.
 \end{align}
 Then, GGD with PCA initialization is differentially private. 
\end{theorem}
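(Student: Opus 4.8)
The plan is to exploit the fact that GGD with PCA initialization is a \emph{deterministic} map from datasets to subspaces, and to show that under the stated hypotheses this map returns the \emph{same} subspace $L_\star$ on $\cX$ and on each of its single-point-deletion neighbors $\cX_{-i}$. For a deterministic algorithm whose output coincides on two neighboring databases, the differential-privacy inequality $P[\mathcal{A}(x)\in\mathcal{S}] \le e^{\epsilon} P[\mathcal{A}(y)\in\mathcal{S}] + \delta$ holds trivially for every $\mathcal{S}$ with $\epsilon=\delta=0$, since both sides equal the indicator of the single event $\{L_\star \in \mathcal{S}\}$. Thus the entire content reduces to establishing output invariance under deletion of any one point, which is exactly the ``robustness yields privacy'' phenomenon foreshadowed before the statement.

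First I would invoke the recovery machinery already assembled in the paper. By Theorem~\ref{thm:pcagen}, the condition $\cS^{\mathsf{PCA}}_\gamma(\cX_{-i})>0$ guarantees that the PCA subspace computed from $\cX_{-i}$ lies in $B_{d_r^2}(\bV_\star,\gamma)$, i.e.\ inside the basin of attraction. Then, by the deterministic GGD convergence and recovery result of~\citet{maunu2019well}, the condition $\cS_\gamma(\cX_{-i})>0$ together with this initialization implies that GGD run on $\cX_{-i}$ converges to the exact minimizer $\bV_\star$ of~\eqref{eq:GLAD}, whose span is the true underlying subspace $L_\star$. Running the identical argument on the full dataset $\cX$ shows that GGD with PCA initialization on $\cX$ also returns $L_\star$. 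The observation that forces the outputs to agree is that $L_\star$ is a fixed property of the inlier subspace and is not altered by deleting a single inlier or outlier; the leave-one-out stability hypotheses are precisely what certify that the algorithm still recovers that same $L_\star$ after the deletion. Assembling these, for every index $i$ and every measurable $\mathcal{S}$ we obtain $\mathcal{A}(\cX)=\mathcal{A}(\cX_{-i})=L_\star$, hence $P[\mathcal{A}(\cX)\in\mathcal{S}]=P[\mathcal{A}(\cX_{-i})\in\mathcal{S}]$, which is $(0,0)$-differential privacy and a fortiori $(\epsilon,\delta)$-DP for any $\epsilon,\delta\ge 0$.

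The main obstacle I anticipate is the \emph{exactness} of the recovery. The DP conclusion with $\epsilon=\delta=0$ needs $\mathcal{A}(\cX)$ and $\mathcal{A}(\cX_{-i})$ to be literally identical, which requires that GGD returns exactly $\bV_\star$ rather than a nearby approximation. This is legitimate only if we interpret the algorithm as run to convergence, or equivalently as returning the unique stable minimizer of~\eqref{eq:GLAD}, since any fixed finite iteration count would leave a small residual discrepancy between the two runs and yield only approximate indistinguishability. I would therefore state the result for the limiting exact-minimizer output and lean on the uniqueness of the GLAD minimizer under stability established in~\citet{maunu2019well}.

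A secondary point to handle cleanly is the bookkeeping around the normalization factor ($1/N$ versus $1/(N-1)$) in the definitions of $\cS_\gamma$ and $\cS^{\mathsf{PCA}}_\gamma$ when passing between $\cX$ and $\cX_{-i}$, and verifying that the full-dataset stability $\cS_\gamma(\cX)>0$ and $\cS^{\mathsf{PCA}}_\gamma(\cX)>0$ needed to recover $L_\star$ on $\cX$ itself are available. I would either derive these from the leave-one-out hypotheses (using that deleting a point perturbs each statistic by an $O(1/N)$ amount) or include them explicitly, so that the recovery of $L_\star$ holds uniformly on $\cX$ and on all of its neighbors.
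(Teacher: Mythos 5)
Your proposal is correct and takes essentially the same route as the paper, which in fact states this theorem without a written proof and relies on exactly your observation: under the leave-one-out stability conditions, PCA initialization lands in the basin of attraction and GGD (interpreted as run to convergence) returns exactly $L_\star$ on $\cX$ and on every neighbor $\cX_{-i}$, so the deterministic outputs coincide and the differential-privacy inequality holds trivially with $\epsilon=\delta=0$. Your two caveats — that exact (limiting) recovery is essential, since any finite-iteration discrepancy between deterministic outputs destroys DP outright, and that full-dataset stability must be supplied (e.g., noting that $\cS_\gamma(\cX)$ and $\cS^{\mathsf{PCA}}_\gamma(\cX)$ are, up to positive normalization, monotone under adding back an inlier, so positivity on some $\cX_{-i}$ with $i$ an inlier yields positivity on $\cX$) — are precisely the delicate points the paper glosses over, and you handle them correctly.
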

While the condition of this theorem is hard to verify, it says that for certain inlier-outlier datasets, one doesn't even need to add noise to the GGD algorithm, since \emph{it is already private}. An in depth study of privacy is left to future work, as the focus of this work is on the convergence of stochastic GGD methods.

\section{Differential Privacy Experiments}

We performed experiments in order to demonstrate some of the predictions of the substantial theory that was developed. The settings of our experiments focus on differential privacy, but we emphasize that the results are more general and similar experiments show the benefit of NGGD, SGGD, and NSGGD in practice. Additional experiments are in the appendix.

\subsection{Synthetic Experiments}

We present two synthetic experiments in this section. The first tests the convergence properties of the proposed algorithms for a setting with fixed parameters. The second tests the methods over a range of sample sizes and dimensions to look at their effect on subspace recovery.
More comprehensive experiments that demonstrate dependencies on other parameters are in the supplemental material. All experiments were implemented on a PC with Intel i7-9700 CPUs and 16GB RAM. Below, error refers to the distance between an iterate and the underlying subspace, $d^2(\Sp(\bV_k), \Sp(\bV_\star))$.

We tested the 6 proposed algorithms: dp-(S)GD-REAP, dp-(S)MD-REAP and dp-(S)GGD. We set the step size for the 4 dp-REAP algorithms to be $\eta_k = 8/\sqrt{k}$. The step size for dp-GGD and dp-SGGD is $\eta_k = 1 / 2^{\lfloor k/50 \rfloor}$. 
We use a fixed batch size $B = \max \left(N\sqrt{\frac{\epsilon}{4T}},1\right)$ for both the convex and nonconvex methods \citep{bassily2019private}. 

For both experiments, we randomly generate datasets according to the haystack model, with Gaussian inliers and outliers, described in \citet{lerman2015robust}. Points are scaled to the sphere before running our methods.

In \autoref{fig:comparison}, we plot the median and interquartile range of log-error versus iteration for the six algorithms on 100 randomly generated sets.  The fixed model parameters are $r=2$, $D=20$, $N=2000$ and an inlier ratio 0.5. We set the privacy parameters to be $\epsilon = 0.8$ and $\delta = 1/\sqrt{N}$. All algorithms are run with $T = N$ iterations. We note that dp-(S)GGD converges faster to the underlying subspace than dp-(S)GD-REAP and dp-MD-(S)REAP, since its convergence rate is linear, unlike the sublinear rate for the convex methods. Nevertheless, in the initial 600 iterations of dp-SGD-REAP and dp-SMD-REAP, they converge at a faster rate than dp-SGGD (we also observe this with the initial 100 iterations of the non-stochastic methods). If $D$ is not large, it may be beneficial to initialize dp-(S)GGD with a corresponding dp-REAP method instead of dp-PCA.

\begin{figure}
     \centering
         \includegraphics[width=.24\columnwidth]{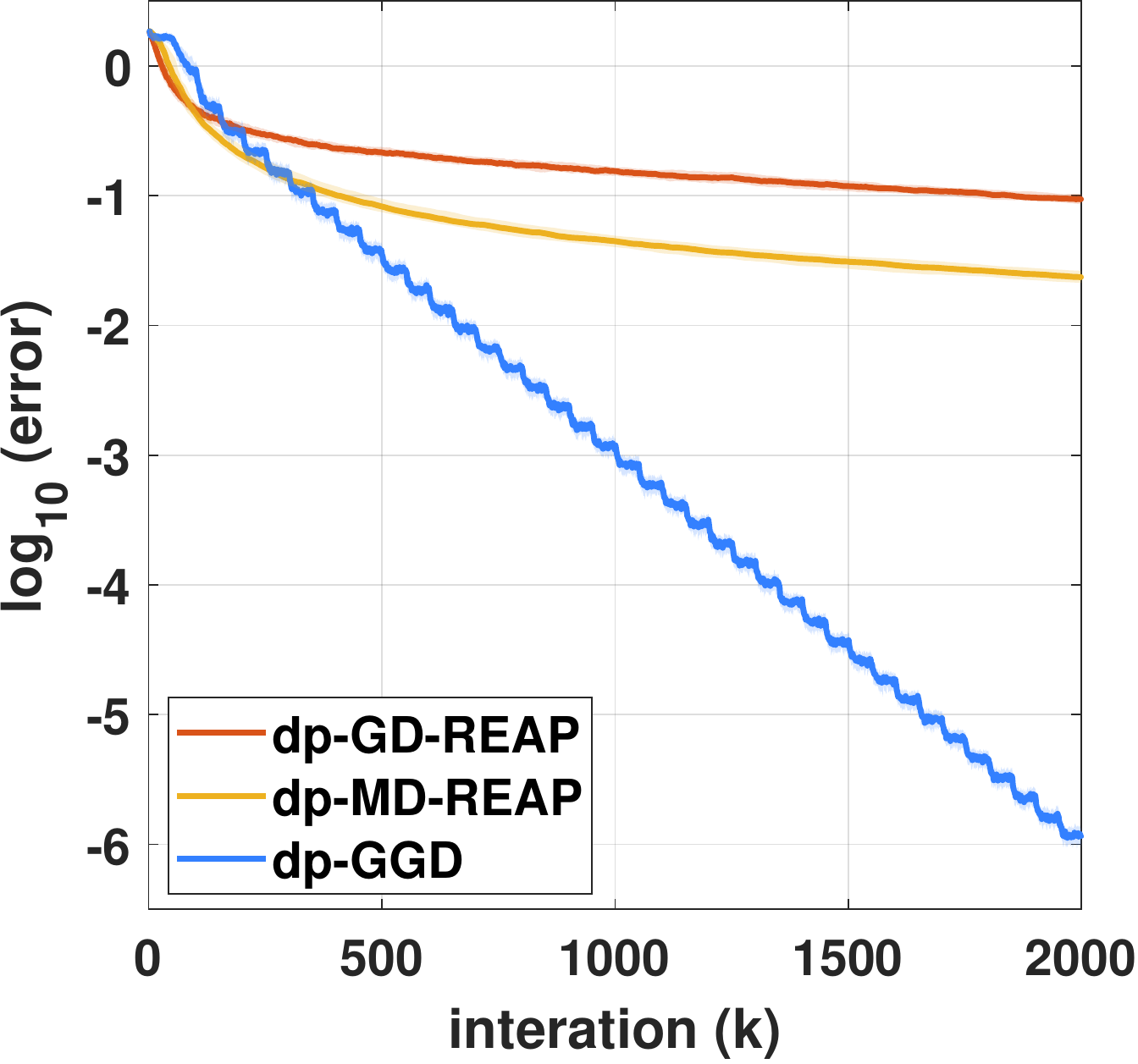}
         \includegraphics[width=.24\columnwidth]{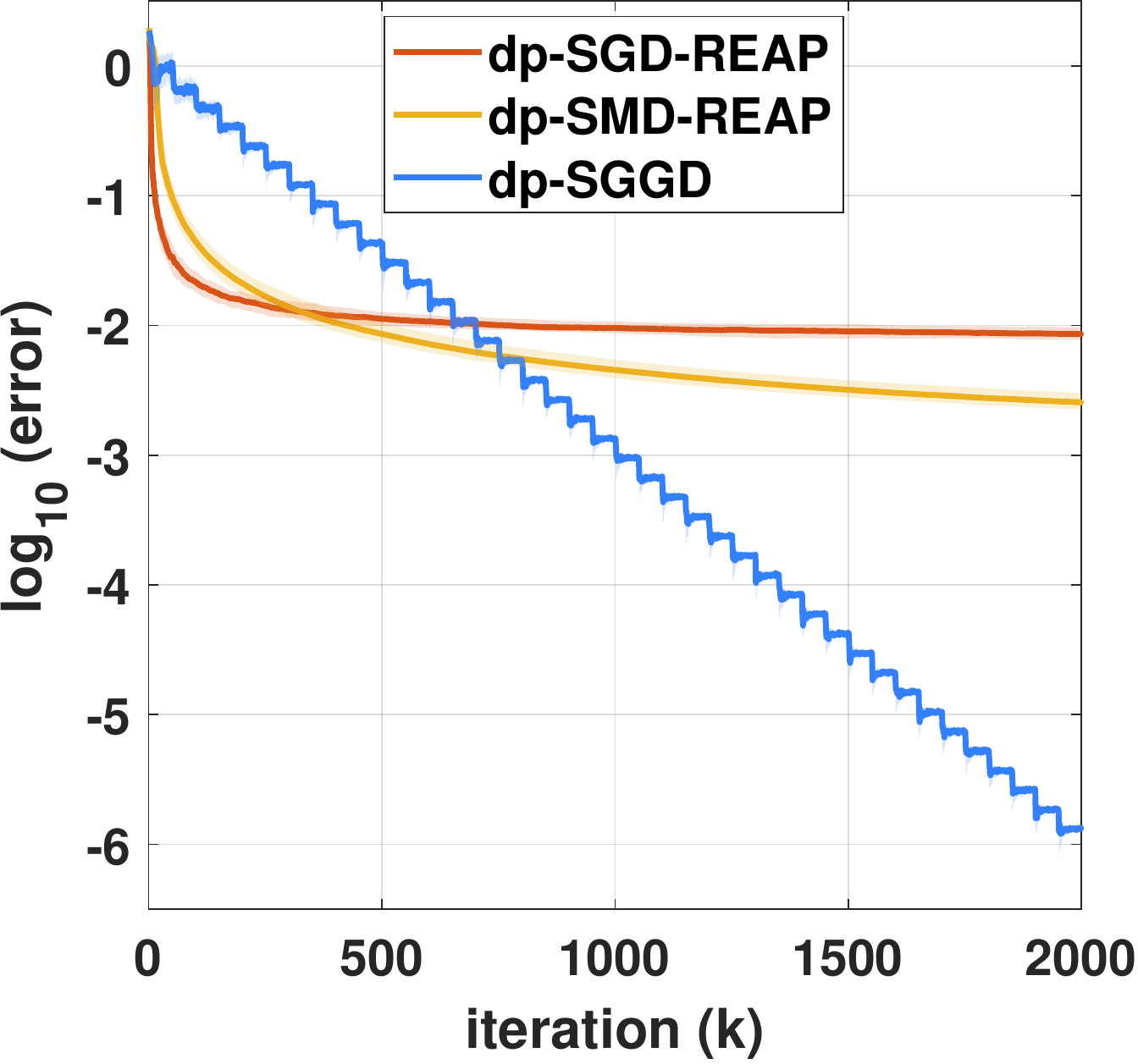}
         \includegraphics[width=.24\columnwidth]{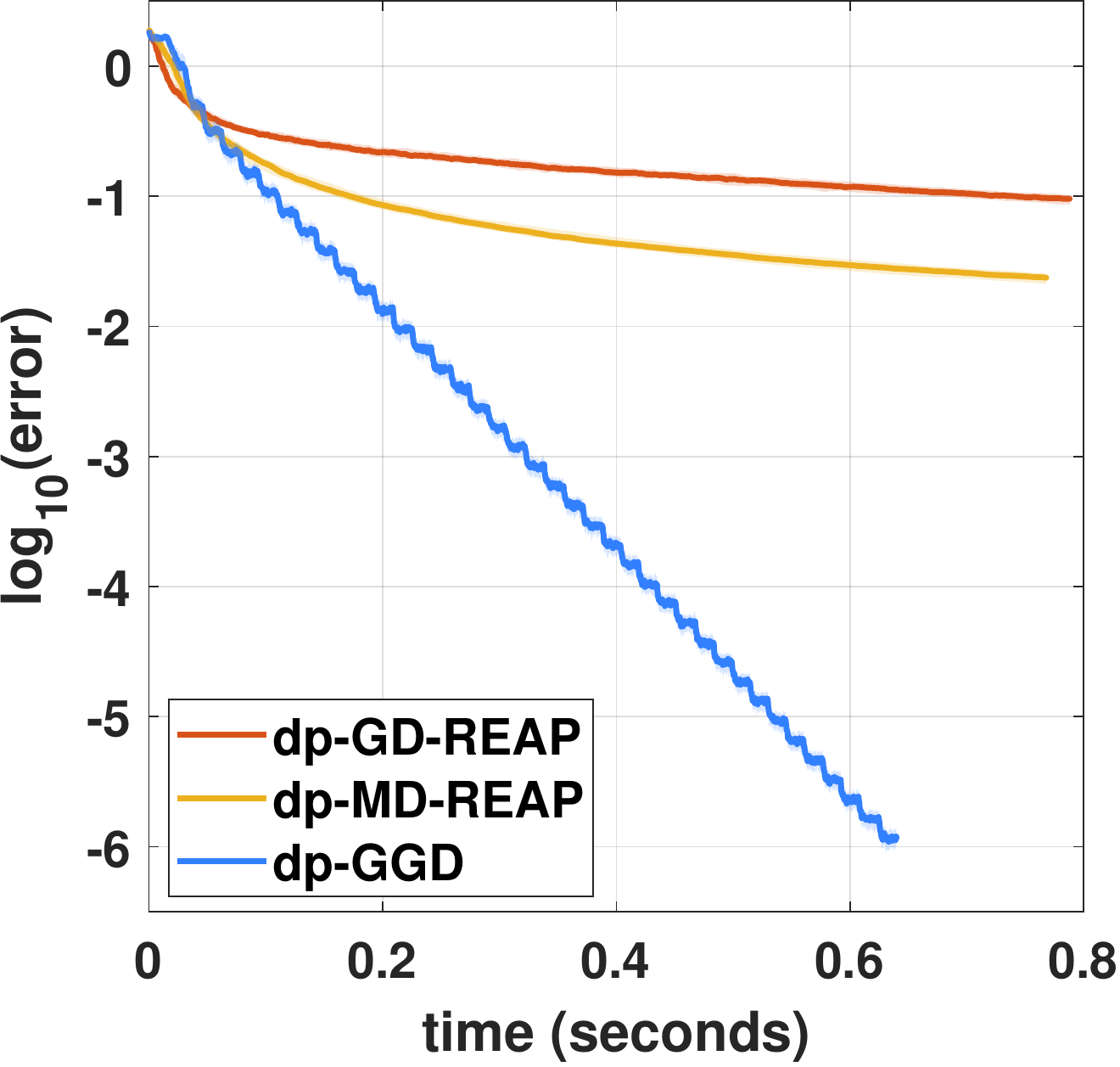}
         \includegraphics[width=.24\columnwidth]{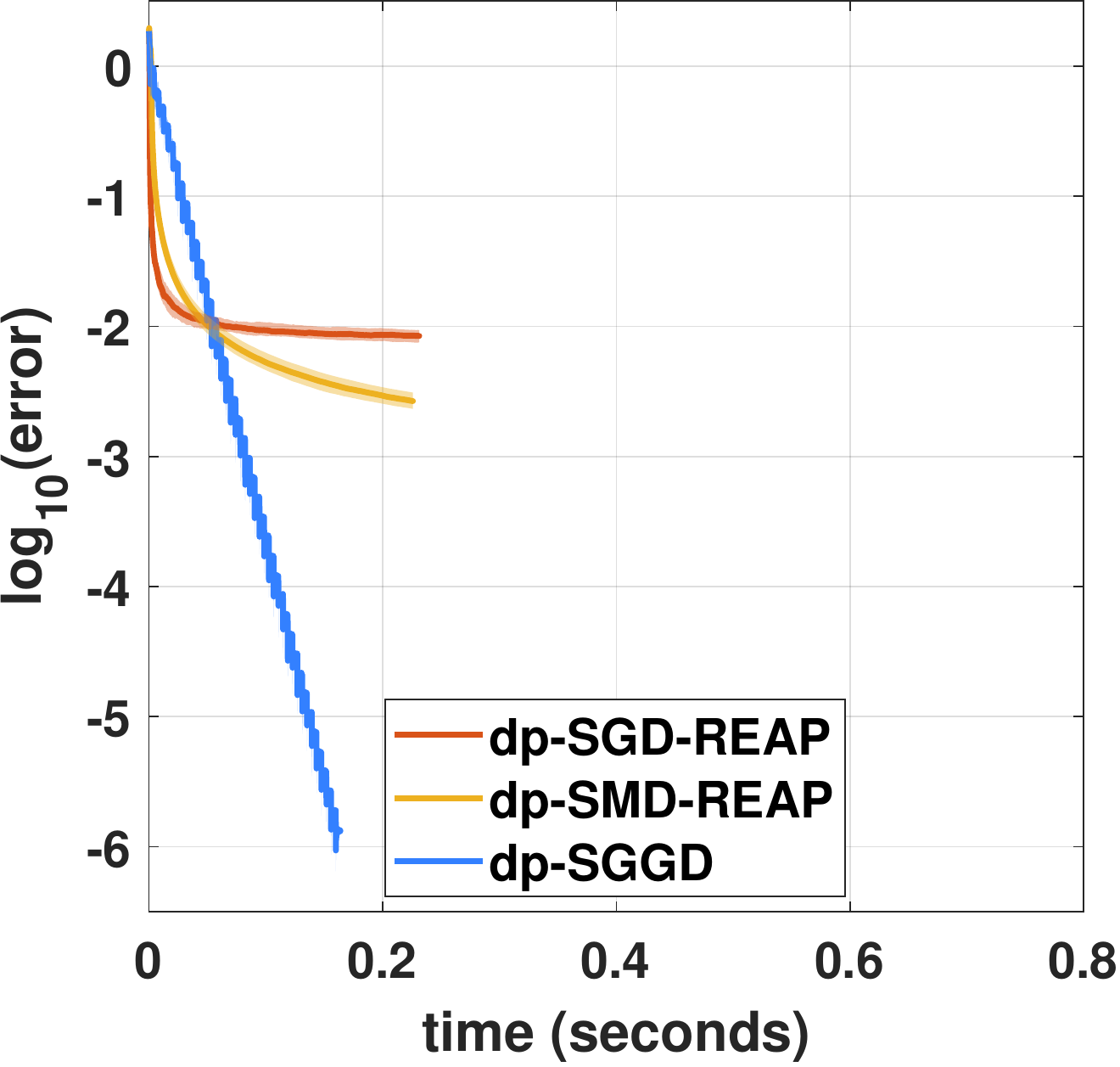}
        \caption{Convergence of the proposed  algorithms with fixed parameters (see main text). Each algorithm is repeated 100 times and the median log error with a shaded region of interquartile range is plotted as a function of iterations (left two figures) and time (right two figures). }
        \label{fig:comparison}
\end{figure}

\begin{figure}[h!]
     \centering
         \includegraphics[width=.24\columnwidth]{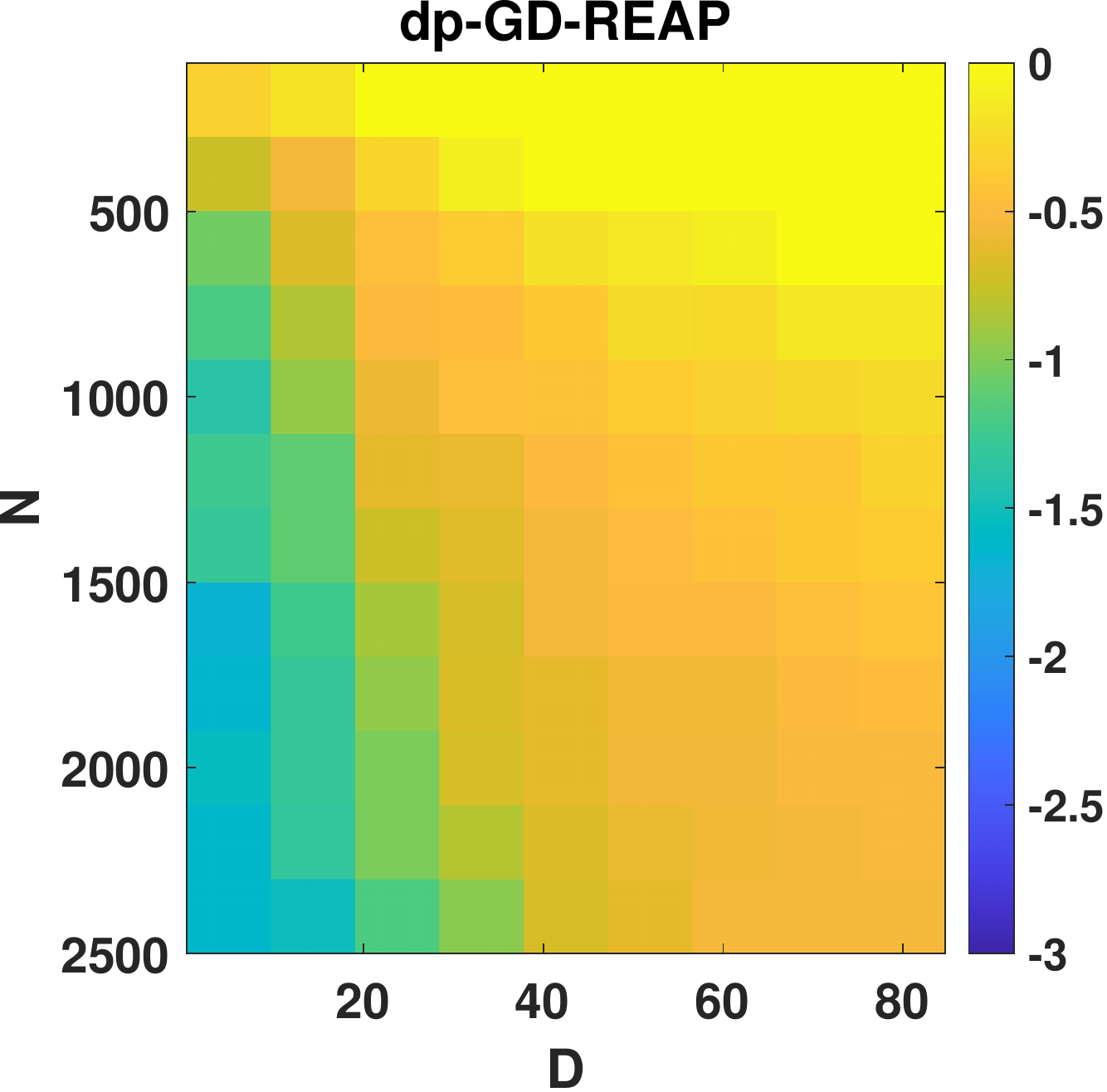}
         \includegraphics[width=.24\columnwidth]{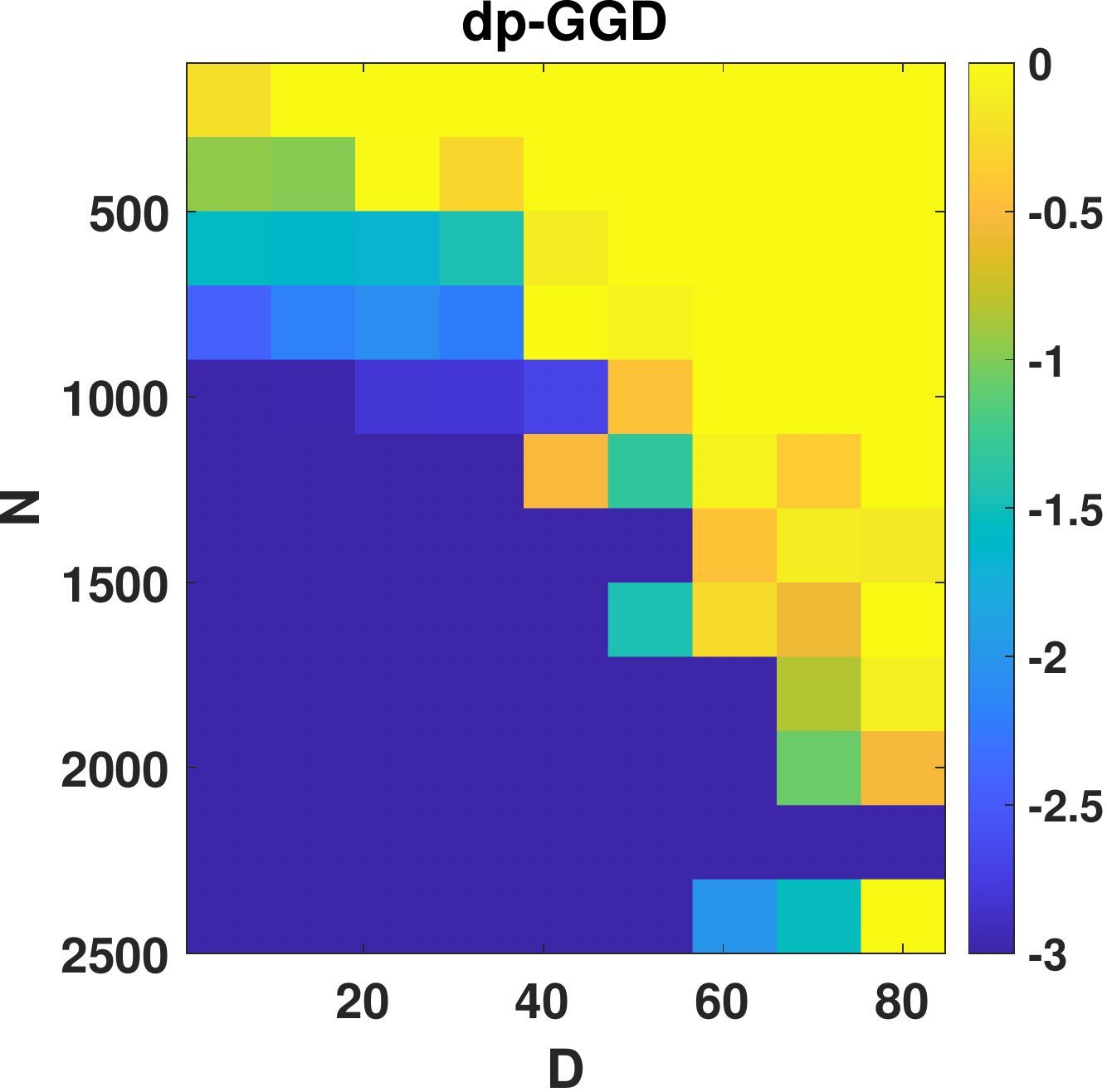}
         \includegraphics[width=.24\columnwidth]{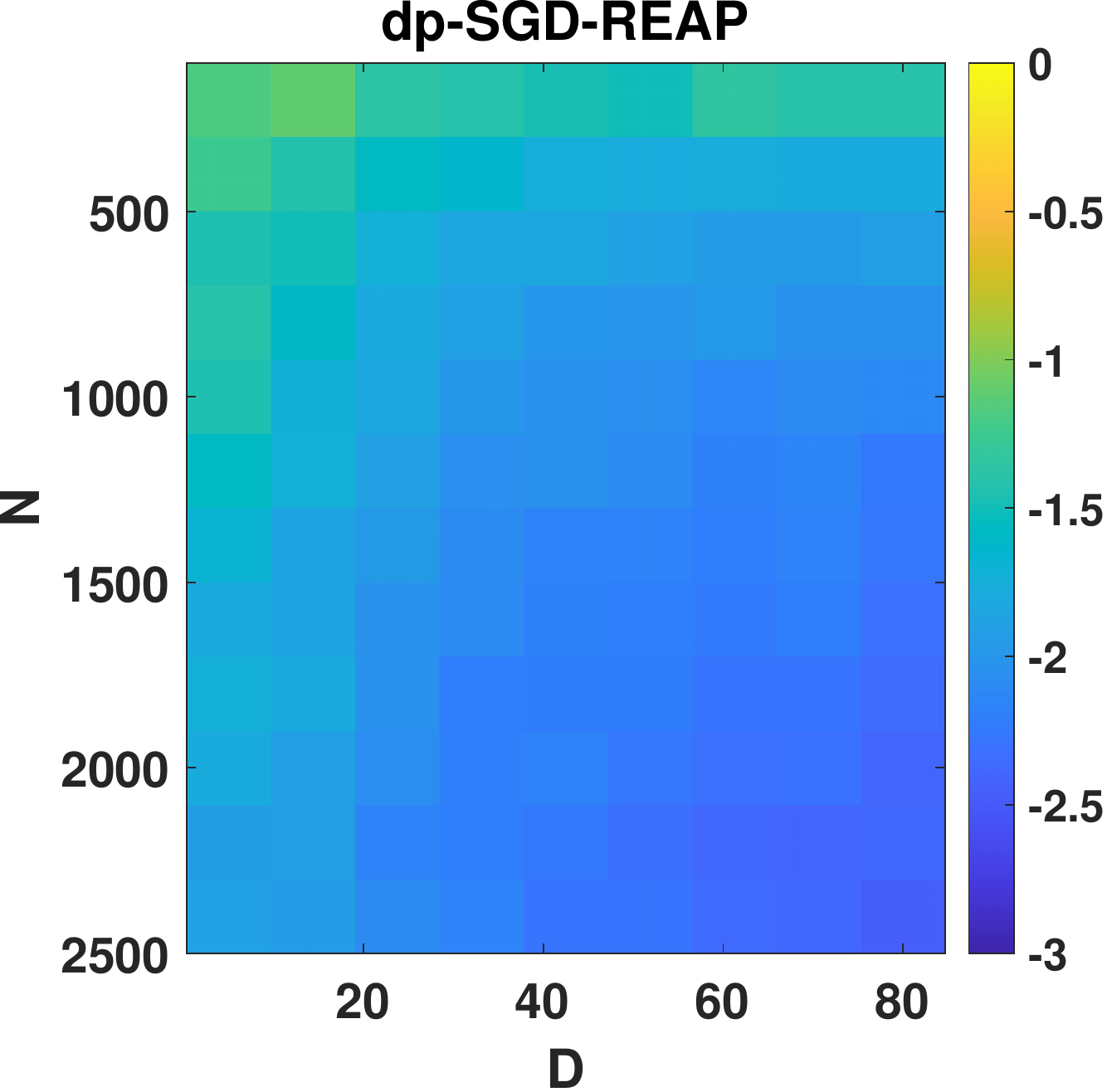}
         \includegraphics[width=.24\columnwidth]{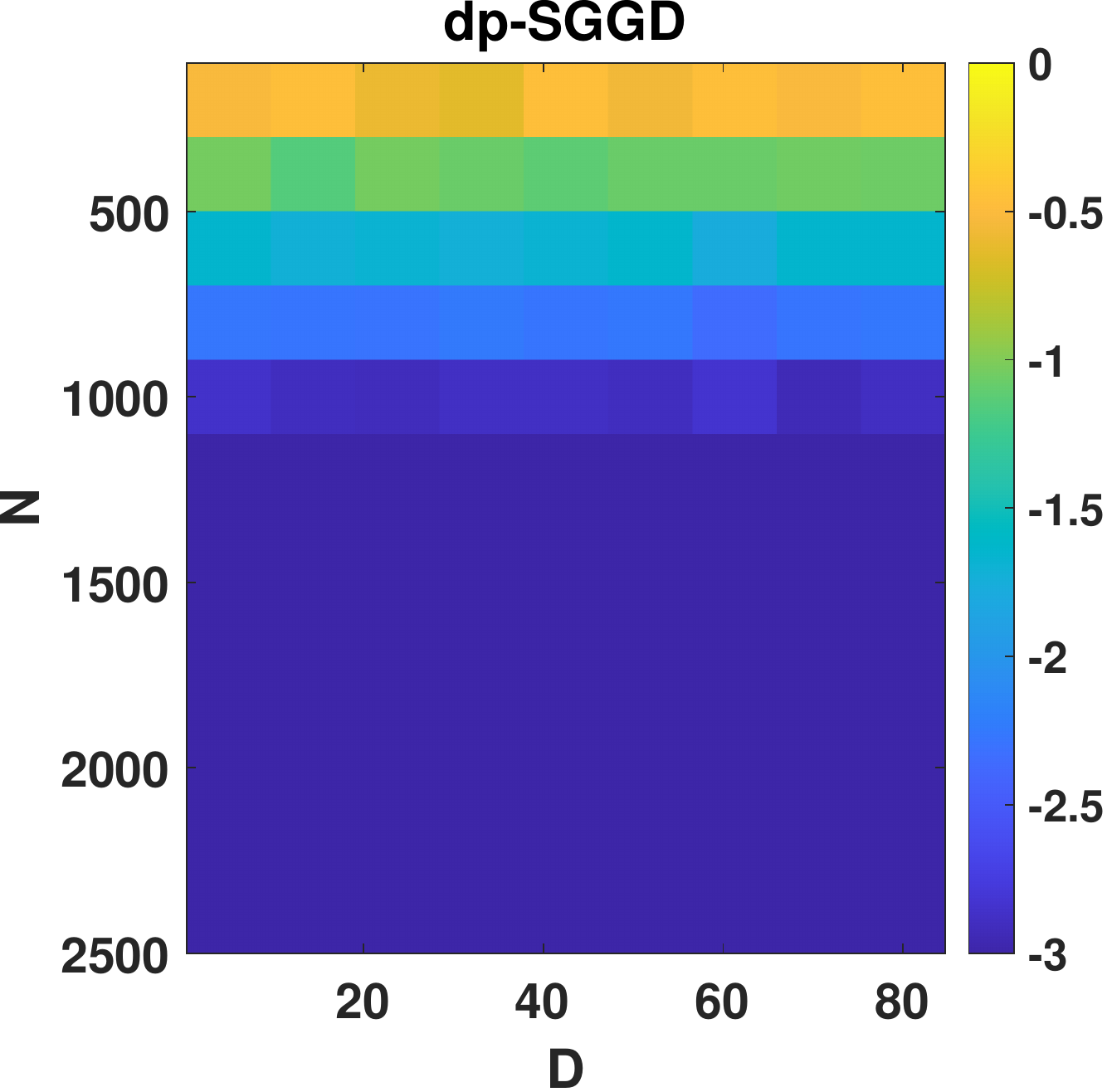}
        \caption{Phase transition plot for the number of points, $N$, versus the dimension, $D$. Each square represents the average log-distance between the final iterate and the true subspace for each algorithm.  The nonconvex methods outperform the convex methods, and the stochastic method is able to perform well for much larger $D$ due to the smaller noise required with stochastic gradients.}\label{fig:NvsD}
\end{figure}

For the second experiment, \autoref{fig:NvsD} gives a phase transition plot of $N$ vs.~$D$. The data parameters are $r=2$, percentage of inliers is $0.5$, and the total number of iterations of each algorithm is $T = 2N$, and we set the privacy parameters to be $\epsilon = 0.8$ and $\delta = 1/\sqrt{N}$. The step size for the dp-(S)GD-REAP algorithms is $\eta_k = 8/\sqrt{k}$, and the step size for dp-(S)GGD is $\eta_k = 1 / 2^{\lfloor k/50 \rfloor}$. Each algorithm is run 50 times and we display the average of the log-errors of the final iterate. In the non-stochastic case, the dp-GGD method outperforms the dp-GD-REAP method. Furthermore, the stochastic versions take a smaller noise, and so the methods are able to better approximate the underlying subspace for much larger $D$s. Finally, as predicted by the theory, the approximations for dp-(S)GGD are much more accurate than those for dp-(S)GD-REAP.

\subsection{Stylized Application: Modified POPRES}

To test on higher-dimensional data with some real characteristics, we create a stylized dataset. It aims to imitate the Population Reference Sample (POPRES) database extracted by~\citet{novembre2008europe}. This highly private database includes 3,192 European individuals with 500,568 alleles at SNP loci.  \citet{novembre2008europe} filtered SNPs and screened individuals to reduce the dataset to a sample of $N=1,387$ individuals and $D=197,146$ SNPs. They applied PCA with $r=2$ to the reduced data and demonstrated that the genetic information of the selected sample correlates with a geographical map of Europe. 

In view of our experience with the POPRES database, we find several issues with directly using the procedure of \citet{novembre2008europe} when addressing the machine learning community. First, POPRES is not publicly available. Second, the suggested preprocessing of \citet{novembre2008europe} raises some questions about the meaningful selection of reduced coordinates and individuals for which a desired correlation with a given map can be demonstrated. Furthermore, the reporting on the selection of individuals (supp.~material of \citet{novembre2008europe}) seems to reveal some private information.

In order to avoid these sensitive issues, we generated a stylized application motivated by the work of \citet{novembre2008europe}. We used the publicly available dataset provided on Github by the authors of \citet{novembre2008europe}. It was obtained by applying (non-private) PCA with $r=20$ to their reduced data, so the provided data matrix $\bX$ has size $1387 \times 20$.
To simulate high-dimensional SNP data and further privatize $\bX$, we transform it as follows: We chose $D=10,000$ and multiplied $\bX$ by a random $20 \times 10,000$ Gaussian orthogonal ensemble (GOE) matrix to get $\bX'$. For outliers, we generated a $1,000 \times 30$ random matrix of uniform i.i.d.~elements in $[-0.5,0.5]$ and multiplied this matrix by a random 
$30 \times 10,000$ GOE matrix. We thresholded the inlier and outlier matrices to obtain the three values -1, 0 and 1 to express alleles, which we then recode as 0, 1, 2 (see details in supplemental material). 
We concatenated the two matrices to form an inlier-outlier $2,387 \times 10,000$ matrix $\bY$ with elements in $\{0,1,2\}$.

\begin{figure}[h!]
     \centering
         \includegraphics[width=.24\columnwidth]{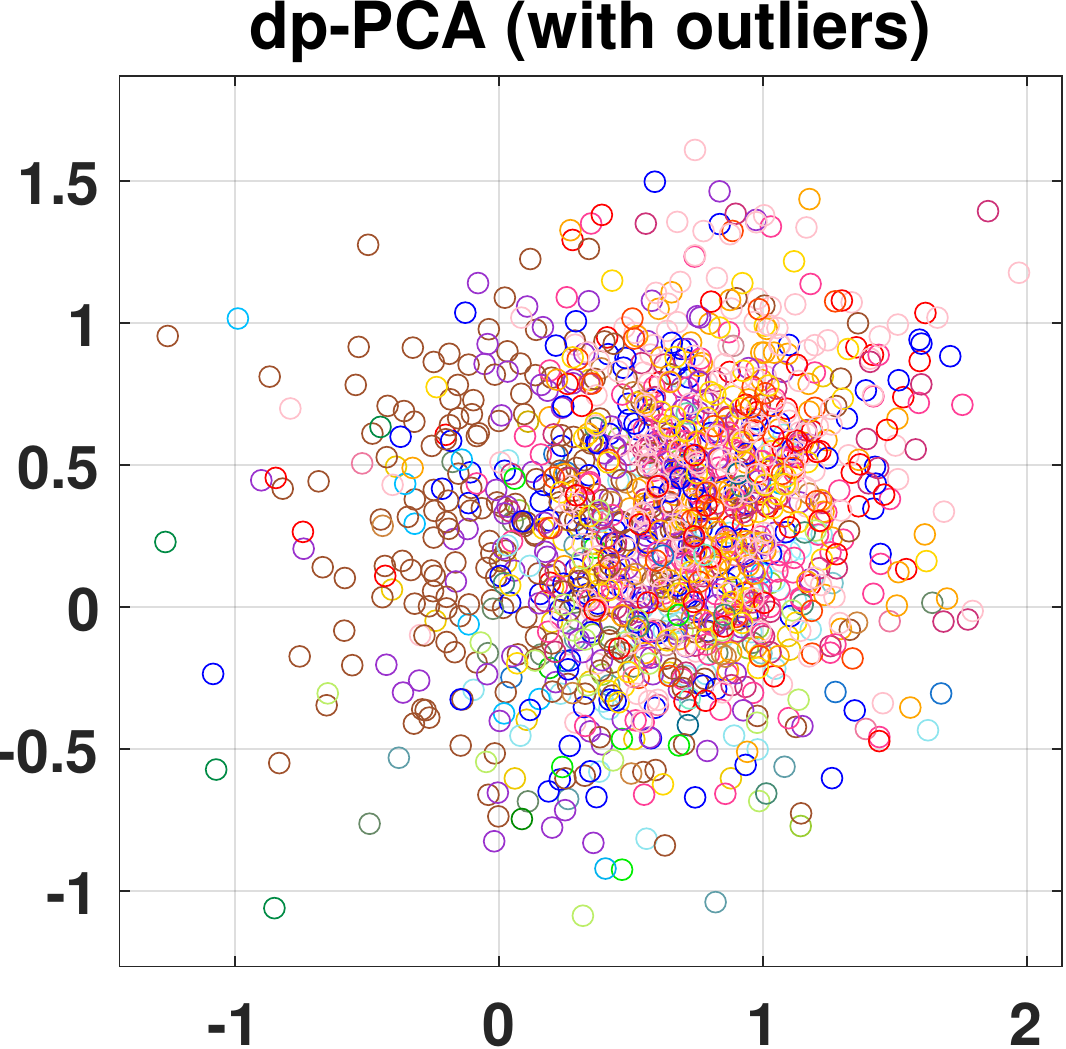}
         \includegraphics[width=.24\columnwidth]{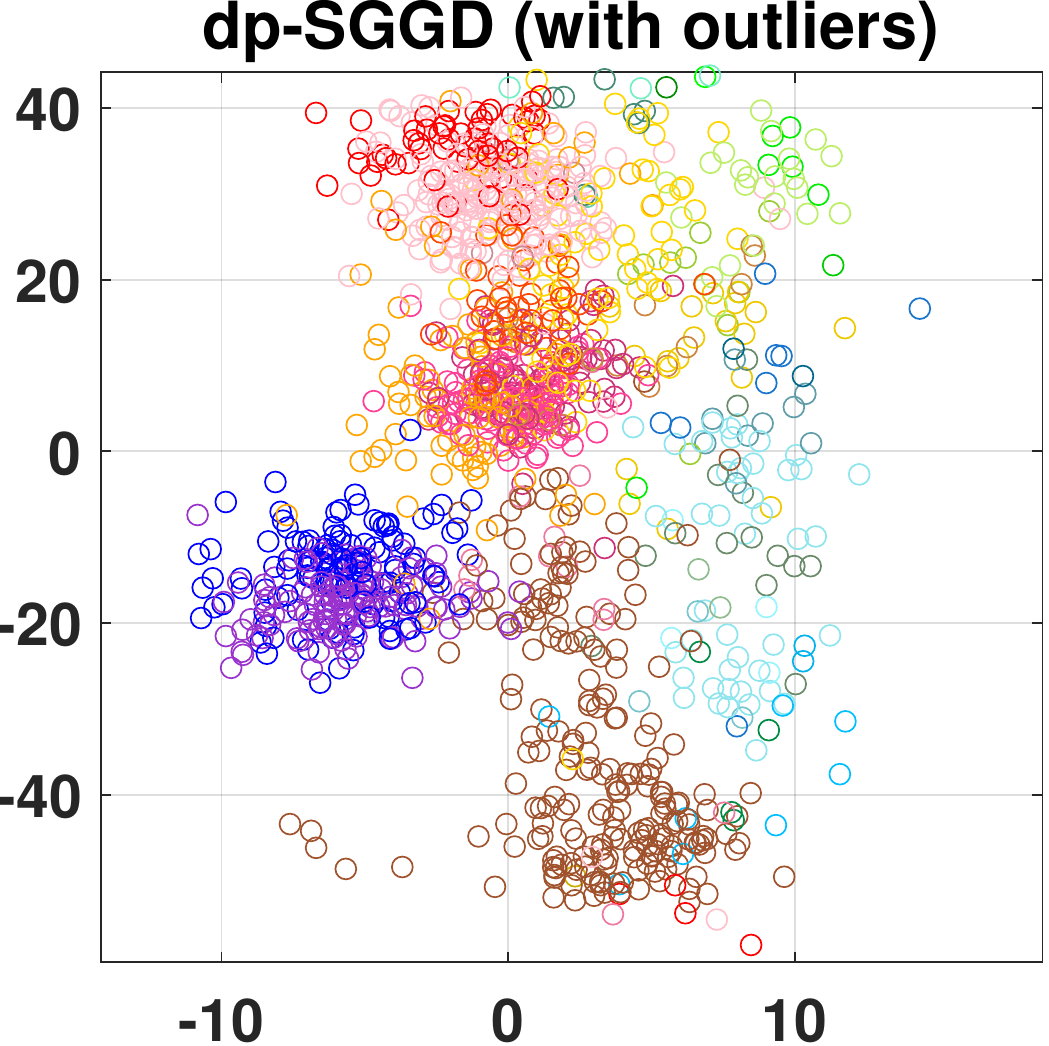}
         \includegraphics[width=.24\columnwidth]{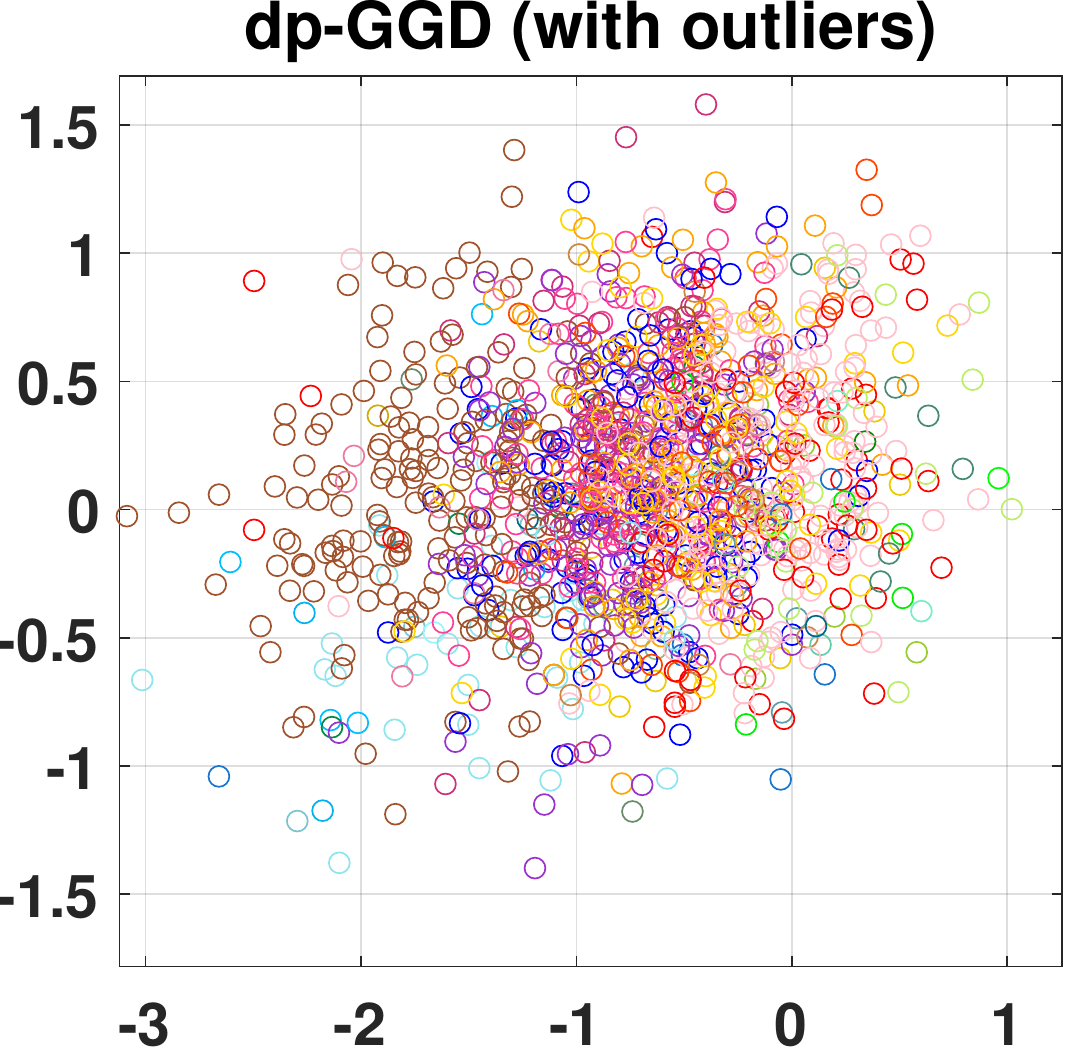}
         \includegraphics[width=.23\columnwidth]{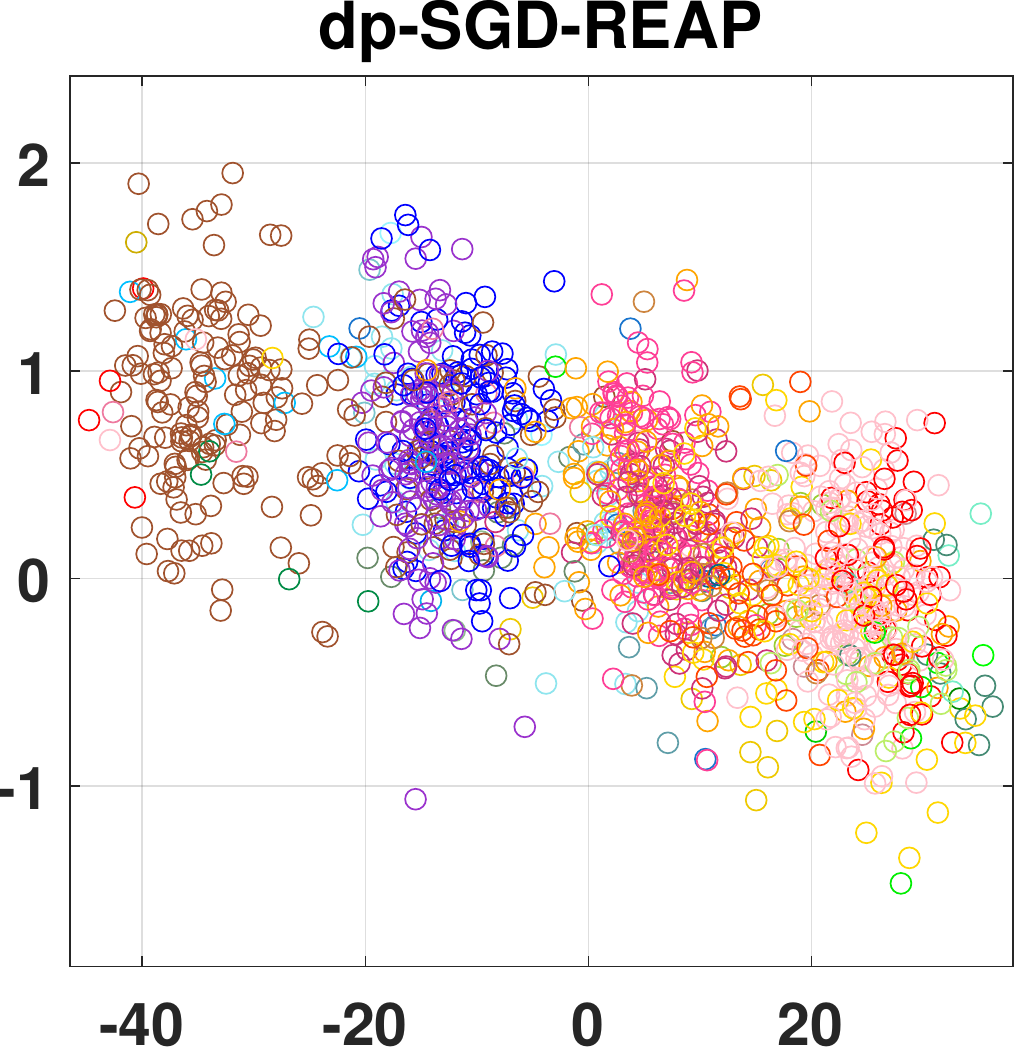}
        \caption{Recovered projections for the stylized POPRES dataset. Each algorithm is run on a synthetically generated gene matrix which mimics the original SNP data with $N = 2387$ and $D = 10000$. Out of these points, $1387$ lie close to the underlying subspace, which recovers the shape of Europe, and 1000 outliers are generated to lie in the ambient space. We note that dp-SGGD is able to recover the correct shape of Europe, unlike 
        other algorithms.}
        \label{fig:stylizedPOPRES}
\end{figure}

\autoref{fig:stylizedPOPRES} demonstrates the application of dp-PCA \citep{chaudhuri2013near}, dp-SGGD, dp-GGD and dp-SGD-REAP to $\bY$, and then plotting the projection of only the inliers (which are also in $\bX'$). We note that dp-PCA, dp-GGD and dp-SGD-REAP are unable to recover the target 2-dimensional subspace which indicates the shape of Europe, whereas the embedding of dp-SGGD is relatively successful in doing this. Additional figures and runtimes are included in the supplemental material.






\section{Conclusion}

In this work, we initiate the first study of differentially private ORPCA algorithms. Our results demonstrate the distinct advantages of taking a nonconvex geometric approach to solving the ORPCA problem privately. In particular, we show that the nonconvex dp-(S)GGD algorithm converges linearly to the underlying subspace under a standard assumption of stability, in contrast to the convex method that only converges sublinearly. The techniques we use to guarantee the nonconvex dp-(S)GGD are interesting in their own right because they are the first proofs of convergence for stochastic methods in nonconvex formulation of least absolute deviations for ORPCA. Furthermore, our experiments confirm our results and demonstrate the advantages of dp-GGD and dp-SGGD. In fact, dp-SGGD seems to be superior to dp-GGD due to its ability to use smaller noise in the Gaussian mechanism.

It would be interesting to further extend the NGGD results to studying the mixing of Langevin dynamics.


There are directions to explore in future  work. The following limitations are of main interest to theoreticians. We only focus here on large $N$, where some constants depend on $D$. It would be interesting to study the high-dimensional regime; however, even current works on dp-PCA do not seem to apply to this regime. There are also limitations due to the theoretical setting of ORPCA. 
First, we consider here the common setting of inliers lying exactly on the subspace and it would also be interesting to consider the interplay of noisy inliers and differentially private subspace recovery. Second, we assume centered data and search for linear subspaces and it will be good to justify differentially private centering approaches or extend this work searching for affine subspaces.
Finally, robustness can enhance privacy~\citep{dwork2009differential}, but we did not explore this in the main text.


In addition, more practical limitations are as follows. First, we lack experiments on real data, though we explained the difficulty of working with and reporting results of the POPRES data. Second, while we theoretically verify privacy, we do not yet have a good test to verify that the algorithms are in fact private. Third, we do not know if our bounds are optimal, and it would be good to tighten these results as well as prove lower bounds. Fourth, the result for the nonconvex case is only local, and so it is unclear how the methods perform in general settings. Fifth, the choice of parameters is not sufficiently clear in the nonconvex case, and even in the convex case the estimates are only approximate. At last, we require the strong assumption of an inlier-outlier model, and it is not clear in general when data may meet this assumption.

\newpage
\bibliographystyle{abbrvnat}
\bibliography{refs}

\newpage

\newpage

\appendix

\begin{center}
    \Large \textbf{Supplementary Material}
\end{center}

\section{REAPER Algorithms}

In Algorithm~\ref{alg:sgd}, we give the dp-SGD-REAPER Algorithm.  The full dp-GD-REAP algorithm follows the same steps, but takes the full dataset as a batch at each iteration, uses the noise variance $\sigma^2 \geq 32 T \log^2(T/\delta)/(\varepsilon^2 N^2)$~\citep{talwar2014private}. The projection step into $\cH$ that is in line 6 of Algorithm~\ref{alg:sgd} can be implemented as the water-filling procedure of~\citet{lerman2015robust}.

We can also use mirror descent to minimize the REAPER objective. This results in the dp-MD-REAP algorithm, which we write in Algorithm~\ref{alg:md}. As in the previous case, the full MD algorithm takes the full dataset as a batch at each iteration and uses the noise variance $\sigma^2 \geq 32 T \log^2(T/\delta)/(\varepsilon^2 N^2)$~\citep{talwar2014private}. The mirror map is the von Neumann entropy, $\Psi(\cdot) = \frac{1}{4}(\Tr(\cdot \log(\cdot)) + \log(D))$, and this approach was used before in~\citet{goes2014robust}. It turns out that the Bregman projection for this choice of mirror map just corresponds to trace renormalization.

\begin{algorithm2e}[ht]
\SetAlgoLined
\KwIn{Dataset $\mathcal{X}$; subspace dimension $r$; step sizes $\eta_k$; max number of step $T$; privacy parameters: $(\epsilon,\delta)$; regularization parameter:$\lambda$;  batch size $B$; noise variance $\sigma^2 = c_2\frac{(B/N)^2 T \log(1/\delta)}{\varepsilon^2 N^2}$.}
$\bP_0 \gets \bm{A}^T\bm{A}$, where $\bm{A} \in \mathbb{R}^{D\times D}$ and $\bm{A}_{ij}\sim N(1,0.01)$\;
\For{$k = 1:T$}{
Sample a batch $\mathcal{B}_k \subset \cX$ of size $B$ uniformly with replacement\;
Sample $\bB_k \in \R^{D \times D}$ such that $\bB_k = \bB_k^\top$ and $\bB_{k,ij} \sim N(0, \sigma^2)$\;
 $\tilde{\nabla}F(\bmP_k;\mathcal{B}_k) = \nabla F(\bmP_k;\mathcal{B}_k) + \bm{B}_k$ (using \eqref{eq:reaper_grad})\;
Update: $\bmP_{k+1} := \argmin_{\bP \in \cH} \Big\| \bP - \Big[\left(\bmP_k - \eta_k\left( \tilde{\nabla}F(\bmP_k;\mathcal{B}_k)\right)\right)\Big] \Big\|$\;
}
\KwOut{$\tilde{\bmP} \gets \frac{1}{T} \sum_{k=1}^T \bmP_k$}
\caption{dp-SGD-REAP}
\label{alg:sgd}
\end{algorithm2e}

\begin{algorithm2e}[ht]
\KwIn{Dataset $\mathcal{X}$; subspace dimension $r$; step sizes $\eta_k$; max number of step $T$; privacy parameters: $(\epsilon,\delta)$; regularization parameter:$\lambda$; batch size $B$; noise variance $\sigma^2 = c_2\frac{(B/N)^2 T \log(1/\delta)}{\varepsilon^2 N^2}$. 
}
$\bP_0 \gets \bm{A}^T\bm{A}$, where $\bm{A} \in \mathbb{R}^{D\times D}$ and $\bm{A}_{ij}\sim N(1,0.01)$\;
\For{$k = 1:T$}{
Sample a batch $\mathcal{B}_k \subset \cX$ of size $B$ uniformly with replacement\;
Sample $\bB_k \in \R^{D \times D}$ such that $\bB_k = \bB_k^\top$ and $\bB_{k,ij} \sim N(0, \sigma^2)$\;
 $\tilde{\nabla}F(\bmP_k;\mathcal{B}_k) = \nabla F(\bmP_k;\mathcal{B}_k) + \bm{B}_k$ (using \eqref{eq:reaper_grad})\;
$ \bmP_{k+1} = \exp\left[\log(\bmP_k) - \eta_k \tilde{\nabla} F (\bmP_k;\mathcal{B}_k )\right]$\;
$\bmP_{k+1} = r \bmP_{k+1} / \Tr(\bmP_{k+1})$\;
}
\KwOut{$\tilde{\bmP} \gets \frac{1}{T} \sum_{k=1}^T \bmP_k$}
\caption{dp-SMD-REAP}
\label{alg:md}
\end{algorithm2e}

\section{Supplemental Theory}

\subsection{Projection and Geodesic Gradient Descent}

While the methods in~\ref{eq:nggd} and~\eqref{eq:sggd} are projected gradient methods, due to \citet{gawlik2018high}, these iteration very well approximate geodesics:
\begin{align}
    \cP_{O(D,r)}({\bV}_k - &\eta_k (\nabla F({\bV}_k;\cX) + \bB_k)) = \\ \nonumber &\Exp_{\bV_k} (- \eta_k (\nabla F(\bV_k;\cX) + \bB_k)) + O(\eta_k^3).
\end{align}

\subsection{Doob's Maximial Inequality}

For the convergence of the dp-GGD, SGGD, and dp-SGGD algorithms, we use the following version of \emph{Doob's maximal inequality}.

\begin{theorem}[Doob's maximal inequality]
    Suppose that $S_t = \sum_{i=1}^t X_i$ is a martingale with respect to the filtered probability space $(\Omega, \cF, \{\cF_t\}, \P)$, then
    \begin{equation}
        \P\left( \max_{t=1, \dots, T} |S_k| \geq x  \right) \leq \frac{\E |S_T|^2}{x^2}.
    \end{equation}
\end{theorem}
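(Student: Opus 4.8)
The plan is to recognize this as the $L^2$ (Kolmogorov-type) form of Doob's maximal inequality and to prove it by a first-passage stopping-time decomposition. First I would fix $x>0$, set $A=\{\max_{1\le t\le T}|S_t|\ge x\}$, and introduce the stopping time $\tau=\min\{t\le T:\ |S_t|\ge x\}$, with the convention that the minimum over the empty set leaves the realization outside $A$. Writing $A_k=\{\tau=k\}$ for $k=1,\dots,T$, these events are disjoint, each $A_k$ is $\cF_k$-measurable, and $A=\bigsqcup_{k=1}^T A_k$. It then suffices to bound $\E[S_T^2]$ from below by $x^2\,\P(A)$, after which the claim follows by rearrangement.

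The key step exploits the orthogonality of martingale increments over the past. On each $A_k$ I would write $S_T=S_k+(S_T-S_k)$ and expand
\[
\E\big[S_T^2\,\mathbb{1}_{A_k}\big]=\E\big[S_k^2\,\mathbb{1}_{A_k}\big]+2\,\E\big[S_k(S_T-S_k)\,\mathbb{1}_{A_k}\big]+\E\big[(S_T-S_k)^2\,\mathbb{1}_{A_k}\big].
\]
Since $S_k\,\mathbb{1}_{A_k}$ is $\cF_k$-measurable and $\E[S_T-S_k\mid\cF_k]=0$ by the martingale property, conditioning on $\cF_k$ and using the tower rule kills the cross term. The final term is nonnegative, and on $A_k$ one has $|S_k|\ge x$, so $\E[S_k^2\,\mathbb{1}_{A_k}]\ge x^2\,\P(A_k)$; hence $\E[S_T^2\,\mathbb{1}_{A_k}]\ge x^2\,\P(A_k)$.

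Summing over the disjoint pieces and using $S_T^2\ge 0$ gives
\[
\E[S_T^2]\ge\sum_{k=1}^T\E\big[S_T^2\,\mathbb{1}_{A_k}\big]\ge x^2\sum_{k=1}^T\P(A_k)=x^2\,\P(A),
\]
which is exactly the asserted bound. An equivalent, slicker route I would mention is to observe that $|S_t|^2$ is a nonnegative submartingale (Jensen applied to $u\mapsto u^2$), note that $\{\max_t|S_t|\ge x\}=\{\max_t|S_t|^2\ge x^2\}$, and invoke the standard submartingale maximal inequality at level $x^2$. This is a classical result, so there is no genuine obstacle; the only point requiring care is the vanishing of the cross term, which depends on correctly conditioning the future increment $S_T-S_k$ on $\cF_k$ and on the $\cF_k$-measurability of $A_k$. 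The standing hypothesis $S_T\in L^2$ ensures that every expectation above is finite.
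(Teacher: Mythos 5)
Your proof is correct. Note, however, that the paper itself offers no proof of this statement: it is recorded in the appendix as a classical fact (the Kolmogorov/Doob $L^2$ maximal inequality) and used as a black box in the convergence proofs, so there is no in-paper argument to compare against. Your first-passage decomposition is the canonical textbook proof, and the two delicate points are handled properly: the cross term $\E\big[S_k(S_T-S_k)\,\mathbb{1}_{A_k}\big]$ vanishes because $S_k\mathbb{1}_{A_k}$ is $\cF_k$-measurable while $\E[S_T-S_k\mid\cF_k]=0$ (its integrability follows from $S_k=\E[S_T\mid\cF_k]\in L^2$ whenever $S_T\in L^2$, and if $\E S_T^2=\infty$ the stated bound is vacuous anyway), and on $A_k$ one indeed has $|S_k|\ge x$. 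Your alternative route, applying the submartingale maximal inequality to the nonnegative submartingale $S_t^2$ at level $x^2$, is equally valid and is essentially the same argument repackaged. One cosmetic remark: the statement as printed has an index mismatch, $\max_{t=1,\dots,T}|S_k|$, which you correctly read as $\max_{k=1,\dots,T}|S_k|$.
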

In the following proofs, the corresponding filtered probability space should be apparent from context.

\subsection{Differential Privacy of dp-GGD}

\begin{proof}[Proof of Theorem~\ref{thm:priv}]
The result for dp-GGD is just a rehash of the proof of~\citet[Theorem 6.1]{wang2017differentially} using the strong composition theorem.
The result for dp-SGGD is proven in ~\citet{Abadi2016dl_with_privacy} using the moment accounting method.
\end{proof}

\subsection{Convergence of Stochastic Mirror Descent}
\label{app:sgdsmd}

Here we prove Theorem~\ref{thm:talwar}.
\begin{proof}
    
By a classic result that can be found, for example, in~\citet{bubeck2015convex}, if the stochastic oracle is such that $\E \|(\nabla G(\bP;\cB)) + \bB \|^2 \leq R^2$,
$$\E G(\bar{\bP}) - \min_{\bP} G(\bP) \leq R \mathsf{D} \sqrt{\frac{2}{T}}.$$

In this statement, the randomness is taken over the randomness of the minibatches $\cB$, as well as the randomness in the Gaussian noise $\bB$. We have
$$\E \|(\nabla G(\bP;\cB)) + \bB \|^2 \leq \E \|(\nabla G(\bP;\cB)) \|^2 + \E \|\bB \|^2 $$
For the first, we have that $\E \|(\nabla G(\bP;\cB)) \|^2 \leq 1$. For the second,  
\begin{align*}
    \E \|\bB \|^2 &= \sigma^2 \E \|\bZ\|^2 \lesssim \sigma^2 M  \\
    &= c_2\frac{B^2 T \log^2(1/\delta)}{\varepsilon^2 N^2} M.
\end{align*}
Plugging in,
\begin{align*}
    \E G(\bar{\bP}) - \min_{\bP} G(\bP) &\leq \sqrt{1 + c_2\frac{B^2 T \log^2(1/\delta)}{\varepsilon^2 N^2} M} \mathsf{D} \sqrt{\frac{2}{T}}.
\end{align*}
If $T = O(N^2 \epsilon^2)$, then
\begin{align*}
    \E G(\bar{\bP}) - \min_{\bP} G(\bP) &\leq \sqrt{1 + c_2 B^2 \log^2(1/\delta)}\mathsf{D}\frac{1}{\epsilon N}.
\end{align*}

\end{proof}





\subsection{Proof of Theorem~\ref{thm:1overN_factor}}

Theorem~\ref{thm:talwar} only guarantees an approximation to the minimizer of $F$ after $T$ iterations. To turn this then into a result of approximate recovery for REAPER that we see in Theorem~\ref{thm:1overN_factor}, we rely on the rate of ascent for the perturbed objective by Lemma 2.3 of \citet{lerman2015robust}.
\begin{lemma}[\cite{lerman2015robust}, Lemma 2.3]\label{lem:reaperpert}
   $$ G(\bP_\star + \bDelta) - G(\bP_\star) \geq \mathcal S \| \bDelta\|_{*}$$
\end{lemma}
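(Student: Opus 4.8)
The plan is to prove this sharpness estimate directly from the definition $G(\bP;\cX) = \frac1N\sum_i\|(\bI-\bP)\bx_i\|$, exploiting that $\bP_\star$ projects onto $L_\star$, so inliers have zero residual while outliers do not. Since $\bP_\star$ and $\bP_\star+\bDelta$ both lie in $\cH$, the perturbation $\bDelta$ is symmetric with $\Tr\bDelta=0$. Working in the orthogonal splitting $\R^D=L_\star\oplus L_\star^\perp$ and using $\bzero\preceq\bP_\star+\bDelta\preceq\bI$, I would extract the block sign conditions by compressing: $\bQ_\star(\bP_\star+\bDelta)\bQ_\star\succeq\bzero$ gives $\bQ_\star\bDelta\bQ_\star\succeq\bzero$, while $\bP_\star(\bI-\bP_\star-\bDelta)\bP_\star\succeq\bzero$ gives $\bP_\star\bDelta\bP_\star\preceq\bzero$. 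Combined with $\Tr\bDelta=0$ (the cross blocks being traceless), this yields the key identity $\|\bQ_\star\bDelta\bQ_\star\|_*=\Tr(\bQ_\star\bDelta\bQ_\star)=-\Tr(\bP_\star\bDelta\bP_\star)=\|\bP_\star\bDelta\bP_\star\|_*$, which I use to compare block norms at the end.

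First I would write $G(\bP_\star+\bDelta)-G(\bP_\star)=\frac1N\sum_{\bx\in\cX}\big(\|(\bQ_\star-\bDelta)\bx\|-\|\bQ_\star\bx\|\big)$ and split into inliers and outliers. For an inlier $\bQ_\star\bx=\bzero$, so $\bx=\bP_\star\bx$ and its term is exactly $\|\bDelta\bP_\star\bx\|$; taking an SVD $\bDelta\bP_\star=\sum_{j\le d}s_j\bw_j\bu_j^\top$ (right singular vectors $\bu_j\in L_\star$, since $\bDelta\bP_\star$ has rank $\le d$), the bound $\|\cdot\|_2\ge d^{-1/2}\|\cdot\|_1$ together with the permeance estimate $\frac1N\sum_{\Xin}|\bu_j^\top\bP_\star\bx|\ge\cPREAP$ gives $\frac1N\sum_{\Xin}\|\bDelta\bP_\star\bx\|\ge\frac{\cPREAP}{\sqrt d}\sum_j s_j=\frac{\cPREAP}{\sqrt d}\|\bDelta\bP_\star\|_*$. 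For an outlier I would use convexity of the norm, $\|\bQ_\star\bx-\bDelta\bx\|-\|\bQ_\star\bx\|\ge-\langle\widetilde{\bQ_\star\bx},\bDelta\bx\rangle$, sum over $\Xout$, and apply the nuclear/operator duality $|\langle\bM,\bDelta\rangle|\le\|\bM\|_{\mathrm{op}}\|\bDelta\|_*$ to $\bM=\frac1N\widetilde{\bQ_\star\bXout}\bXout^\top$; since $\|\bM\|_{\mathrm{op}}\le\frac1N\|\widetilde{\bQ_\star\bXout}\|\,\|\bXout\|=\cAREAP$, the outlier sum is at least $-\cAREAP\|\bDelta\|_*$.

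The main obstacle is the final geometric step: the inlier estimate only controls $\|\bDelta\bP_\star\|_*$, whereas the claim is stated in $\|\bDelta\|_*$, so I must show $\|\bDelta\|_*\le 4\|\bDelta\bP_\star\|_*$ using feasibility. Here the singular-value interlacing $\bA^\top\bA+\bB^\top\bB\succeq\bA^\top\bA$ applied to the stacked blocks of $\bDelta\bP_\star$ gives $\|\bDelta\bP_\star\|_*\ge\max(\|\bP_\star\bDelta\bP_\star\|_*,\|\bQ_\star\bDelta\bP_\star\|_*)$, while subadditivity gives $\|\bDelta\bQ_\star\|_*\le\|\bP_\star\bDelta\bQ_\star\|_*+\|\bQ_\star\bDelta\bQ_\star\|_*$. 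Using symmetry ($\|\bP_\star\bDelta\bQ_\star\|_*=\|\bQ_\star\bDelta\bP_\star\|_*$) and the trace identity from the first paragraph ($\|\bQ_\star\bDelta\bQ_\star\|_*=\|\bP_\star\bDelta\bP_\star\|_*$), I get $\|\bDelta\bQ_\star\|_*\le 2\|\bDelta\bP_\star\|_*$, hence $\|\bDelta\|_*\le\|\bDelta\bP_\star\|_*+\|\bDelta\bQ_\star\|_*\le 3\|\bDelta\bP_\star\|_*$. Substituting into the two sums yields $G(\bP_\star+\bDelta)-G(\bP_\star)\ge\big(\tfrac{\cPREAP}{3\sqrt d}-\cAREAP\big)\|\bDelta\|_*\ge\big(\tfrac{\cPREAP}{4\sqrt d}-\cAREAP\big)\|\bDelta\|_*=\cSREAP\|\bDelta\|_*$, as required. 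The delicate bookkeeping is getting the sign and trace relations among the blocks of a feasible $\bDelta$ right and tracking the $\sqrt d$ lost in the $\ell_1$–$\ell_2$ comparison; the conservative constant $4$ in $\cSREAP$ leaves room to spare.
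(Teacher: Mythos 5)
Your proposal is correct: the feasibility-derived block conditions ($\bQ_\star\bDelta\bQ_\star\succeq\bzero$, $\bP_\star\bDelta\bP_\star\preceq\bzero$, $\Tr\bDelta=0$), the inlier permeance bound via the SVD of $\bDelta\bP_\star$ and the $\ell_1$--$\ell_2$ comparison costing $\sqrt d$, the outlier bound via the subgradient inequality and nuclear/spectral duality giving $\cAREAP\|\bDelta\|_*$, and the block comparison $\|\bDelta\|_*\le 3\|\bDelta\bP_\star\|_*$ all check out, with the final absorption of the constant ($\cPREAP/(3\sqrt d)\ge\cPREAP/(4\sqrt d)$ since $\cPREAP\ge 0$) legitimately recovering the stated $\cSREAP$. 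The paper itself offers no proof --- it imports the lemma verbatim from \citet{lerman2015robust} --- and your blind reconstruction follows essentially the same route as the original REAPER argument (permeance/alignment split plus a feasibility-based nuclear-norm block comparison, where the original's more lossy bookkeeping yields the factor $4$ rather than your $3$), so there is nothing to correct.
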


This states that the objective grows quickly when one is far from $\bP_\star$. Therefore, if the excess risk is on the order of $1/N$, then the perturbation can also be bounded on the order of $1/N$. This is stated in the following theorem.
\begin{theorem}\label{thm:reapdist}
Suppose that an algorithm to solve the REAPER problem yields a point $\hat \bP$ such that $F(\hat \bP) - G(\bP_\star) \leq \epsilon$. Let $\hat L$ be the principal $r$-subspace of $\hat \bP$. Then,
\begin{equation}
    d(\hat L, L_\star) \leq \frac{\pi}{2} \frac{\epsilon}{\cS}
\end{equation}
\end{theorem}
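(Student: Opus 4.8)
The plan is to split the argument into two stages: first convert the objective-value guarantee into a nuclear-norm bound on $\hat\bP-\bP_\star$ via the sharpness estimate of Lemma~\ref{lem:reaperpert}, and then convert that nuclear-norm bound into a principal-angle bound. (I read the hypothesis $F(\hat\bP)-G(\bP_\star)\le\epsilon$ as $G(\hat\bP)-G(\bP_\star)\le\epsilon$, i.e.\ an excess-risk bound for the REAPER objective $G$; under $\cS>0$ the minimizer of $G$ over $\cH$ is exactly $\bP_\star$, so this is precisely the excess risk supplied by Theorem~\ref{thm:talwar}.) For the first stage I set $\bDelta=\hat\bP-\bP_\star$; since $\hat\bP\in\cH$ is feasible, $\bDelta$ is an admissible perturbation, and Lemma~\ref{lem:reaperpert} gives $\cS\,\|\hat\bP-\bP_\star\|_*\le G(\hat\bP)-G(\bP_\star)\le\epsilon$, hence $\|\hat\bP-\bP_\star\|_*\le\epsilon/\cS$.

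The second stage is the crux, and its difficulty is that $\hat\bP$ is a \emph{Fantope} point, not a genuine projection, so there need be no spectral gap at the $r$-th eigenvalue and a Davis--Kahan $\sin\Theta$ bound cannot be applied directly. I would instead pass to the projection $\hat\bP_L$ onto the principal $r$-subspace $\hat L$ and control the discrepancy $\|\hat\bP-\hat\bP_L\|_*$. Writing the eigenvalues of $\hat\bP$ as $\mu_1\ge\cdots\ge\mu_D$ in $[0,1]$ with $\sum_i\mu_i=r$, an elementary computation gives $\|\hat\bP-\hat\bP_L\|_*=2\Delta$ with $\Delta:=r-\sum_{i\le r}\mu_i$. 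The key inequality is then $\|\hat\bP-\hat\bP_L\|_*\le\|\hat\bP-\bP_\star\|_*$, i.e.\ the principal projection is the nuclear-norm-nearest rank-$r$ projection to $\hat\bP$; I would prove it not by a general optimality theorem but directly. Testing the nuclear norm against $M=\bI-2\bP_\star$ (which has $\|M\|_{\mathrm{op}}=1$) gives $\|\hat\bP-\bP_\star\|_*\ge\Tr((\hat\bP-\bP_\star)M)=2\Tr(\bQ_\star\hat\bP)$, while Ky Fan's maximum principle gives $\Tr(\bP_\star\hat\bP)\le\sum_{i\le r}\mu_i=r-\Delta$, so $\Tr(\bQ_\star\hat\bP)=r-\Tr(\bP_\star\hat\bP)\ge\Delta$; combining yields $\|\hat\bP-\bP_\star\|_*\ge 2\Delta=\|\hat\bP-\hat\bP_L\|_*$. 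By the triangle inequality this controls the hard-projection distance, $\|\hat\bP_L-\bP_\star\|_*\le\|\hat\bP-\hat\bP_L\|_*+\|\hat\bP-\bP_\star\|_*\le 2\|\hat\bP-\bP_\star\|_*$.

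Finally I would translate the projection distance into the principal-angle metric. For two rank-$r$ orthoprojections one has $\|\hat\bP_L-\bP_\star\|_*=2\sum_{j=1}^r\sin\theta_j$, where the $\theta_j$ are the principal angles between $\hat L$ and $L_\star$ (the eigenvalues of $\hat\bP_L-\bP_\star$ being $\pm\sin\theta_j$ together with zeros). Using $\|\cdot\|_2\le\|\cdot\|_1$ and Jordan's inequality $\theta\le\tfrac{\pi}{2}\sin\theta$ on $[0,\pi/2]$,
\begin{equation*}
 d(\hat L,L_\star)=\Big(\sum_{j=1}^r\theta_j^2\Big)^{1/2}\le\sum_{j=1}^r\theta_j\le\frac{\pi}{2}\sum_{j=1}^r\sin\theta_j=\frac{\pi}{4}\,\|\hat\bP_L-\bP_\star\|_*\le\frac{\pi}{2}\,\|\hat\bP-\bP_\star\|_*\le\frac{\pi}{2}\frac{\epsilon}{\cS}.
\end{equation*}
This chain reproduces the stated constant $\pi/2$ exactly, and the factor of $2$ lost in the hard-projection reduction is precisely what separates it from the tighter $\pi/4$ one would obtain if $\hat\bP$ were already a projection. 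The only genuinely delicate point, as noted, is the absence of a spectral gap for $\hat\bP$; the Ky~Fan argument is what lets me sidestep Davis--Kahan entirely.
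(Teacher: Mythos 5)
Your proof is correct, and its second stage takes a genuinely different route from the paper's. The paper disposes of the theorem in one line: combine Lemma~\ref{lem:reaperpert} (which, exactly as in your first stage, gives $\|\hat\bP-\bP_\star\|_{*}\le\epsilon/\cS$ once one reads $F(\hat\bP)$ as $G(\hat\bP)$ and uses feasibility $\hat\bP\in\cH$) with the Davis--Kahan $\sin\Theta$ theorem of \citet{Davis1970,yu2015useful}. You replace the Davis--Kahan step with an explicit and elementary Fantope-to-projection reduction: the spectral identity $\|\hat\bP-\hat\bP_L\|_{*}=2\Delta$, the duality test against $M=\bI-2\bP_\star$ combined with Ky Fan's maximum principle to obtain $\|\hat\bP-\hat\bP_L\|_{*}\le\|\hat\bP-\bP_\star\|_{*}$, the triangle inequality, the identity $\|\hat\bP_L-\bP_\star\|_{*}=2\sum_j\sin\theta_j$ for equal-rank orthoprojections, and Jordan's inequality $\theta\le\tfrac{\pi}{2}\sin\theta$; every one of these steps checks out. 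One caveat on your stated motivation: the claim that Davis--Kahan ``cannot be applied directly'' because $\hat\bP$ lacks a spectral gap is overstated. The variant the paper cites (Yu, Wang and Samworth) needs an eigengap in only one of the two matrices, and $\bP_\star$, being a rank-$r$ orthoprojection, has gap $\lambda_r-\lambda_{r+1}=1$; it yields $\bigl(\sum_j\sin^2\theta_j\bigr)^{1/2}\le 2\|\hat\bP-\bP_\star\|_F\le 2\|\hat\bP-\bP_\star\|_{*}$ and hence $d(\hat L,L_\star)\le\pi\,\epsilon/\cS$. (Only the classical Davis--Kahan, whose gap $1-\mu_{r+1}$ involves the $(r{+}1)$st eigenvalue of $\hat\bP$, degrades on the Fantope, and even there $1-\mu_{r+1}\ge 1/(r+1)$.) What your argument buys is that it is self-contained, avoids all gap bookkeeping, and recovers the exact constant $\pi/2$ in the statement---a straightforward invocation of the cited Davis--Kahan bound loses a factor of $2$; what the paper's route buys is brevity, at the cost of leaving the Fantope-versus-projection issue and the constant implicit.
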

\begin{proof}
This follows from combining Lemma~\ref{lem:reaperpert} with the Davis-Kahan $\sin \Theta$ Theorem~\citep{Davis1970,yu2015useful}.
\end{proof}

The proof of Theorem~\ref{thm:1overN_factor} follows by combining Theorem~\ref{thm:reapdist} with Theorem~\ref{thm:talwar}.

\subsection{dp-GGD Proofs}

\subsubsection{Proof of Theorem~\ref{thm:pca}}

Based on~\citet{jiang2016wishart}, we have
\begin{align*}
    \cos(\theta_1(L_{dp-PCA}, L_\star)) &= d_r^2(\bV_{dp-PCA}, \bV_\star) \leq \|\bV_{dp-PCA} \bV_{dp-PCA}^\top - \bV_\star \bV_\star^\top\|_2 \\
     &\leq \|\bV_{dp-PCA} \bV_{dp-PCA}^\top - \bV_{PCA} \bV_{PCA}^\top\|_2 + \|\bV_{PCA} \bV_{PCA}^\top - \bV_\star \bV_\star^\top\|_2 \\
     &\leq 2 \sqrt{d} \|\bW\|_2 + \gamma/4 \\
     &\leq 2 \sqrt{d}O(d\log(d)/(N \epsilon)) + \gamma/4,
\end{align*}
with high probability. For $N$ sufficiently large, we find 
$d_r^2(\bV_{dp-PCA}, \bV_\star) < \gamma$ with high probability.

\subsubsection{Proof of Theorem~\ref{thm:nggd}}
\label{subsec:proofnggd}




\begin{proof}

We can write 
\[
\sigma_r(\bV_\star^{\top} \bV_{k+1}) \geq \sigma_r(\bV_\star^\top \bV_k) + \Big(\bbeta_1^\top  \bV_\star^{\top}( - s(\sfgrad F(\bV_k;\cX) + \bB_k) \bbeta_2 \Big) - c(\cX)s^2.
\]
Taking one minus both sides yields
\[
d_r^2(\bV_\star, \bV_{k+1}) \leq d_r^2(\bV_\star^\top \bV_k) + s\Big(\bbeta_1^\top  \bV_\star^{\top}( (\sfgrad F(\bV_k;\cX) + \bB_k) \bbeta_2 \Big) + c(\cX)s^2.
\]
Using the fact that $\sin(\arccos(x)) = \sqrt{1-x^2}\geq \sqrt{1-x}$, $x \geq 0$, stability implies that
\[
    s\Big(\bbeta_1^\top  \bV_\star^{\top}( (\sfgrad F(\bV_k;\cX) + \bB_k) \bbeta_2 \Big) \leq -s \mathcal S_\gamma(\cX) d_r(\bV_\star, \bV_k) + s\bbeta_1^\top  \bV_\star^{\top}\bB_k\bbeta_2 
\]
Thus
\begin{equation}\label{eq:distdecr}
    d_r^2(\bV_\star, \bV_{k+1}) \leq \Big(1-\frac{s\cS_\gamma(\cX)}{d_r(\bV_\star^\top \bV_k)}\Big) d_r^2(\bV_\star,\bV_k) + s\bbeta_1^\top  \bV_\star^{\top}\bB_k\bbeta_2 + c(\cX)s^2.
\end{equation}
Let $m_T = \max_{j=1,\dots,T} d_r(\bV_\star, \bV_j)$. We can iteratively apply this to yield
\begin{align*}
    d_r^2(\bV_\star, \bV_{T}) \leq (1-s\cS_\gamma(\cX)/m_T)^T d_r^2(\bV_\star, \bV_T) + &\sum_{j=1}^{T} s(1-s\cS_\gamma(\cX)/m_T)^{T-j}\bbeta_1^\top  \bV_\star^{\top}\bB_k\bbeta_2 + \\
    &\sum_{j=1}^T (1-s\cS_\gamma(\cX)/m_T)^{T-j} c(\cX) s^2.
\end{align*}

\noindent
\textbf{Bounding the maximum:}
We proceed by first bounding $m_T$ in terms of $d_r^2(\bV_\star, \bV_0)$ and other quantities. To do this, first notice that this amounts to bounding $d_r^2(\bV_\star, \bV_k)$ for $k=1,\dots, T$. We can use~\eqref{eq:distdecr} along with $\sin(\arccos(x)) \geq 1-x$ to write
\[
d_r^2(\bV_\star, \bV_{k+1}) \leq \Big(1-s\cS_\gamma(\cX)\Big) d_r^2(\bV_\star,\bV_k) + s\bbeta_1^\top  \bV_\star^{\top}\bB_k\bbeta_2 + c(\cX)s^2.
\] 
We proceed by applying Doob's maximal inequality
\begin{multline*}
    \Pr\Big(\max_{k=1,\dots,T} \Big| \sum_{j=1}^k s (1-s \cS_\gamma(\cX))^{k-j} \bbeta_1^{j\top} \bV_\star^\top \bB_j \bbeta_2^j \Big| > \epsilon \Big) \\ \leq \Pr\Big(\max_{k=1,\dots,T} \Big| \sum_{j=1}^k s \bbeta_1^{j\top} \bV_\star^\top \bB_j \bbeta_2^j \Big| > \epsilon \Big) 
    \leq \frac{s^2 T \sigma^2(\sqrt{D} + \sqrt{d})^2}{\epsilon^2}.    
\end{multline*}
Choosing $\epsilon = \frac{s \sqrt{T} \sigma (\sqrt{D} + \sqrt{d})}{\sqrt{\lambda}}$ yields
\begin{align*}
    \Pr\Big(\max_{k=1,\dots,T} \Big| \sum_{j=1}^k s (1-s \cS_\gamma(\cX))^{k-j} \bbeta_1^{j\top} \bV_\star^\top \bB_j \bbeta_2^j \Big| > \epsilon \Big) &\leq \lambda.
\end{align*}
With probability at least $1-\lambda$, we thus have that for all $k=1,\dots,T$,
\[
d_r^2(\bV_\star, \bV_{k}) \leq \Big(1-s\cS_\gamma(\cX)\Big)^k d_r^2(\bV_\star,\bV_0) + \frac{s \sqrt{k} \sigma (\sqrt{D} + \sqrt{d})}{\sqrt{\lambda}}  + \frac{c(\cX)s k}{\cS_\gamma(\cX)}.
\] 
Thus, if
    \[
         \frac{s\sqrt{T}  \sigma (\sqrt{D} + \sqrt{d})}{\sqrt{\lambda}}  < \frac{d_r^2(\bV_\star, \bV_0)}{2}
    \]
    and
    \[
         \frac{s^2 T c(\cX)}{\cS_{\gamma}(\cX)} < \frac{d_r^2(\bV_\star, \bV_0)}{2},
    \]
    then $m_T < 2 d_r^2(\bV_\star, \bV_0)$. In particular, for $s = c_1 a T^{-\nu}$, these are satisfied if
    \[
        T > \max \Big(\Big[\frac{ 2c_1 a^2 c(\cX)}{\cS_{\gamma}(\cX) d_r^2(\bV_\star, \bV_0)}\Big]^{1/(2\nu-1)}, \Big[\frac{ 2a \sigma (\sqrt{D} + \sqrt{d})}{\sqrt{\lambda} d_r^2(\bV_\star, \bV_0)}\Big]^{2/(2\nu-1)} \Big).
    \]
    Since $a > a^2$, a sufficient condition is 
    \begin{equation}\label{eq:nggdTcond1}
        T > \max \Big(\Big[\frac{ 2c_1 a c(\cX)}{\cS_{\gamma}(\cX) d_r^2(\bV_\star, \bV_0)}\Big]^{1/(2\nu-1)}, \Big[\frac{ 2a \sigma (\sqrt{D} + \sqrt{d})}{\sqrt{\lambda} d_r^2(\bV_\star, \bV_0)}\Big]^{2/(2\nu-1)} \Big) =: \cF_1(a/d_r^2(\bV_{\star}, \bV_0), \lambda).
    \end{equation}
    Notice that this is a function of $a/d_r^2(\bV_{\star}, \bV_0)$ and $\lambda$.
    
Choosing $s = \frac{C a }{T^{\nu}} \leq \frac{C d_r^2(\bV_\star, \bV_0)}{2 \sqrt{T}}$ then yields that $m_T < 2 d_r^2(\bV_\star, \bV_0)$.
In particular, as long as we initialize in $B_{d_r^2}(\bV_\star, \gamma/2)$, we see that stability holds throughout all iterations with probability at least $1-\lambda$.

\noindent
\textbf{Bounding the $T$th iterate:}
Let $m_0 = 2 d_r^2(\bV_\star, \bV_0)$.
From here the proof is straightforward: the first term geometrically decreases.  The second can be bounded with Doob's maximal inequality with high probability and uses the fact that 
\[
\sum_j (1-s \cS_\gamma(\cX)/m_0)^{2(T-j)} \leq \frac{1}{1 - (1-s \cS_\gamma(\cX)/m_0)^{2}} = \frac{m_0^2}{2s \cS_\gamma(\cX) m_0 - (s \cS_\gamma(\cX))^2},
\]
which is independent of $T$. 
More specifically, Doob's maximal inequality yields
\begin{multline*}
    \Pr \Big( \inf_{1 \leq k \leq T} \Big|\sum_{j=1}^k s(1-s \cS_\gamma(\cX)/m_0)^{T-j} \bbeta_1^{j\top}\bV_\star^\top \bB_j \bbeta_2^j\Big| > \epsilon \Big) \leq \\
    \frac{\E \Big(\sum_{j=1}^T s^2(1-s \cS_\gamma(\cX)/m_0)^{2(T-j)} \Big[ \bbeta_1^{j\top}\bV_\star^\top \bB_j \bbeta_2^j\Big]^2\Big)}{\epsilon^2}.    
\end{multline*}

We can upper bound 
\begin{align*}
    \E (\bbeta_1^{j\top}\bV_\star^\top \bB_j \bbeta_2^j)^2 &= \E (\balpha_j^\top \bB_j \bbeta_2^j  )^2 \leq \sigma^2 \E \sigma_1 ( \bZ_j )^2 \\
    &\lesssim \sigma^2 (\sqrt{D} + \sqrt{d})^2
\end{align*}
In any case, this implies that
\begin{multline*}
    \Pr \Big( \inf_{1 \leq k \leq T} \Big|\sum_{j=1}^k s(1-s \cS_\gamma(\cX)/m_0)^{T-j} \bbeta_1^{j\top}\bV_\star^\top \bB_j \bbeta_2^j \Big|\leq \\
    C \frac{\sigma (\sqrt{D} + \sqrt{d})\sqrt{\sum_{j=1}^T s^2(1-s \cS_\gamma(\cX)/m_0)^{2(T-j)}} }{\sqrt{\lambda}}\Big) \leq \lambda,
\end{multline*}
or
\begin{multline*}
    \Pr \Big( \inf_{1 \leq k \leq T} \Big|\sum_{j=1}^k s(1-s \cS_\gamma(\cX)/m_0)^{T-j} \bbeta_1^{j\top}\bV_\star^\top \bB_j \bbeta_2^j \Big|\leq \\
    Cs \frac{\sigma (\sqrt{D} + \sqrt{d}) }{\sqrt{\lambda}}\sqrt{\frac{m_0^2}{2s \cS_\gamma(\cX) m_0 - (s \cS_\gamma(\cX))^2}}\Big) \leq \lambda.
\end{multline*}

The last term uses the fact that 
\[
\sum_{j=1}^T (1-s\cS_\gamma(\cX)/m_0)^{T-j} \leq \frac{m_0}{s\cS_\gamma(\cX)}
\]

Putting these together, we find with probability at least $1-2\lambda$,
\begin{multline*}
d_r^2(\bV_\star, \bV_{T}) \leq (1-s\cS_\gamma(\cX)/m_0)^T d_r^2(\bV_\star^\top \bV_0) \\
+ s\Big[ C \frac{\sigma (\sqrt{D} + \sqrt{d}) }{\sqrt{\lambda}}\frac{m_0}{\sqrt{2s \cS_\gamma(\cX) m_0 - (s \cS_\gamma(\cX))^2}}  +  \frac{m_0 c(\cX)}{\cS_\gamma(\cX)}\Big].    
\end{multline*}

Now, if $T$ is sufficiently large so that $s = \frac{C a }{T^{\nu}}$ satisfies
\begin{align*}
    s &< \frac{m_0}{\cS_\gamma(\cX)}, \\
    s &< \frac{a}{2 m_0} \Big[ C \frac{\sigma (\sqrt{D} + \sqrt{d}) }{\sqrt{\lambda}}\frac{1}{\sqrt{2s \cS_\gamma(\cX) m_0- (s \cS_\gamma(\cX))^2}}  +  \frac{c(\cX)}{\cS_\gamma(\cX)} \Big]^{-1}.
\end{align*}
The second condition can be satisfied for $T$ greater than a constant $C'$ with respect to $a$. Indeed, 
\begin{multline*}
   \Big[ \frac{1}{\sqrt{sm_0}} C \frac{\sigma (\sqrt{D} + \sqrt{d}) }{\sqrt{\lambda}}\frac{1}{\sqrt{2 \cS_\gamma(\cX) - s  (\cS_\gamma(\cX))^2/ m_0}}  +  \frac{c(\cX)}{\cS_\gamma(\cX)} \Big]
   \\
   \leq \Big[ \frac{1}{\sqrt{sm_0}} C \frac{\sigma (\sqrt{D} + \sqrt{d}) }{\sqrt{\lambda}}\sqrt{\frac{1}{ \cS_\gamma(\cX)}}  +  \frac{c(\cX)}{\cS_\gamma(\cX)} \Big]
   = O(1/\sqrt{s m_0}).
\end{multline*}
Thus, to satisfy the second condition, we need 
\[
s = \frac{C a}{T^{\nu}} < C' \frac{a }{2 m_0} \sqrt{m_0} \sqrt{\frac{C a}{T^{\nu}}},
\]
or
\begin{equation}\label{eq:nggdTcond2}
    T = \Omega \Big(\Big[ \frac{m_0}{a} \Big]^{1/\nu}\Big).
\end{equation}

Choosing $T$ to be sufficiently large so that it also satisfies
\begin{equation}\label{eq:nggdTcond3}
    T > \Big[\mathcal{C} \frac{m_0  \log(a/2 d_r^2(\bV_\star^\top \bV_0))}{ a \cS_\gamma(\cX)}\Big]^{1/(1-\nu)} > \frac{ \log(a/2 d_r^2(\bV_\star^\top \bV_0))}{\log(1-s\cS_\gamma(\cX)/m_0)},
\end{equation}
then, with probability at least $1-2\lambda$,
\begin{equation}
    d_r^2(\bV_\star, \bV_T) < a.
\end{equation}

Notice that the constraints on $T$ given by~\eqref{eq:nggdTcond1},~\eqref{eq:nggdTcond2}, and~\eqref{eq:nggdTcond3} depend only on $a$ and $d_r^2(\bV_\star, \bV_0)$ the ratio $a/d_r^2(\bV_\star, \bV_0)$.



\end{proof}

\subsubsection{Proof of Theorem~\ref{thm:sggd}}

Since $\cS(\cX^k)$ is bounded between $[-\max_i \|\bx_i\|^2, \max_i \|\bx_i\|^2]$, we also have the uniform bound
\begin{equation}\label{eq:squarebd}
	\E [\cS(\cX^k) - \cS_\E]^2 \leq \max_i \|\bx_i\|^2,
\end{equation}
although in general we expect this to be much smaller. In particular, if the data is spherized, then this is bounded by 1.

\begin{proof}
    The proof of the theorem follows from the same reasoning as dp-GGD after splitting the sequence of errors as
    \begin{align*}
        d_r^2(\bV_\star, \bV_{k+1}) &\leq d_r^2(\bV_\star^\top \bV_k) + s\Big(\bbeta_1^\top  \bV_\star^{\top}( (\sfgrad F(\bV_k;\cX)) \bbeta_2 \Big) + c(\cX)s^2\\
        &\leq \Big(1-\frac{s\cS_{\E}}{d_r(\bV_\star^\top \bV_k)}\Big) d_r^2(\bV_\star,\bV_k) + (\cS_{\gamma}(\cX^k) - \cS_{\gamma,\E}) s d_r(\bV_\star,\bV_k) + c(\cX)s^2 \\
        &\leq \Big(1-\frac{s\cS_{\E}}{m_T}\Big) d_r^2(\bV_\star,\bV_k) + (\cS_{\gamma}(\cX^k) - \cS_{\gamma,\E}) s d_r(\bV_\star,\bV_k) + c(\cX)s^2,
    \end{align*}
    and then controlling $(\cS_{\gamma}(\cX^k) - \cS_{\gamma,\E}) s d_r(\bV_\star,\bV_k)$.
    Here, again, $m_T = \max_{j=1,\dots,T} d_r(\bV_\star, \bV_j)$.
    
    \noindent
    \textbf{Bounding $m_T$:} As before, be begin by bounding $m_T$ by first looking at the looser bound
    \[
    d_r^2(\bV_\star, \bV_{k+1}) \leq \Big(1-s\cS_{\E}\Big) d_r^2(\bV_\star,\bV_k) + (\cS_{\gamma}(\cX^k) - \cS_{\gamma,\E}) s d_r(\bV_\star,\bV_k) + c(\cX)s^2.
    \]
    Telescoping yields
    \begin{multline*}
        d_r^2(\bV_\star, \bV_{k}) \leq \Big(1-s\cS_{\E}\Big)^k d_r^2(\bV_\star,\bV_0) \\
        + \sum_{j=1}^k \Big(1-s\cS_{\E}\Big)^{k-j} (\cS_{\gamma}(\cX^j) - \cS_{\gamma,\E}) s d_r(\bV_\star,\bV_j) + \sum_{j=1}^k \Big(1-s\cS_{\E}\Big)^{k-j}c(\cX)s^2.        
    \end{multline*}

    The last term is bounded by
    \[
        \sum_{j=1}^k \Big(1-s\cS_{\E}\Big)^{k-j}c(\cX)s^2 \leq   \frac{s c(\cX) }{ \cS_{\gamma,\E}}.
    \]
    The other term can be bounded by Doob's maximal inequality
    \begin{multline*}
        \Pr\Big(\max_{k=1,\dots,T} \Big|\sum_{j=1}^k \Big(1-s\cS_{\E}\Big)^{k-j} (\cS_{\gamma}(\cX^j) - \cS_{\gamma,\E}) s d_r(\bV_\star,\bV_j) \Big| > \epsilon \Big) \\
        \leq \Pr\Big(\max_{k=1,\dots,T} \Big| \sum_{j=1}^k (\cS_{\gamma}(\cX^j) - \cS_{\gamma,\E}) s \Big| > \epsilon \Big) 
    \leq \frac{s^2 \sum_{j=1}^T \E (\cS_{\gamma}(\cX^j) - \cS_{\gamma,\E}) }{\epsilon^2}
    \leq \frac{s^2 T }{\epsilon^2}.
    \end{multline*}
    Setting $\epsilon = \frac{s \sqrt{T}}{\sqrt{\lambda}}$ yields
    \[
    \Pr\Big(\max_{k=1,\dots,T} \Big|\sum_{j=1}^k \Big(1-s\cS_{\E}\Big)^{k-j} (\cS_{\gamma}(\cX^j) - \cS_{\gamma,\E}) s d_r(\bV_\star,\bV_j) \Big| > \frac{s \sqrt{T}}{\sqrt{\lambda}} \Big) \leq \lambda.
    \]
    Thus, if
    \[
         \frac{s\sqrt{T}}{\sqrt{\lambda}} < \frac{d_r^2(\bV_\star, \bV_0)}{2}
    \]
    and
    \[
         \frac{s^2 T c(\cX)}{\cS_{\gamma,\E}} < \frac{d_r^2(\bV_\star, \bV_0)}{2},
    \]
    then $m_T < 2 d_r^2(\bV_\star, \bV_0)$. In particular, for $s = c_1 a T^{-\nu}$, these are satisfied if
    \begin{equation}\label{eq:sggdTcond1}
        T > \max \Big(\Big[\frac{ 2c_1 a^2 c(\cX)}{\cS_{\gamma,\E} d_r^2(\bV_\star, \bV_0)}\Big]^{1/(2\nu-1)}, \Big[\frac{ 2a }{\sqrt{\lambda} d_r^2(\bV_\star, \bV_0)}\Big]^{2/(2\nu-1)} \Big)=: \cF_2(a/d_r^2(\bV_{\star}, \bV_0), \lambda).
    \end{equation}
    
    \noindent
\textbf{Bounding the $T$th iterate:}
Let $m_0 = 2 d_r^2(\bV_\star, \bV_0)$.
From here the proof is straightforward: the first term geometrically decreases.  The second can be bounded with Doob's maximal inequality with high probability and uses the fact that 
\[
\sum_j (1-s \cS_{\gamma,\E}/m_0)^{2(T-j)} \leq \frac{1}{1 - (1-s \cS_{\gamma,\E}/m_0)^{2}} = \frac{m_0^2}{2s \cS_{\gamma,\E} m_0 - (s \cS_{\gamma,\E})^2},
\]
which is independent of $T$. 
More specifically, Doob's maximal inequality yields
\begin{align*}
    \Pr \Big( &\inf_{1 \leq k \leq T} \Big|\sum_{j=1}^k s(1-s \cS_\gamma(\cX)/m_0)^{T-j} (\cS_{\gamma}(\cX^j) - \cS_{\gamma,\E}) d_r(\bV_\star,\bV_j) \Big| > \epsilon \Big) \\
    &\leq \Pr \Big( \inf_{1 \leq k \leq T} \Big|\sum_{j=1}^k s(1-s \cS_\gamma(\cX)/m_0)^{T-j} (\cS_{\gamma}(\cX^j) - \cS_{\gamma,\E})  \Big| > \epsilon \Big) \\
    &\leq \frac{\E \Big(\sum_{j=1}^T s^2(1-s \cS_\gamma(\cX)/m_0)^{2(T-j)} \Big[ (\cS_{\gamma}(\cX^j) - \cS_{\gamma,\E}) \Big]^2\Big)}{\epsilon^2}.
\end{align*}
We can upper bound 
\begin{align*}
    \E \Big[ (\cS_{\gamma}(\cX^j) - \cS_{\gamma,\E}) \Big]^2 &\leq 1.
\end{align*}
In any case, this implies that
\begin{multline*}
    \Pr \Big( \inf_{1 \leq k \leq T} \Big|\sum_{j=1}^k s(1-s \cS_\gamma(\cX)/m_0)^{T-j} (\cS_{\gamma}(\cX^j) - \cS_{\gamma,\E}) d_r(\bV_\star,\bV_j) \Big| > \epsilon \Big)\\
    \leq \frac{m_0^2}{2s \cS_{\gamma,\E} m_0 - (s \cS_{\gamma,\E})^2}\frac{s^2}{\epsilon^2},
\end{multline*}
or
\begin{multline*}
\Pr \Big( \inf_{1 \leq k \leq T} \Big|\sum_{j=1}^k s(1-s \cS_\gamma(\cX)/m_0)^{T-j} (\cS_{\gamma}(\cX^j) - \cS_{\gamma,\E}) d_r(\bV_\star,\bV_j) \Big|\\
> s \frac{m_0}{\sqrt{2s \cS_{\gamma,\E} m_0 - (s \cS_{\gamma,\E})^2}\sqrt{\lambda}} \Big)
\leq \lambda.
\end{multline*}

Putting these together, we find with probability at least $1-2\lambda$,
\[
d_r^2(\bV_\star, \bV_{T}) \leq (1-s\cS_\gamma(\cX)/m_0)^T d_r^2(\bV_\star^\top \bV_0) + s\Big[ \frac{m_0}{\sqrt{2s \cS_{\gamma,\E} m_0 - (s \cS_{\gamma,\E})^2}\sqrt{\lambda}}  +  \frac{m_0 c(\cX)}{\cS_{\gamma,\E}}\Big].
\]

Now, if $T$ is sufficiently large so that $s = \frac{C a }{T^{\nu}}$ satisfies
\begin{align*}
    s &< \frac{m_0}{\cS_\gamma(\cX)} \\
    s &< \frac{a}{2 m_0} \Big[ \frac{1}{\sqrt{2s \cS_{\gamma,\E} m_0 - (s \cS_{\gamma,\E})^2}\sqrt{\lambda}} +  \frac{c(\cX)}{\cS_{\gamma,\E}} \Big]^{-1}.
\end{align*}
To satisfy the second condition, we again need 
\begin{equation}\label{eq:sggdTcond2}
    T = \Omega \Big(\Big[ \frac{m_0}{a} \Big]^{1/\nu}\Big).
\end{equation}

Choosing $T$ to be sufficiently large so that it also satisfies
\begin{equation}\label{eq:sggdTcond3}
    T > \Big[\mathcal{C} \frac{m_0  \log(a/2 d_r^2(\bV_\star^\top \bV_0))}{ a \cS_{\gamma,\E}}\Big]^{1/(1-\nu)} > \frac{ \log(a/2 d_r^2(\bV_\star^\top \bV_0))}{\log(1-s\cS_{\gamma,\E} /m_0)},
\end{equation}
then, with probability at least $1-2\lambda$,
\begin{equation}
    d_r^2(\bV_\star, \bV_T) < a.
\end{equation}

Notice again that the constraints on $T$ given by~\eqref{eq:nggdTcond1},~\eqref{eq:nggdTcond2}, and~\eqref{eq:nggdTcond3} depend only on $a$ and $d_r^2(\bV_\star, \bV_0)$ the ratio $a/d_r^2(\bV_\star, \bV_0)$.
\end{proof}

Finally, we demonstrate how one might hope to have $\cS_{\gamma, \E} > 0$ with a simple model. If the inlier and outlier percentages are $\alpha_1$ and $\alpha_0$ and the inlier and outlier sample covariances are $\bSigma_{\mathrm{in}}$ and $\bSigma_{\mathrm{out}}$, respectively, then 
\begin{equation}
	\E \cS(\cX^k) \geq \E \Big( \gamma \lambda_r\Big(\frac{1}{m} \bXin^k \bXin^{k\top}\Big) - \lambda_1 \Big(\frac{1}{m} \bXout^k \bXout^{k\top} \Big) \Big) = \gamma \alpha_0 \lambda_r(\bSigma_{\mathrm{in}}) - \alpha_1 \lambda_1(\bSigma_{\mathrm{out}}) = \cS_{\gamma, \E}
\end{equation}
Therefore, one could assume that $\gamma \alpha_0 \lambda_r(\bSigma_{\mathrm{in}}) - \alpha_1 \lambda_1(\bSigma_{\mathrm{out}}) > 0$.

\subsubsection{Proof of Theorem~\ref{thm:nsggd}}

	
\begin{proof}

Combining~the results of the previous two theorems yields the result by simultaneously controlling both martingales. Notice that $\cF_3$ will be defined similarly to $\cF_1$ and $\cF_2$.

\end{proof}


\subsubsection{Proof of Theorem~\ref{thm:linconv}}

\begin{proof}
Set $s = \frac{c_1 a}{T^{\nu}}$. For $T_1$ sufficiently large so that the conditions within the theorem hold.
Then, with probability $1-2\lambda$ (or $1-4\lambda$ for dp-SGGD), in $T_1$ iterations, $d_r^2(\bV_\star, \bV_{T_1}) < a$.

Now suppose that we restart with $s' = s/2$ and $\bV_0 = \bV_{T_1}$. Notice that this is equivalent to taking $a' = a/2$ and starting distance $d_r^2(\bV_\star, \bV_0') = d_r^2(\bV_\star, \bV_T) < a$. In particular, this takes $T'>\cF_{\bullet}(1/2)$ iterations to reach $\bV_{T + T'}$ such that $d_r^2(\bV_\star, \bV_{T + T'}) < a/2$. Repeating this procedure for $r$ restarts every $\cF_{\bullet}(1/2)$ iterations yields the desired result.

\end{proof}

This section gives additional plots demonstrating the performance of the various differentially private methods we discuss.

\begin{figure}
     \centering
         \includegraphics[width=.28\textwidth]{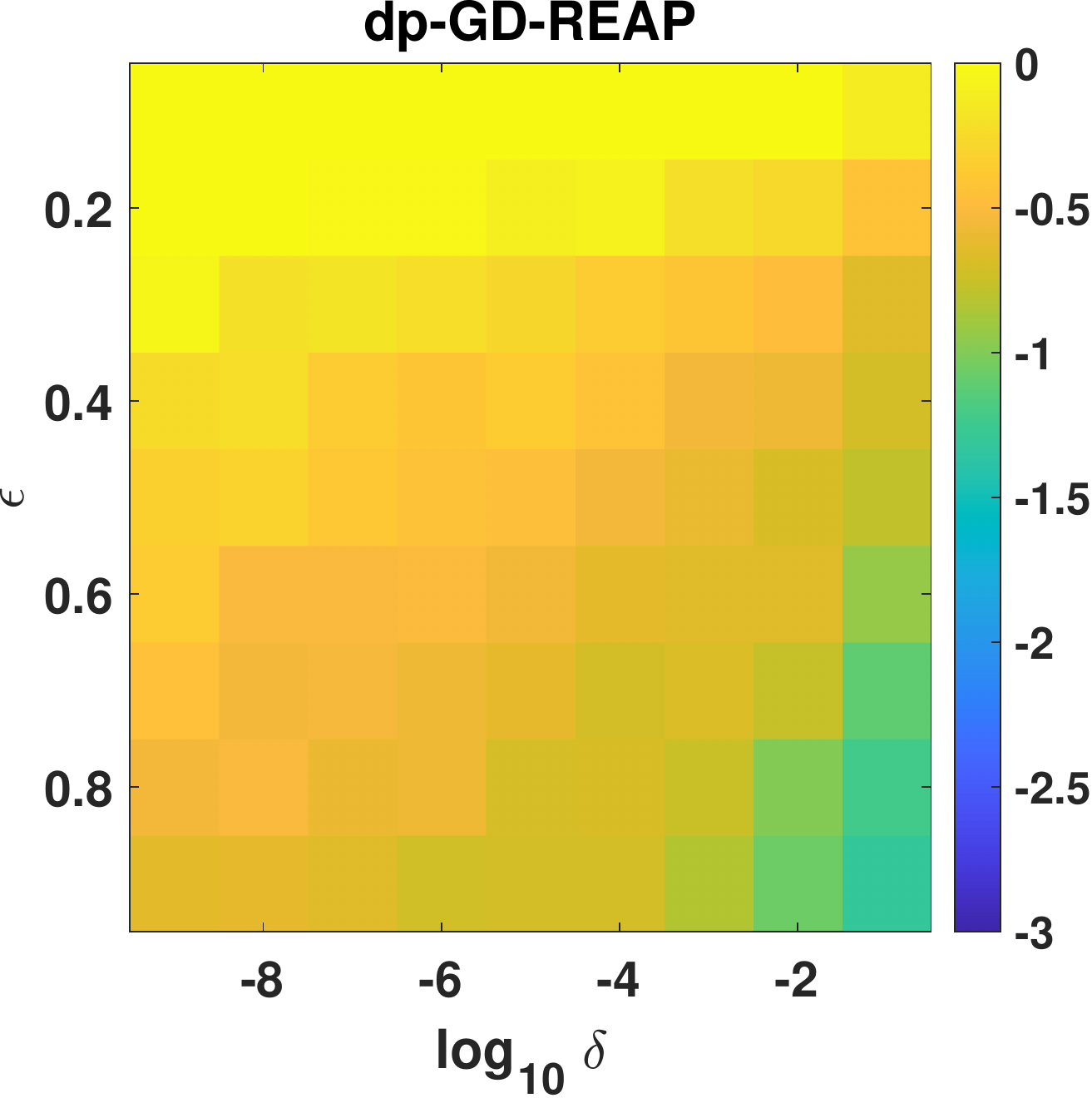}
         \includegraphics[width=.28\textwidth]{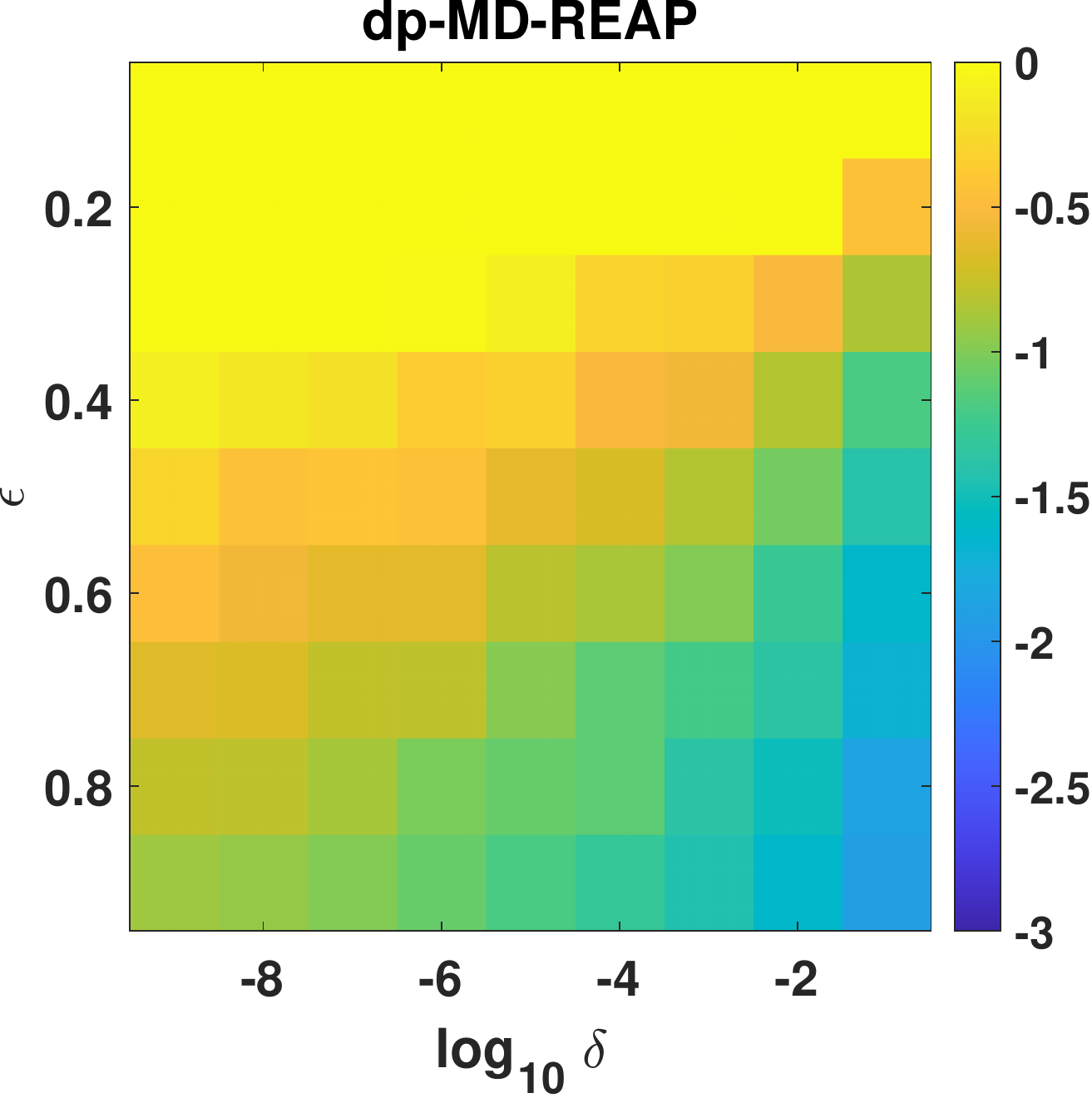}
         \includegraphics[width=.28\textwidth]{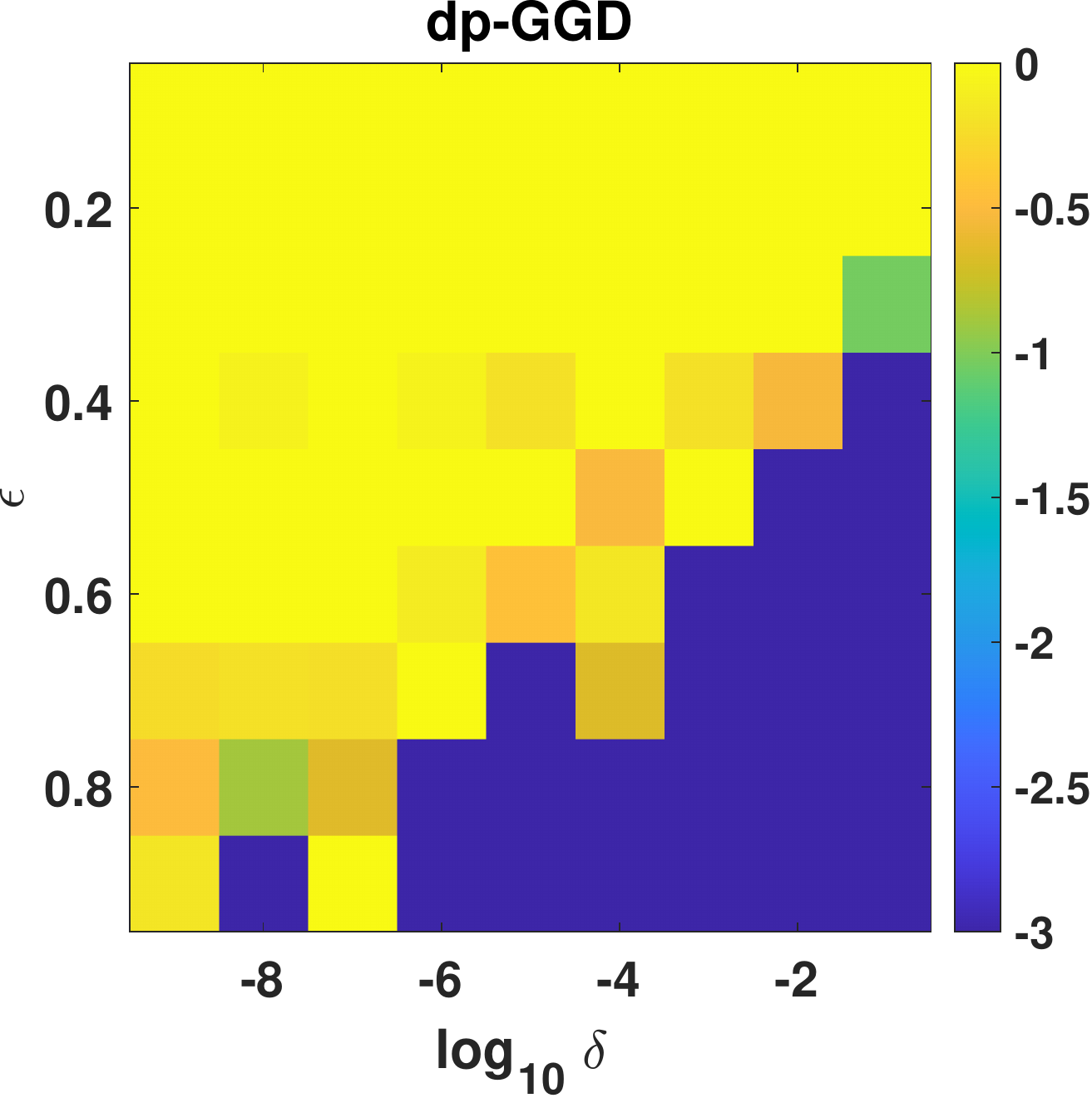}
         \includegraphics[width=.28\textwidth]{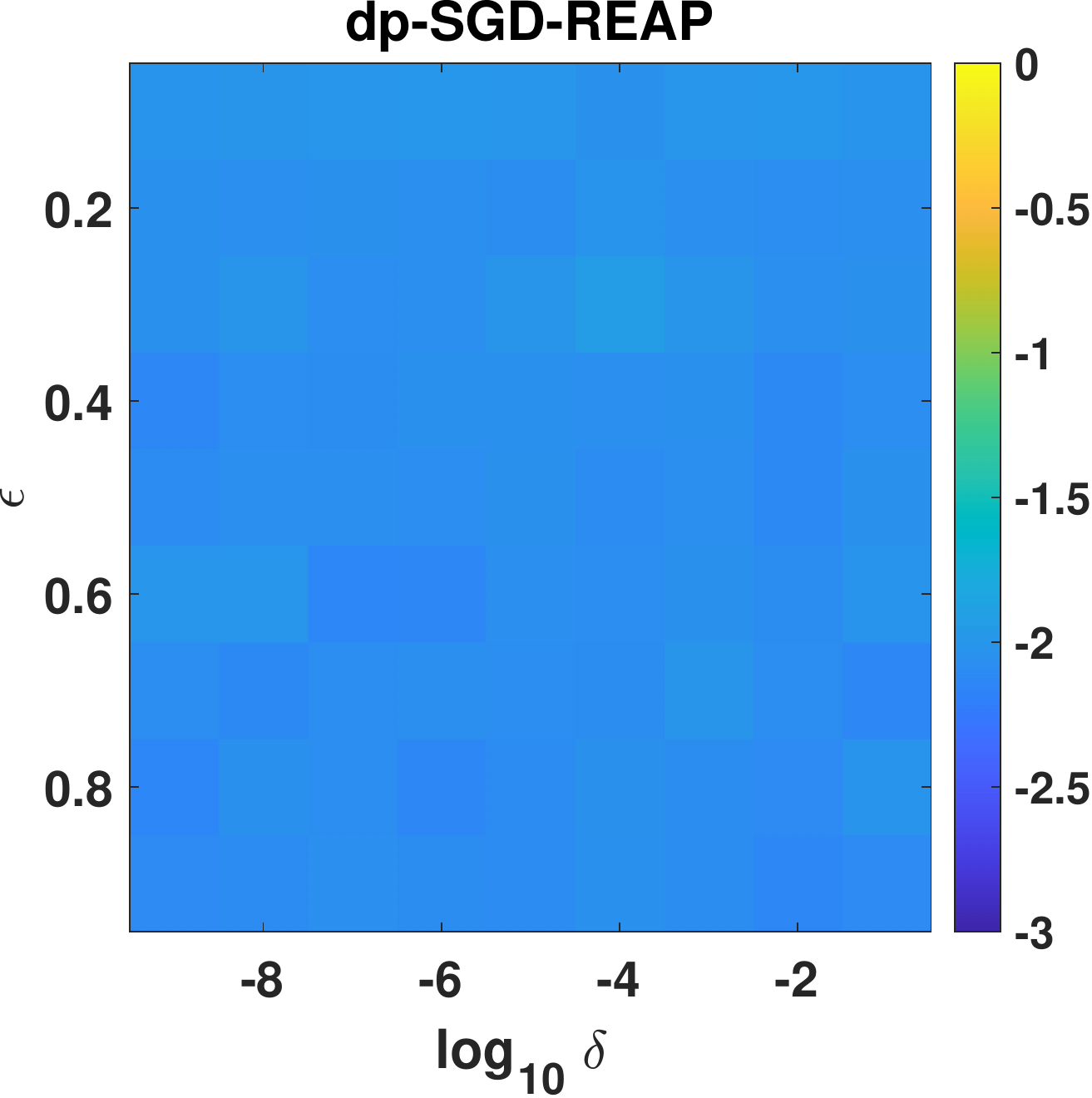}
         \includegraphics[width=.28\textwidth]{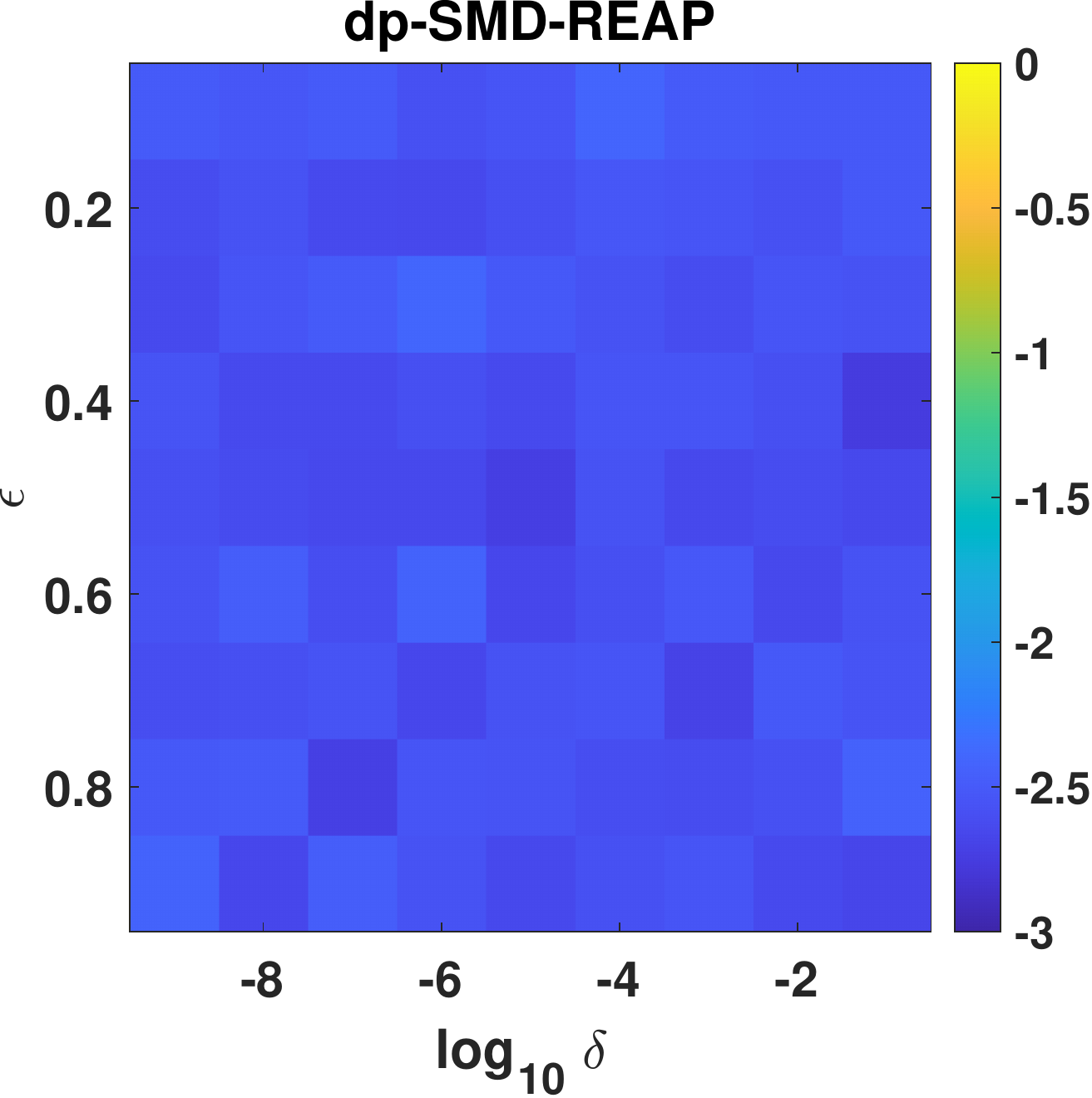}
         \includegraphics[width=.28\textwidth]{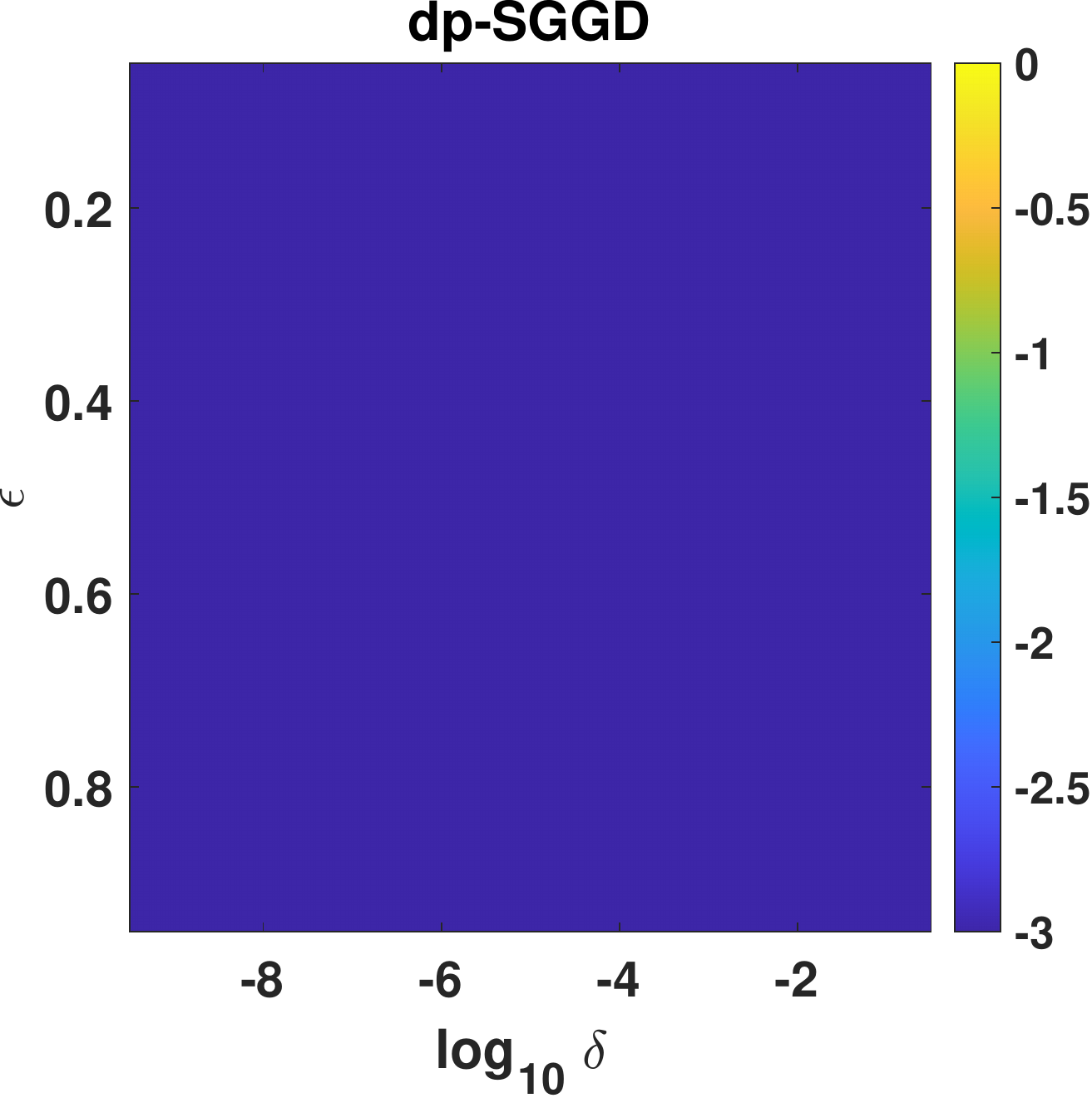}
        \caption{$\delta$ and $\epsilon$ phase transition plot. Each square represents $\log_{10}$ final error.}\label{fig:delta_eps}
\end{figure}
    
First, we give a phase transition plot with respect to $\epsilon$ and $\delta$. \autoref{fig:delta_eps} shows that dp-GGD's transition from small $\epsilon$ and $\delta$ is more abrupt than that of dp-GD-REAP and dp-MD-REAP. The data parameters are as follows, the inlier dimension $r=2$, total dimension $D=20$, number of points $N=2,000$ and inlier ratio $= 0.5$. The algorithms' parameters are as follows, total number of iterations of each algorithm are the same as the number of data points $N$, the step size for the four dp-REAP algorithms to be $\eta_k = 8/\sqrt{k}$, the step size for dp-GGD and dp-SGGD is $\eta_k = 1 / 2^{\lfloor k/50 \rfloor}$. The experiment is repeated 50 times and the medium error is plotted.
    
Next, we plot the ratio of successful attempts to converge to tolerance $10^{-2}$ in 50 repetitions of each algorithm as a function of inlier ratio and batch size in  \autoref{fig:phase_transition_inlier_ratio_batchsize}. (plot inlier ratio vs N) The data parameters are as follows, number of points $N=2,000$, total dimension $D = 20$, and inlier dimension $r=2$. The algorithms' parameters are as follows, total number of iterations of each algorithm are the same as the number of points $N$, the step size for the four dp-REAP algorithms to be $\eta_k = 8/\sqrt{k}$, the step size for dp-GGD and dp-SGGD is $\eta_k = 1 / 2^{\lfloor k/50 \rfloor}$. The plot shows that dp-SGD-REAP converges in the regime where inlier ratio is $\geq 0.6$, dp-SMD-REAP converges in the regime where inlier ratio is $\geq 0.2$, and dp-SGGD converges almost for all inlier ratios when the batch size is greater than 2.
    
\begin{figure}
     \centering
         \includegraphics[width=.28\textwidth]{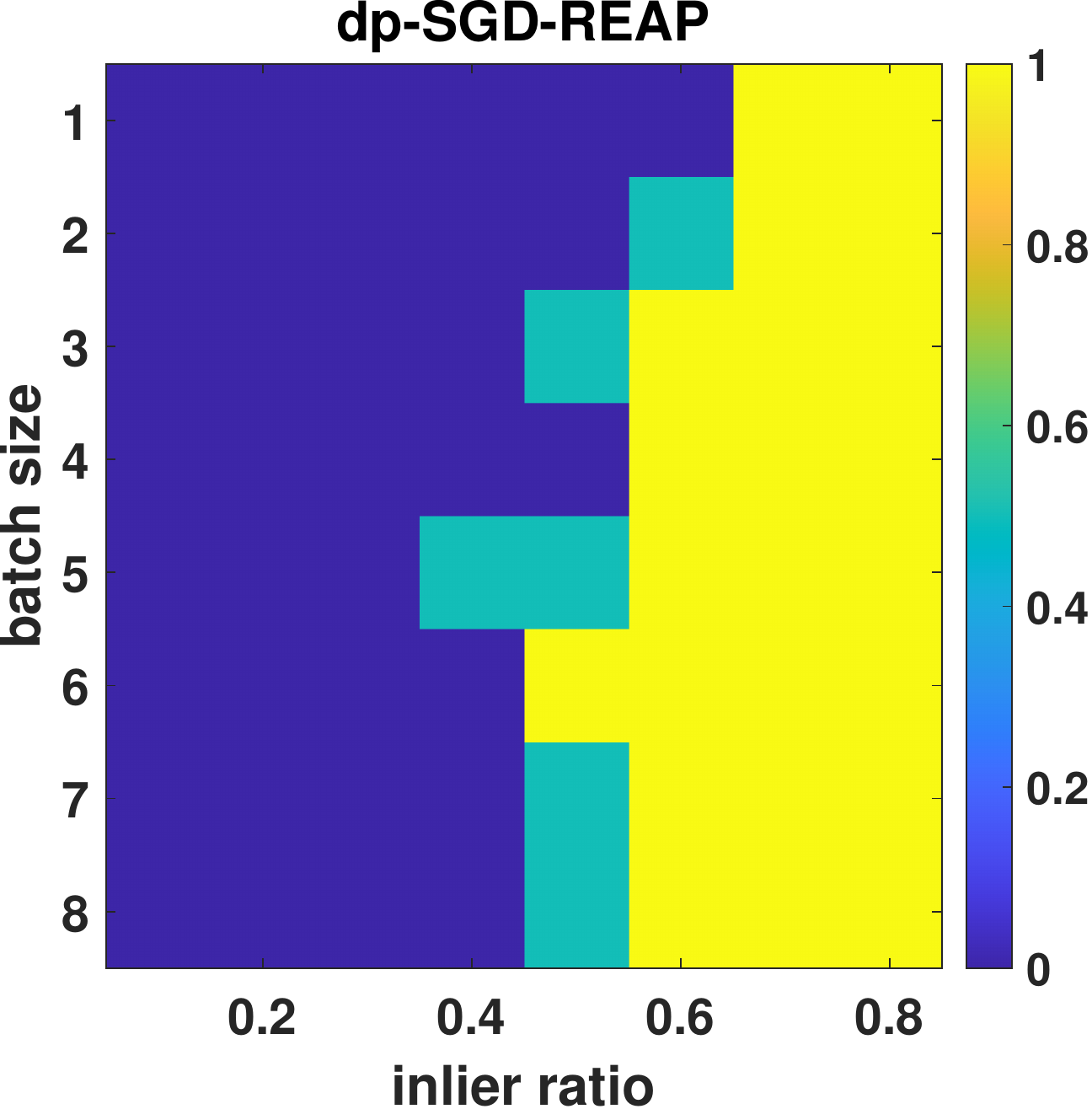}
         \includegraphics[width=.28\textwidth]{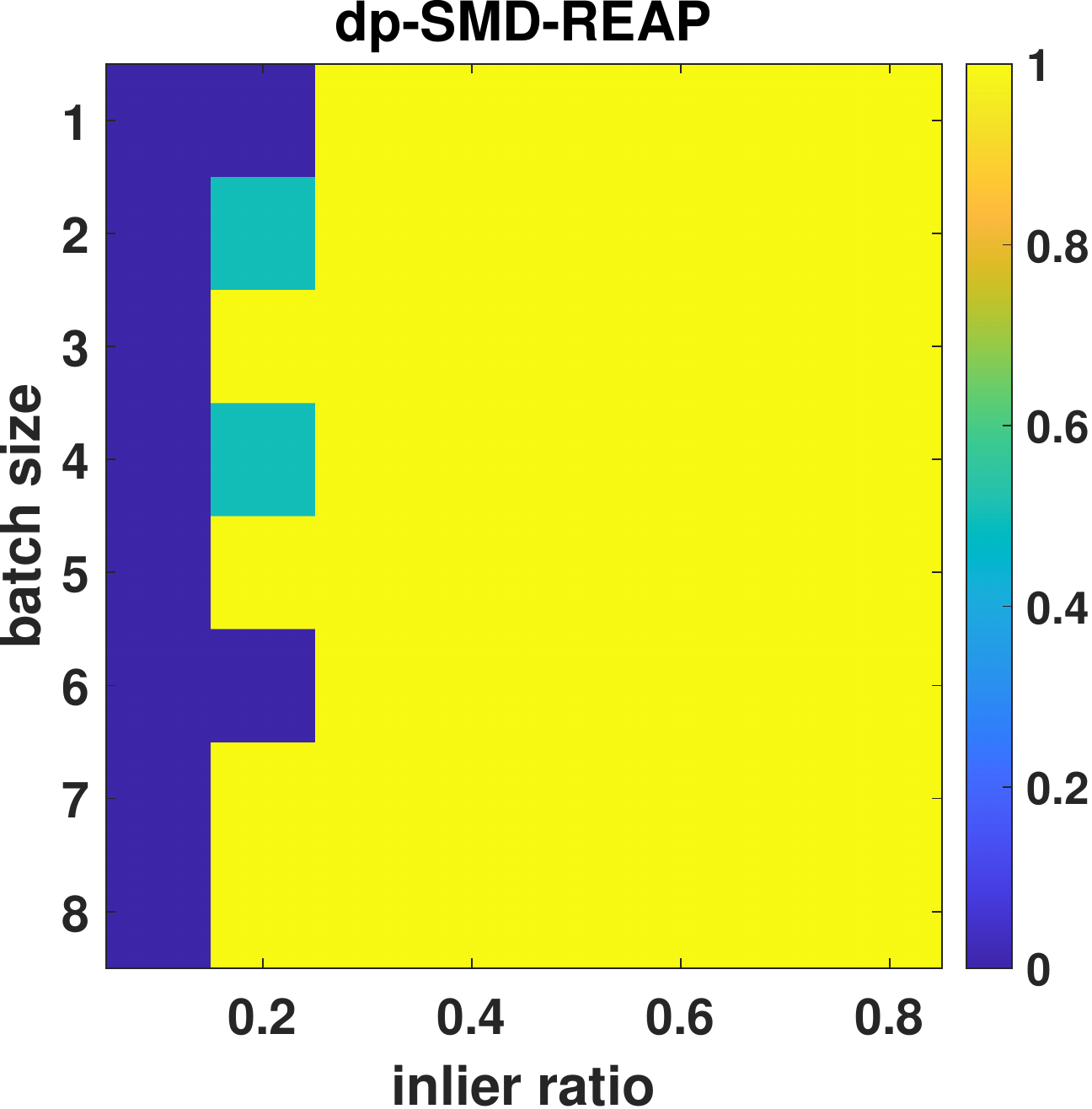}
         \includegraphics[width=.28\textwidth]{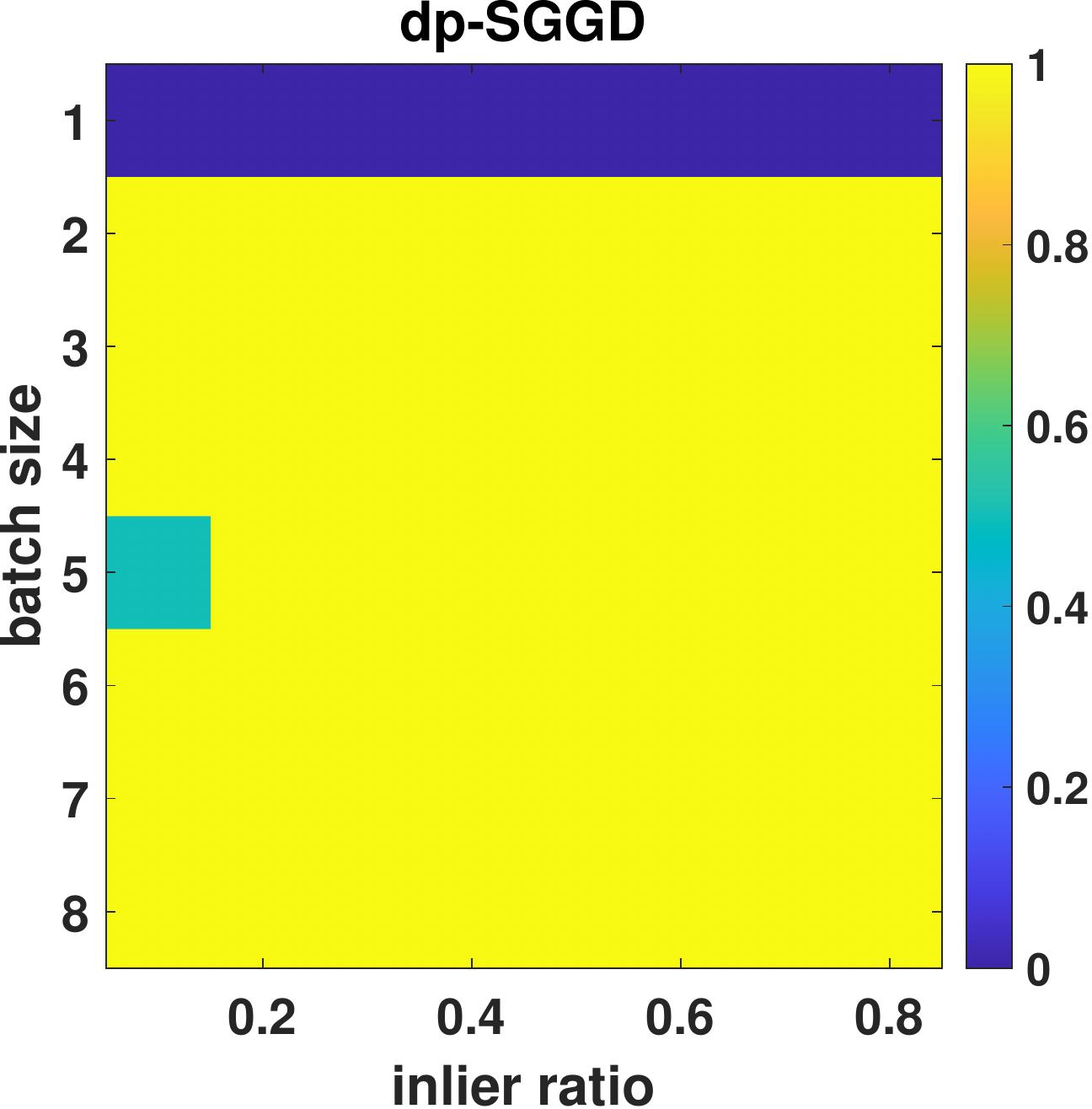}
        \caption{Inlier ratio and batch size transition plot. Each square represents the percentage of time that the algorithm achieves the given tolerance for a given combination of inlier ratio and batch size.}
        \label{fig:phase_transition_inlier_ratio_batchsize}
\end{figure}

We plot the percentage of repetitions that algorithms converge to tolerance $10^{-2}$ in \autoref{fig:phase_transition_inlier_ratio_batchsize}.

 We also give a phase transition plot of $D$ versus $r$, where the value is the final $\log_{10}$ error. \autoref{fig:D_d} shows that dp-GD-REAP, dp-MD-REAP and dp-GGD work well in the regime where both $D$ and $r$ are small. The data parameters are as follows, number of points $N=2,000$ and inlier ratio $= 0.5$. The algorithms' parameters are as follows, total number of iterations of each algorithm are the same as the number of points $N$, the step size for the four dp-REAP algorithms to be $\eta_k = 8/\sqrt{k}$, the step size for dp-GGD and dp-SGGD is $\eta_k = 1 / 2^{\lfloor k/50 \rfloor}$. The experiment is repeated 50 times and the medium error of all repetitions is plotted. 
    
\begin{figure}[h!]
     \centering
     \includegraphics[width=.28\textwidth]{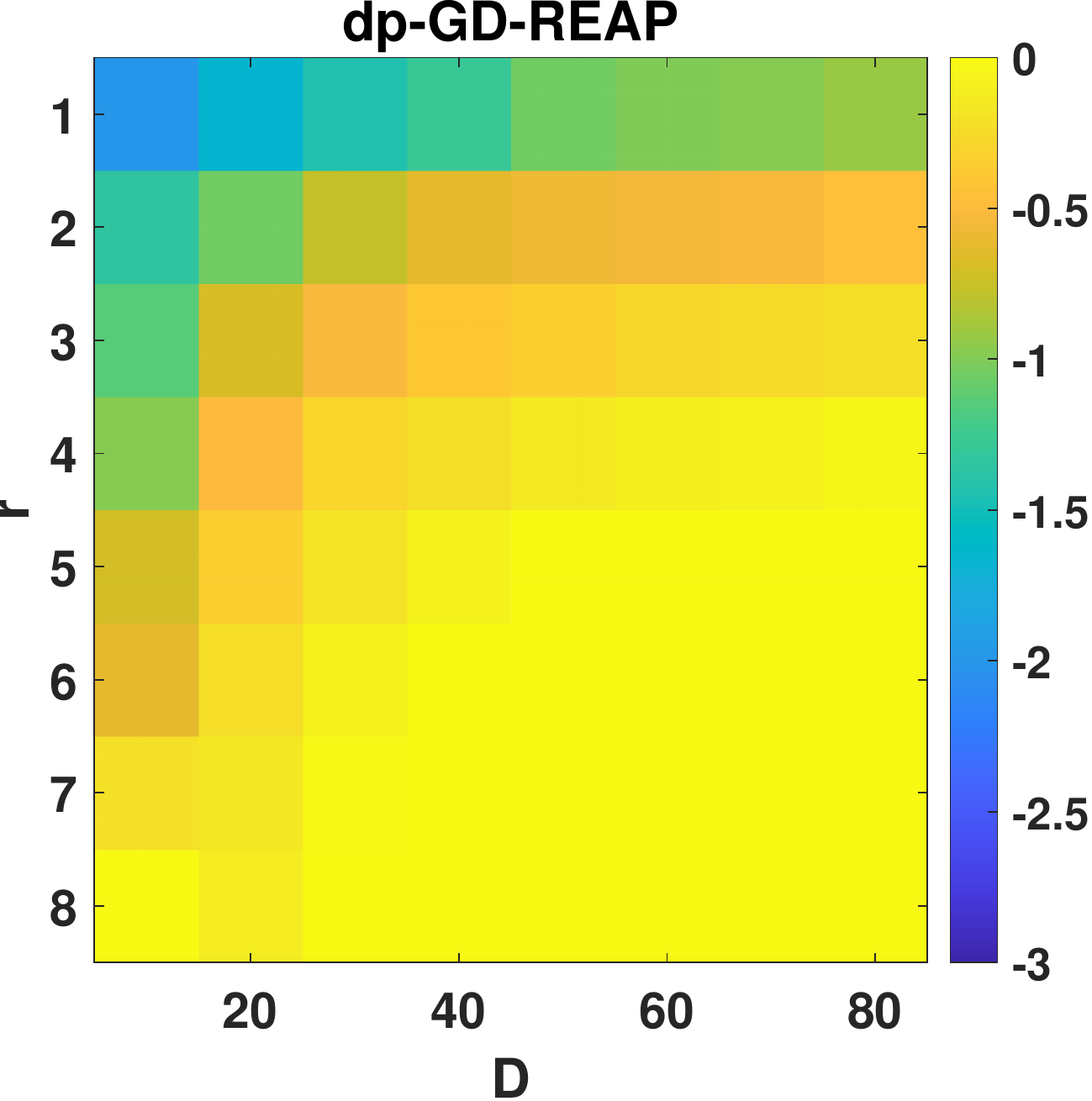}
     \includegraphics[width=.28\textwidth]{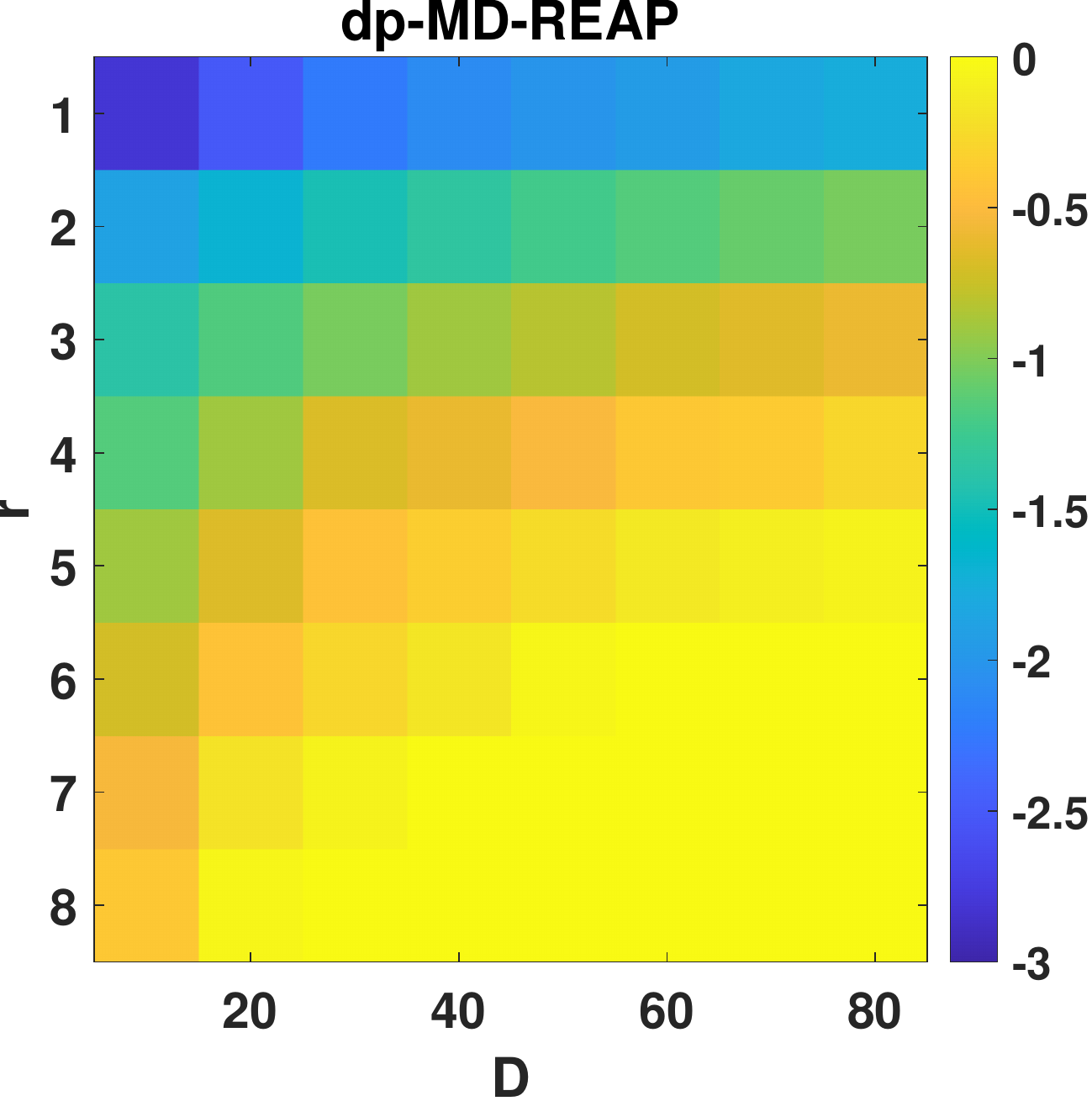}
    \includegraphics[width=.28\textwidth]{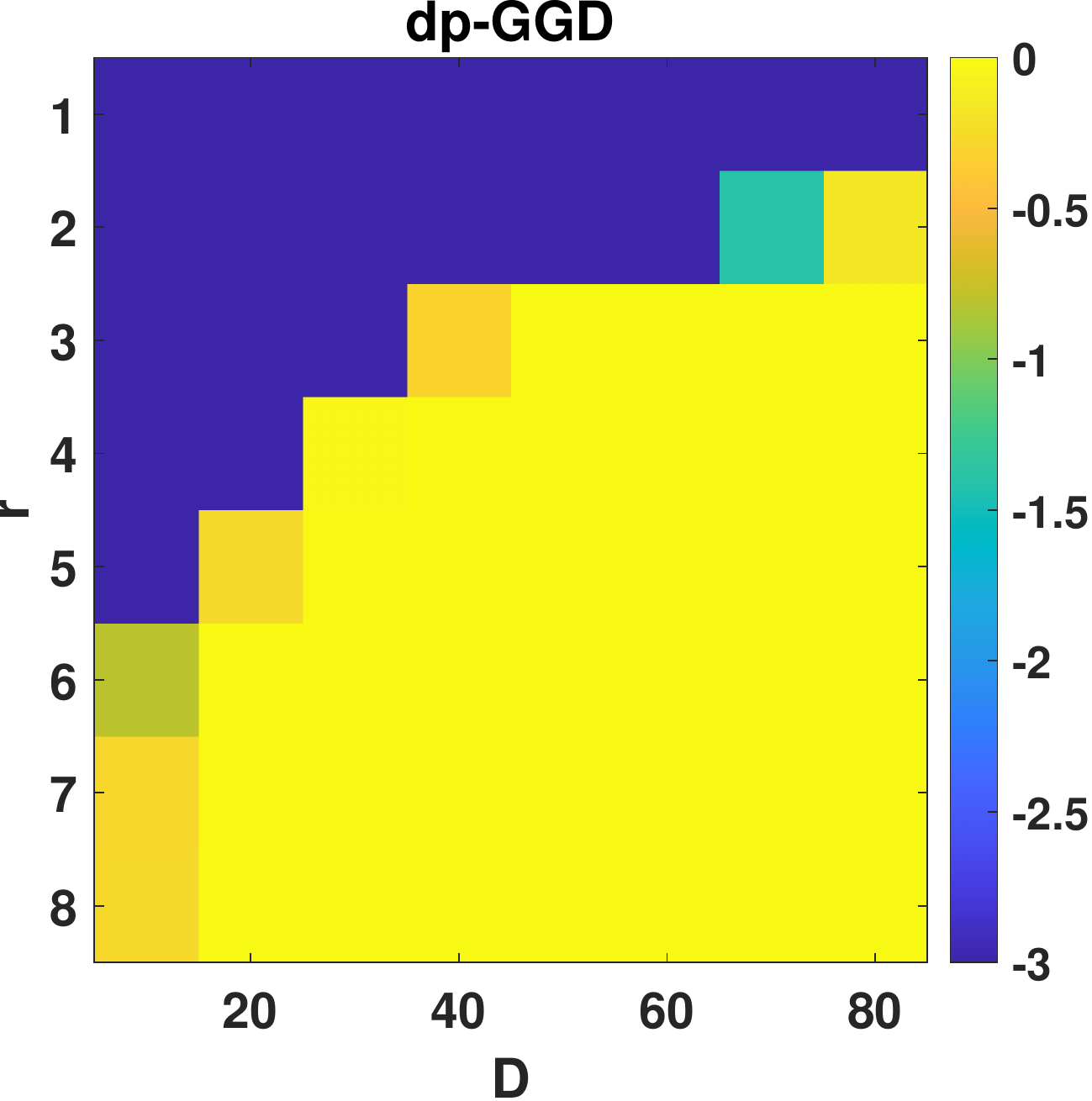}
    \includegraphics[width=.28\textwidth]{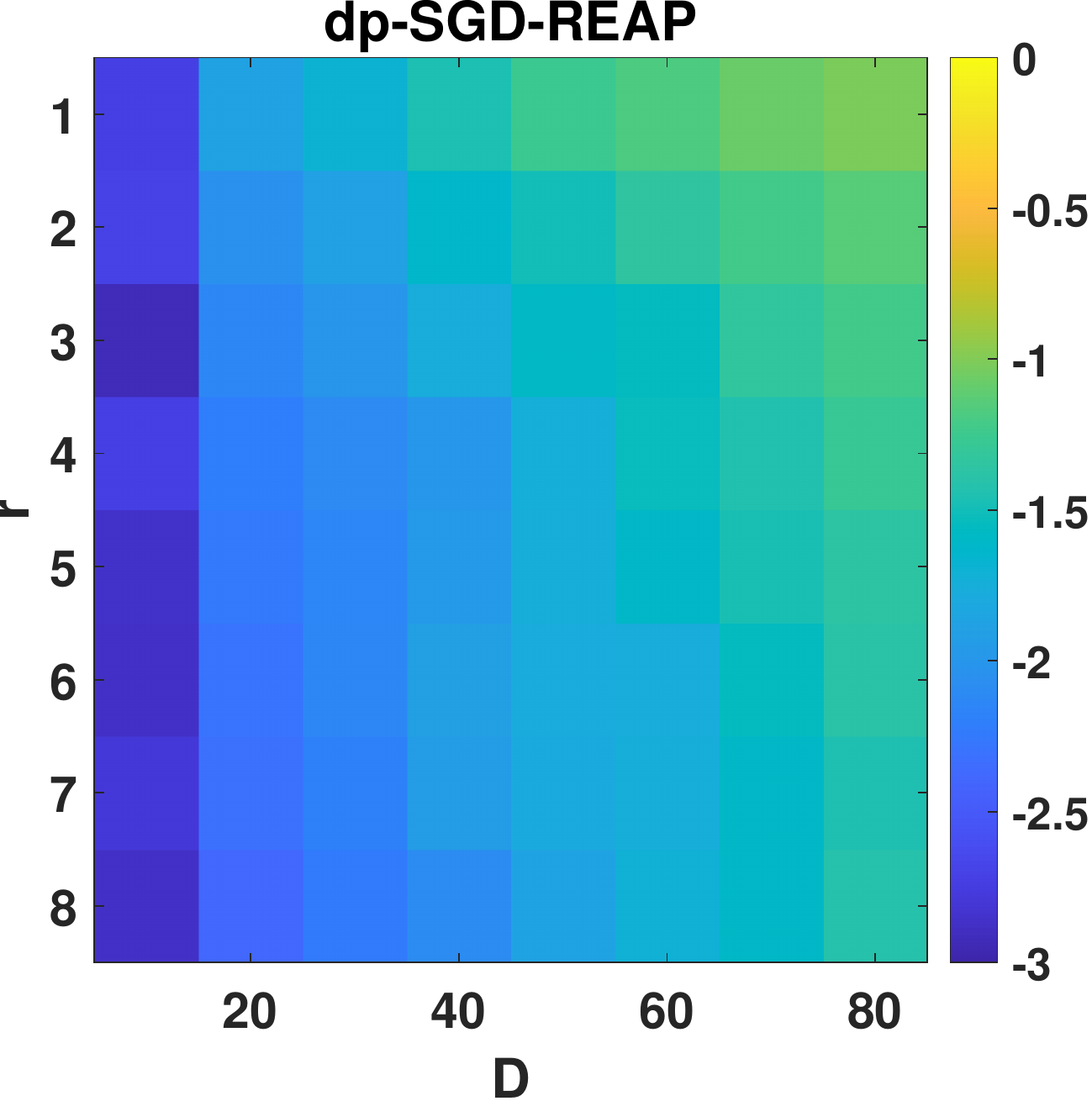}
     \includegraphics[width=.28\textwidth]{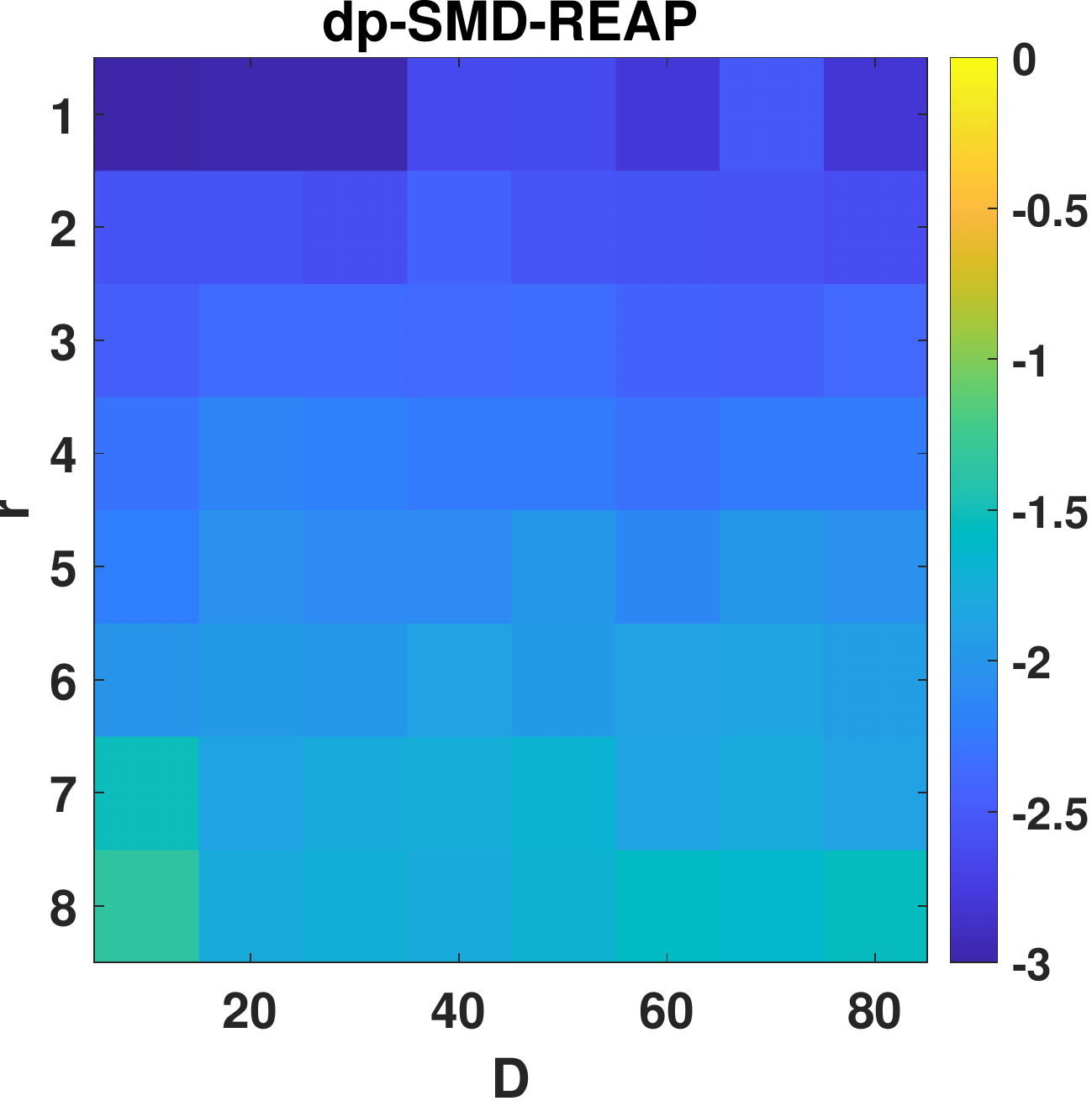}
    \includegraphics[width=.28\textwidth]{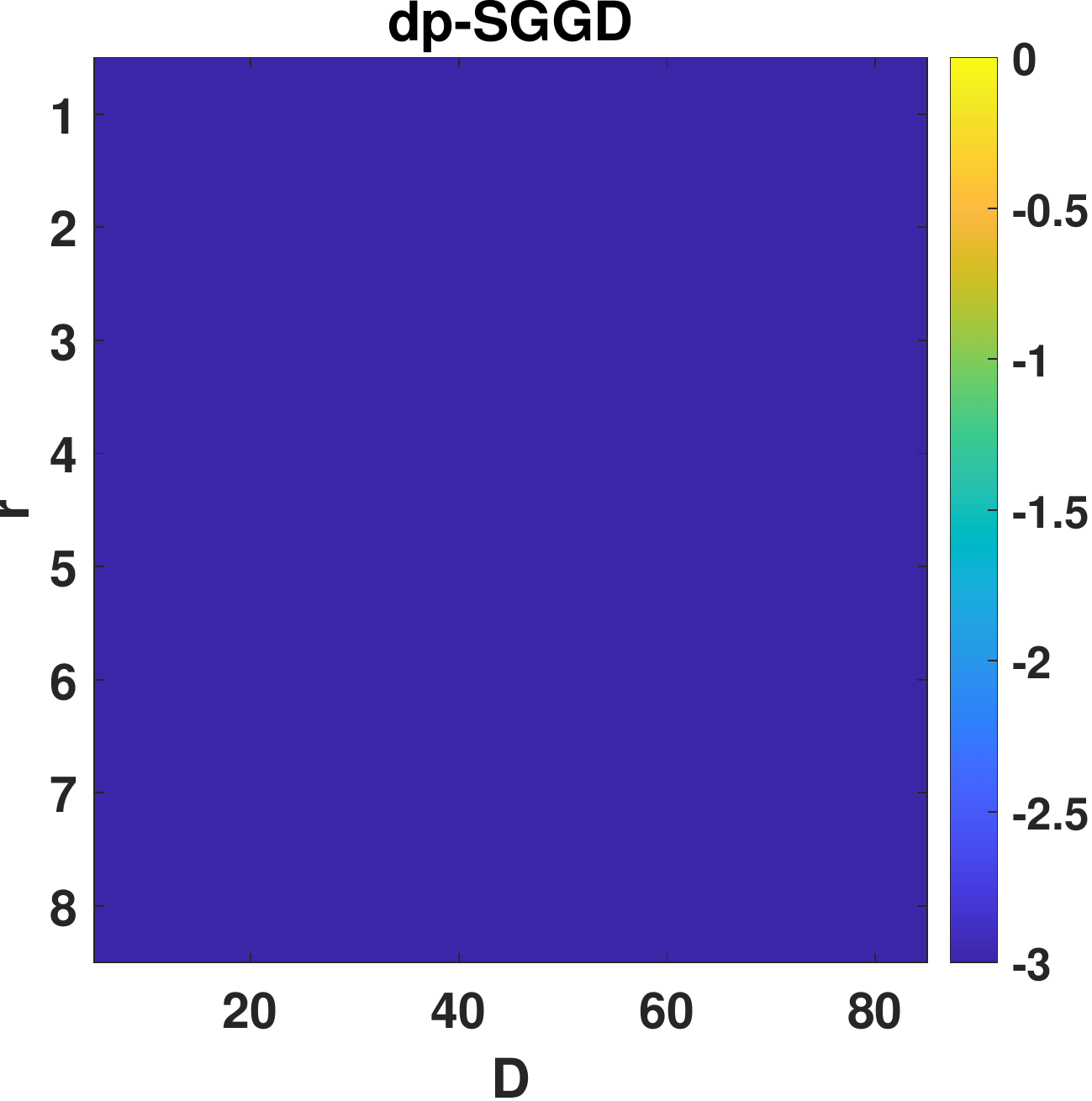}
    \caption{$D$ and $r$ phase transition plot. Each square represents $\log_{10}$ final error. The $x$-axis correspond to different values of $D$ and the $y$-axis correspond to different values of $r$.}\label{fig:D_d}
\end{figure}

Next, we give a phase transition plot of $N$ versus $D$ with $\log_{10}$ error in \autoref{fig:N_D}, and time to error = $10^{-2}$ in \autoref{fig:N_D_time}. We plot $\log_{10}$ final error, and time to converge to tolerance (\autoref{fig:N_D_time}), and ratio of failed attempt to reach tolerance in 50 repetitions (\autoref{fig:N_D_conv}) as a function of $N$ and $D$. In the event that none of the repetitions successfully reaches tolerance, the square shows yellow in \autoref{fig:N_D_time}, this corresponds with number of failed attempts in \autoref{fig:N_D_conv}. \autoref{fig:N_D} shows that the dp-GD-REAP, dp-MD-REAP and dp-GD work well in the regime where $D$ is small and $N$ is large. The data parameters are as follows, the inlier dimension $r=2$, and the inlier ratio $= 0.5$. The algorithms' parameters are as follows, total number of iterations of each algorithm are the same twice the number of points $2N$, the step size for the four dp-REAP algorithms to be $\eta_k = 8/\sqrt{k}$, the step size for dp-GGD and dp-SGGD is $\eta_k = 1 / 2^{\lfloor k/50 \rfloor}$. Each algorithm is run repetitions of 50 times.

\begin{figure}[h!]
     \centering
         \includegraphics[width=.28\textwidth]{experiment_figures/N_D1.pdf}
         \includegraphics[width=.28\textwidth]{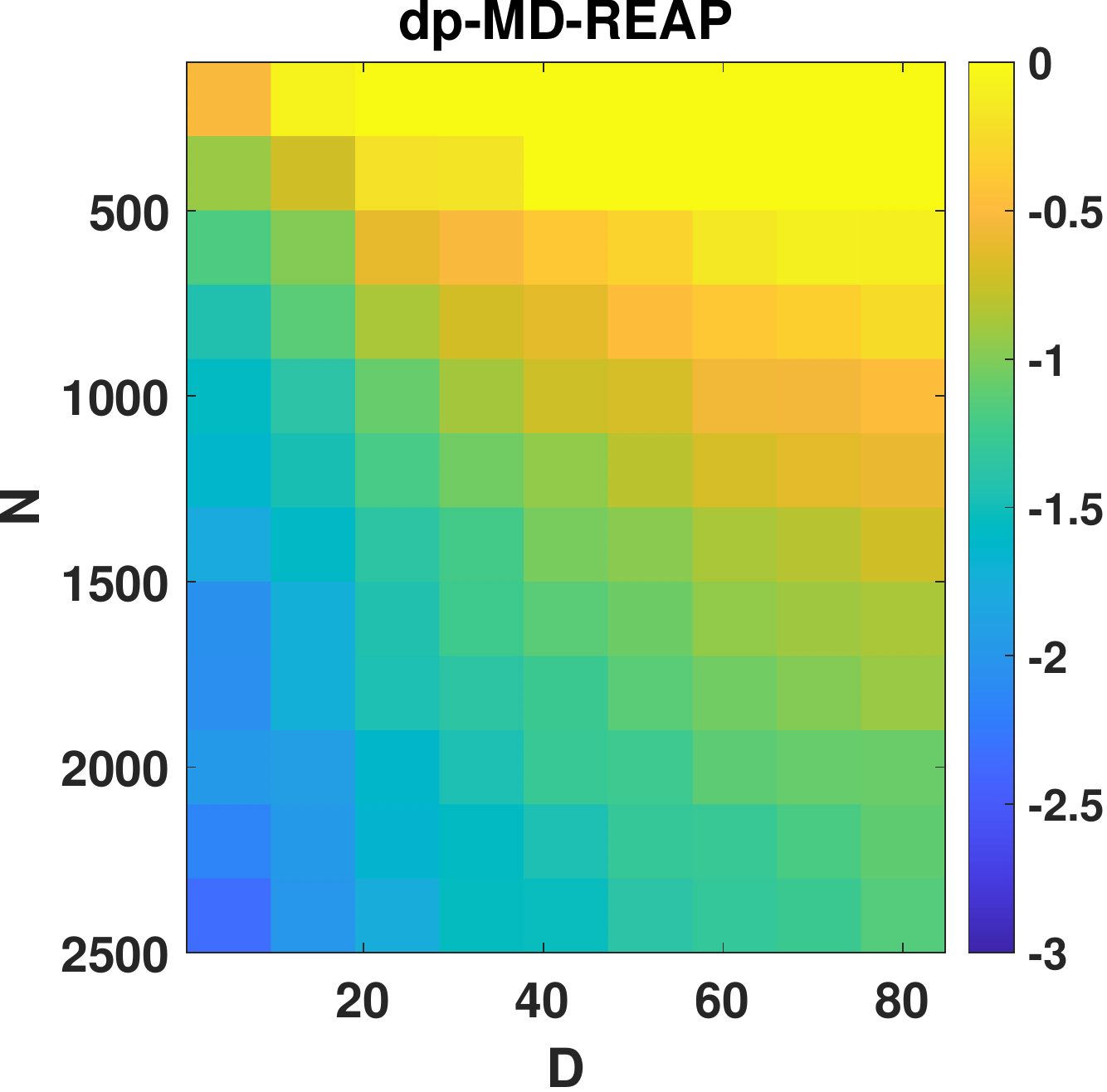}
         \includegraphics[width=.28\textwidth]{experiment_figures/N_D3.pdf}
         \includegraphics[width=.28\textwidth]{experiment_figures/N_D4.pdf}
         \includegraphics[width=.28\textwidth]{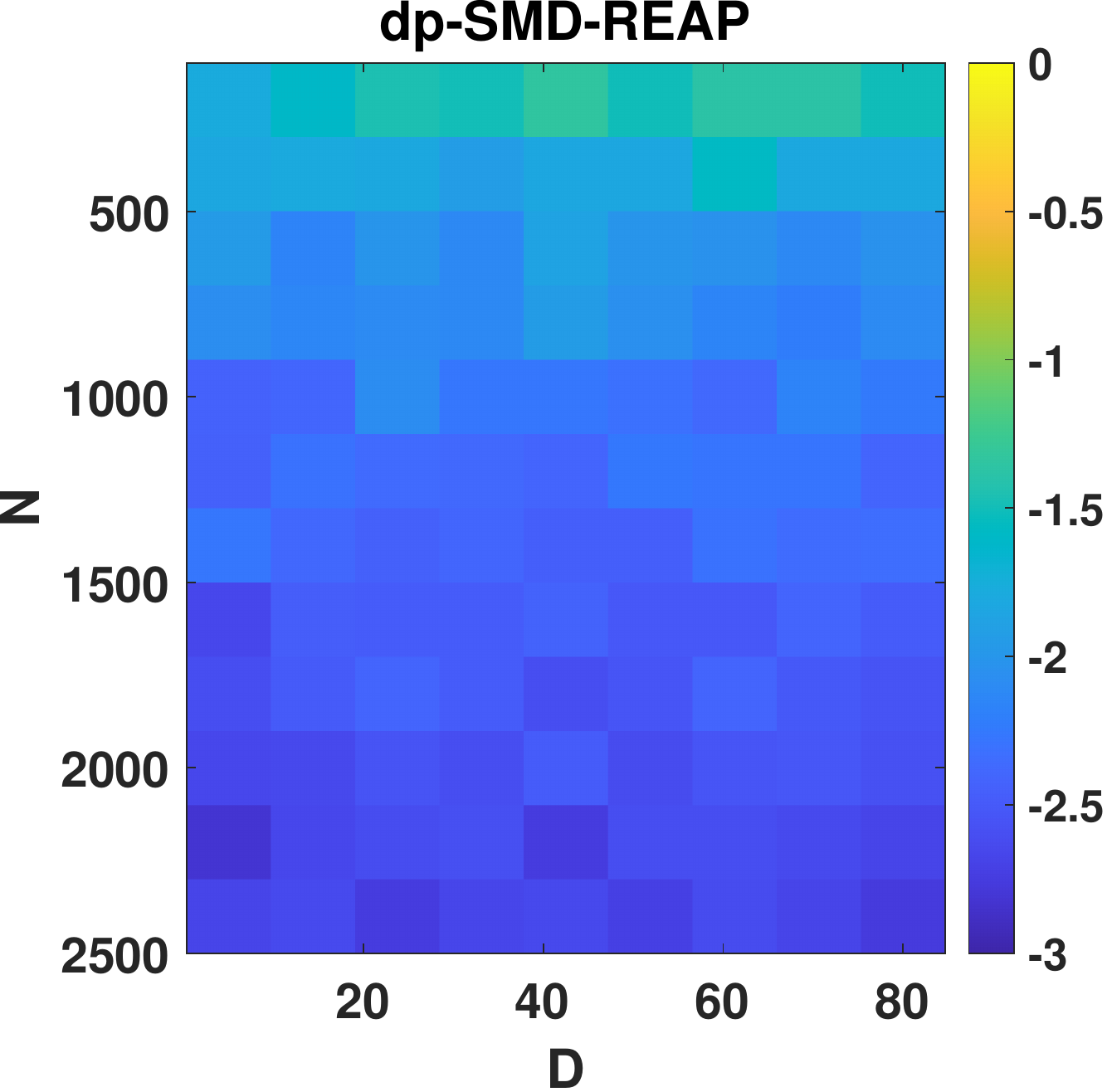}
         \includegraphics[width=.28\textwidth]{experiment_figures/N_D6.pdf}
        \caption{$D$ and $N$ phase transition plot. Each square represents $\log_{10}$ final error.}\label{fig:N_D}
\end{figure}

\begin{figure}[h!]
     \centering
         \includegraphics[width=.28\textwidth]{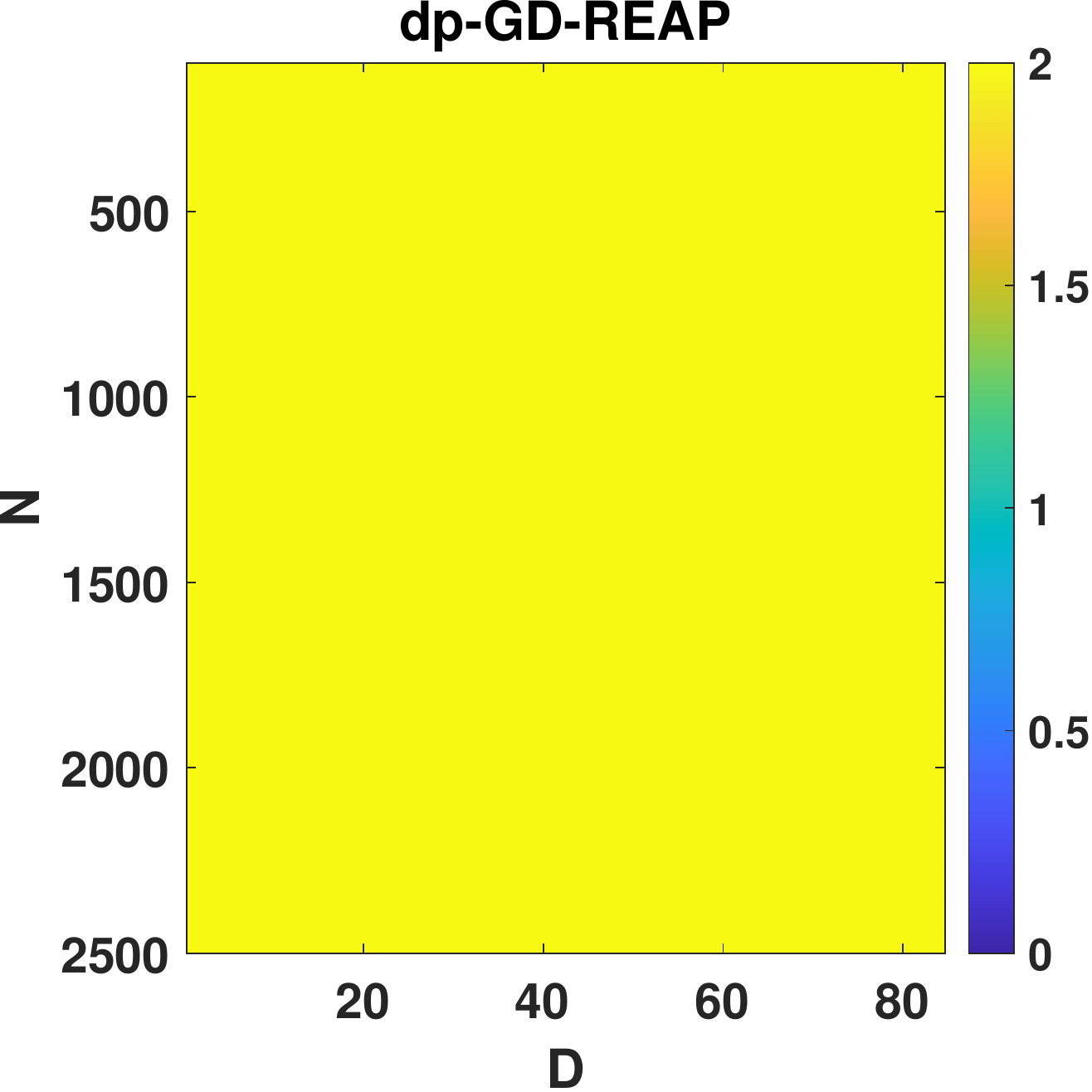}
         \includegraphics[width=.28\textwidth]{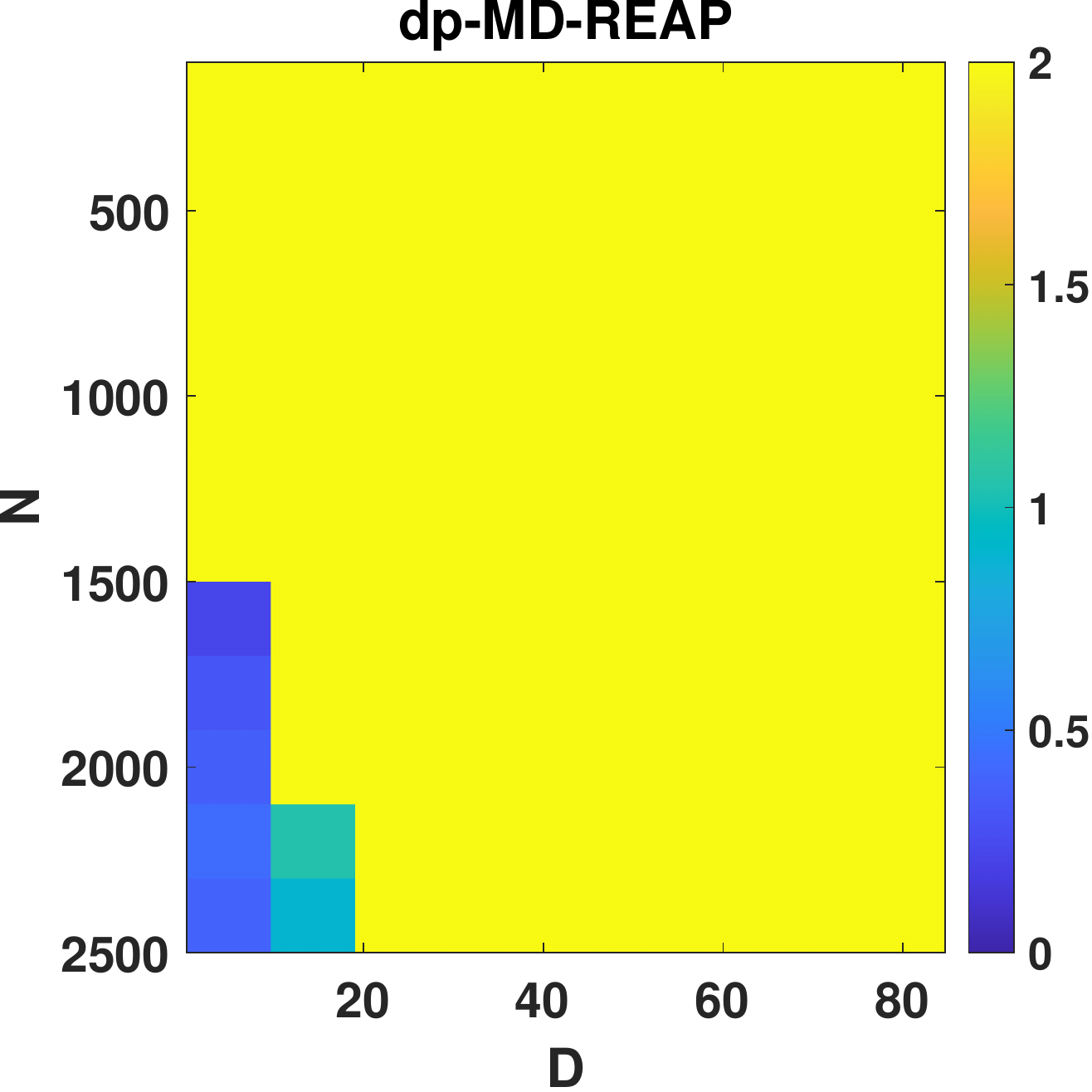}
         \includegraphics[width=.28\textwidth]{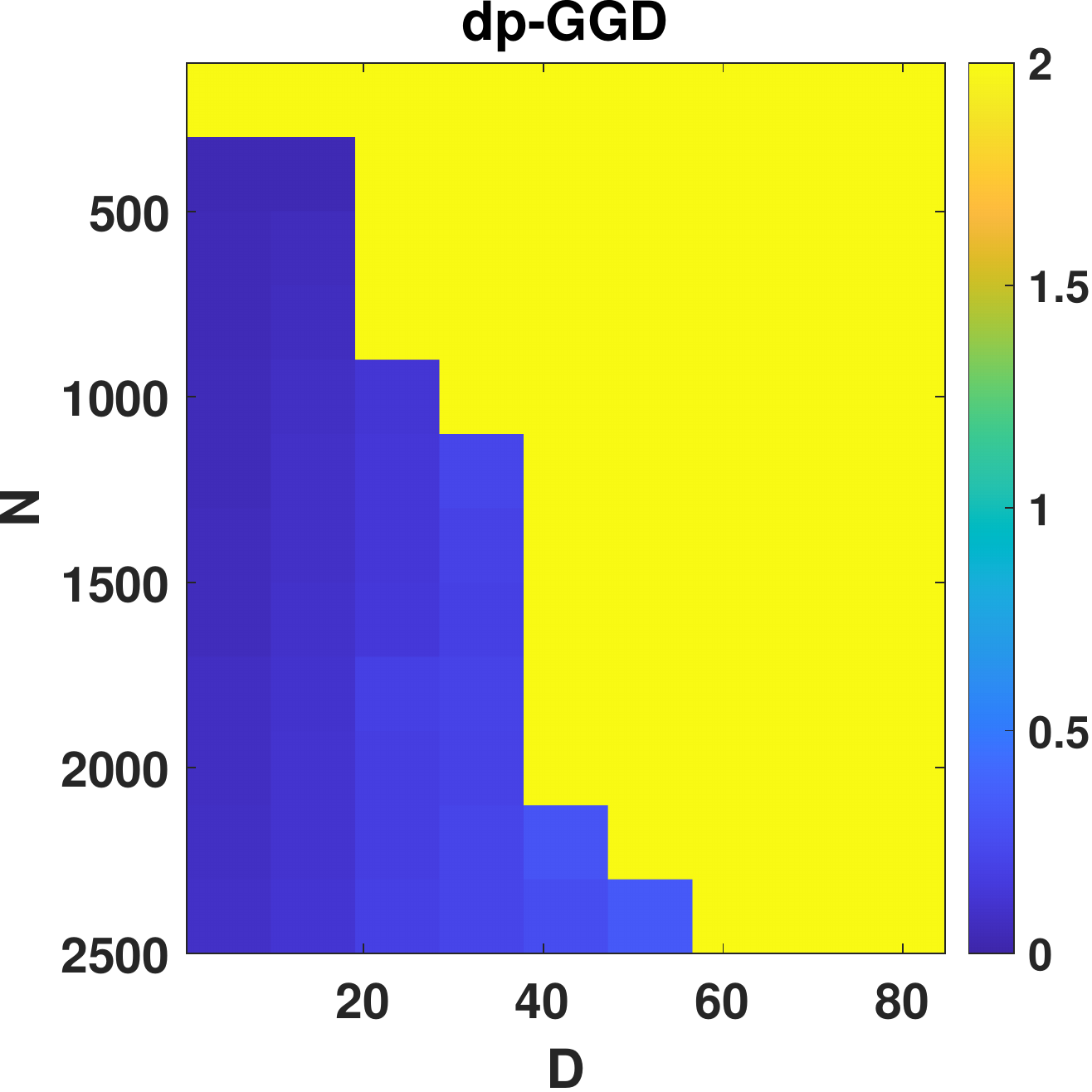}
         \includegraphics[width=.28\textwidth]{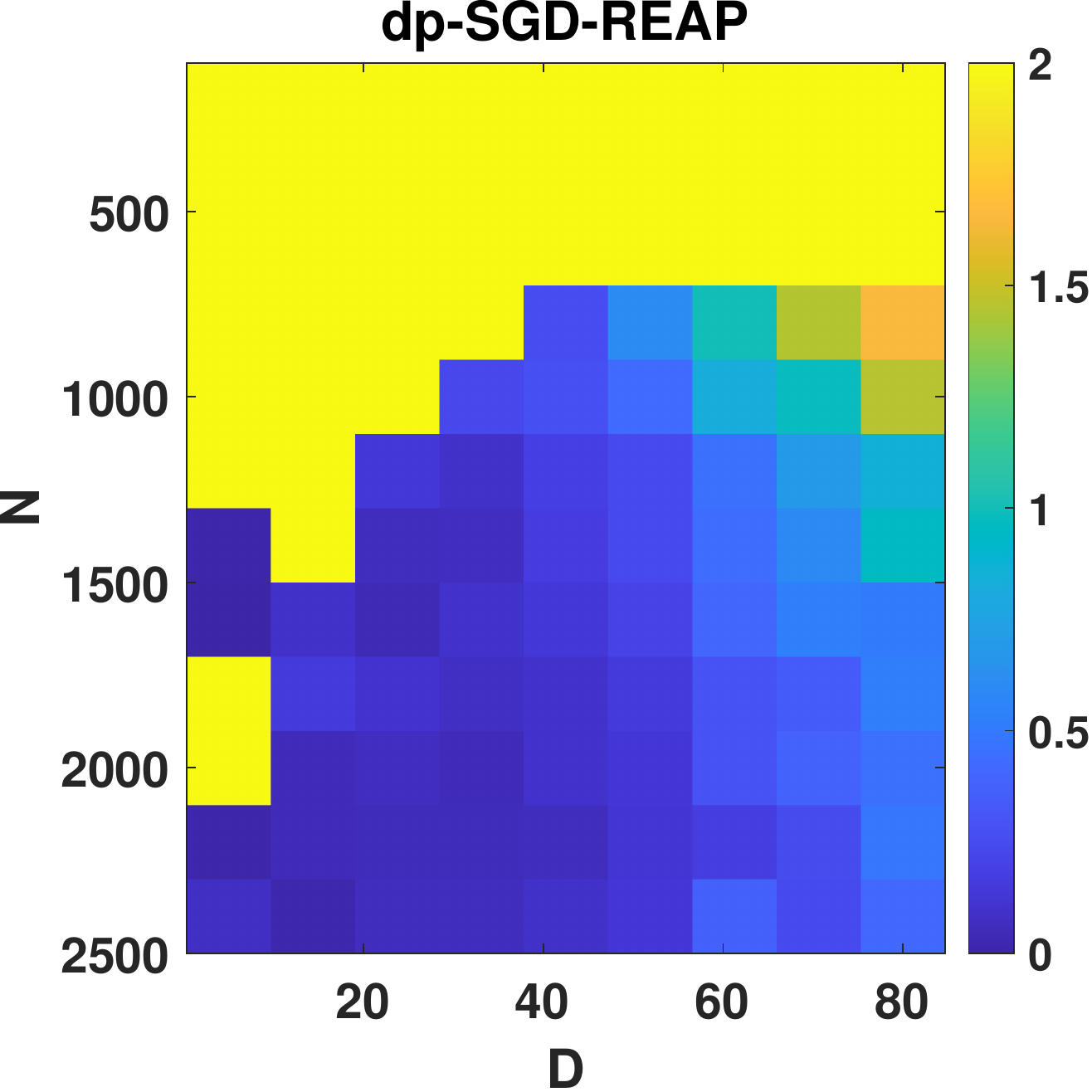}
         \includegraphics[width=.28\textwidth]{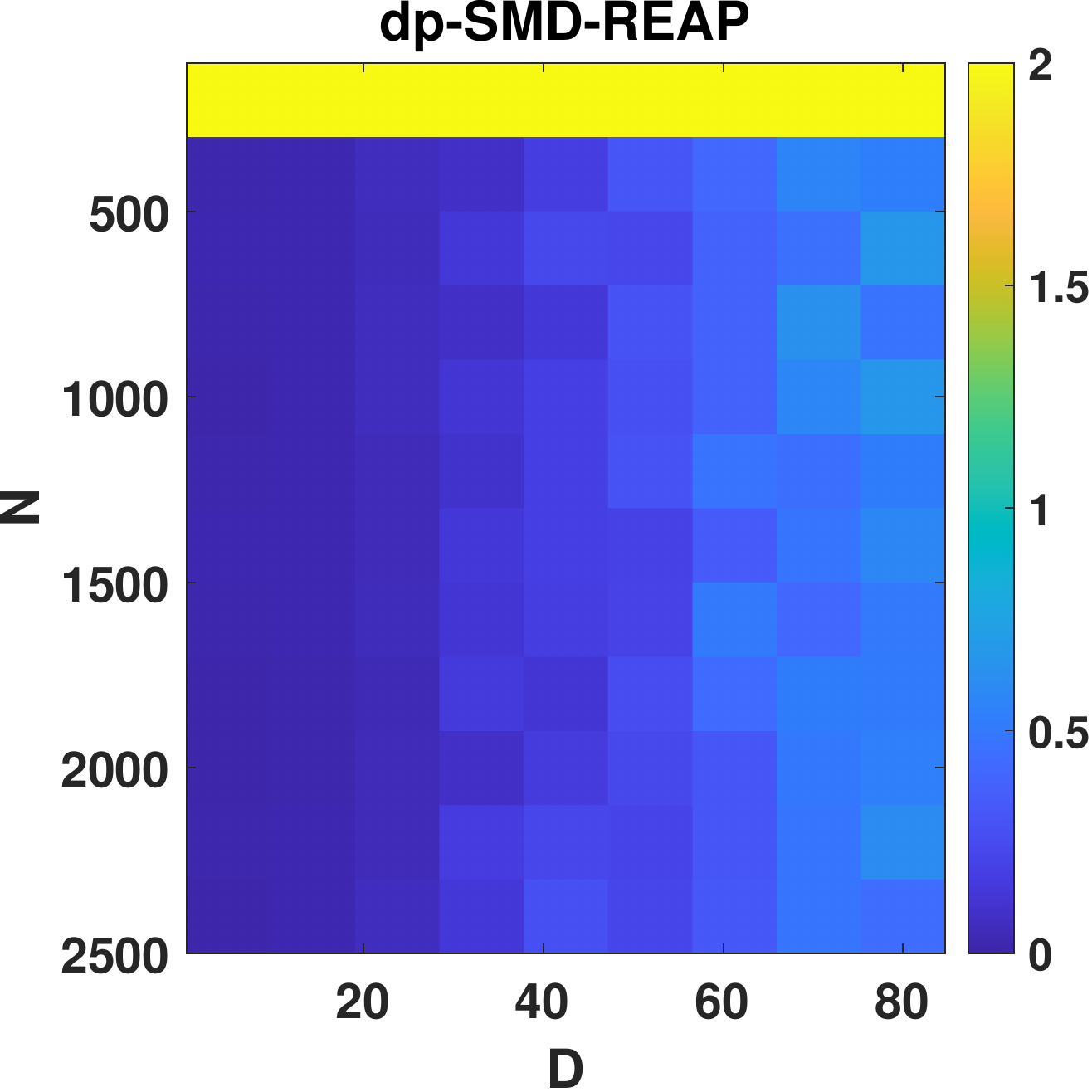}
         \includegraphics[width=.28\textwidth]{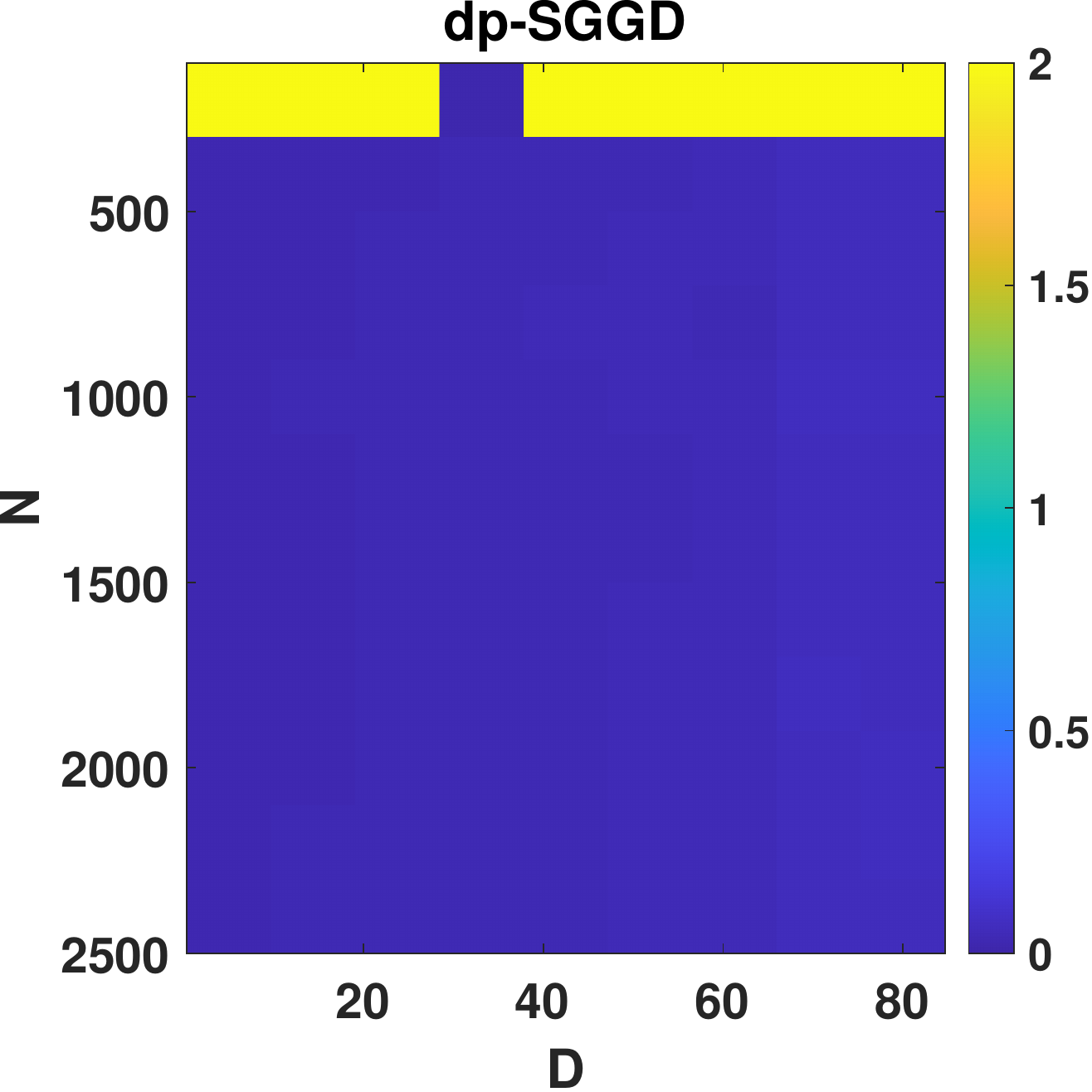}
        \caption{$D$ and $N$ phase transition plot. Each square represents the time to reach tolerance $10^{-2}$. If in all repetitions the algorithms failed to converge, the square is shown in yellow (I imputed a large number, 2 in this case). }\label{fig:N_D_time}
\end{figure}

\begin{figure}[h!]
     \centering
         \includegraphics[width=.28\textwidth]{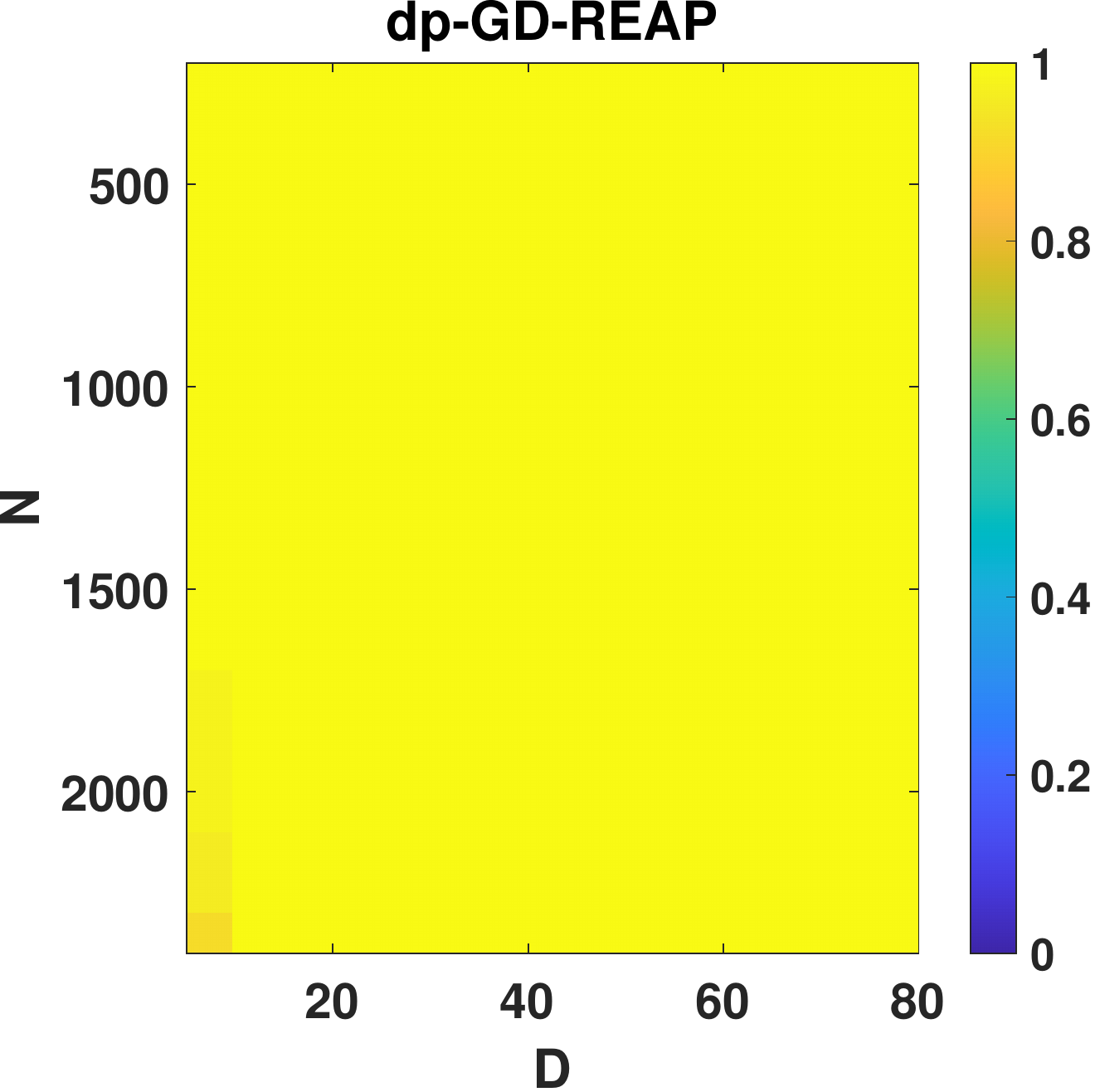}
         \includegraphics[width=.28\textwidth]{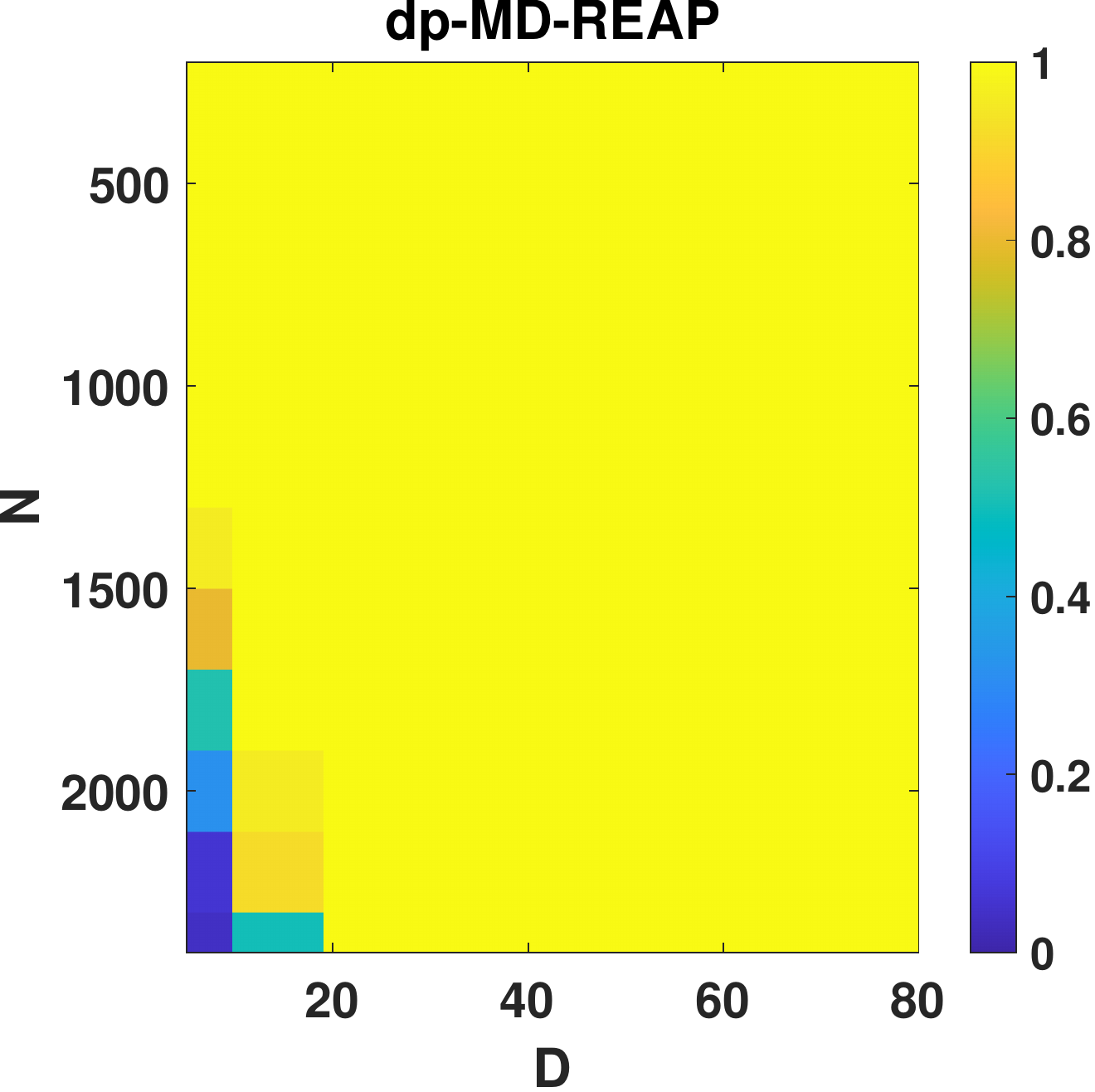}
         \includegraphics[width=.28\textwidth]{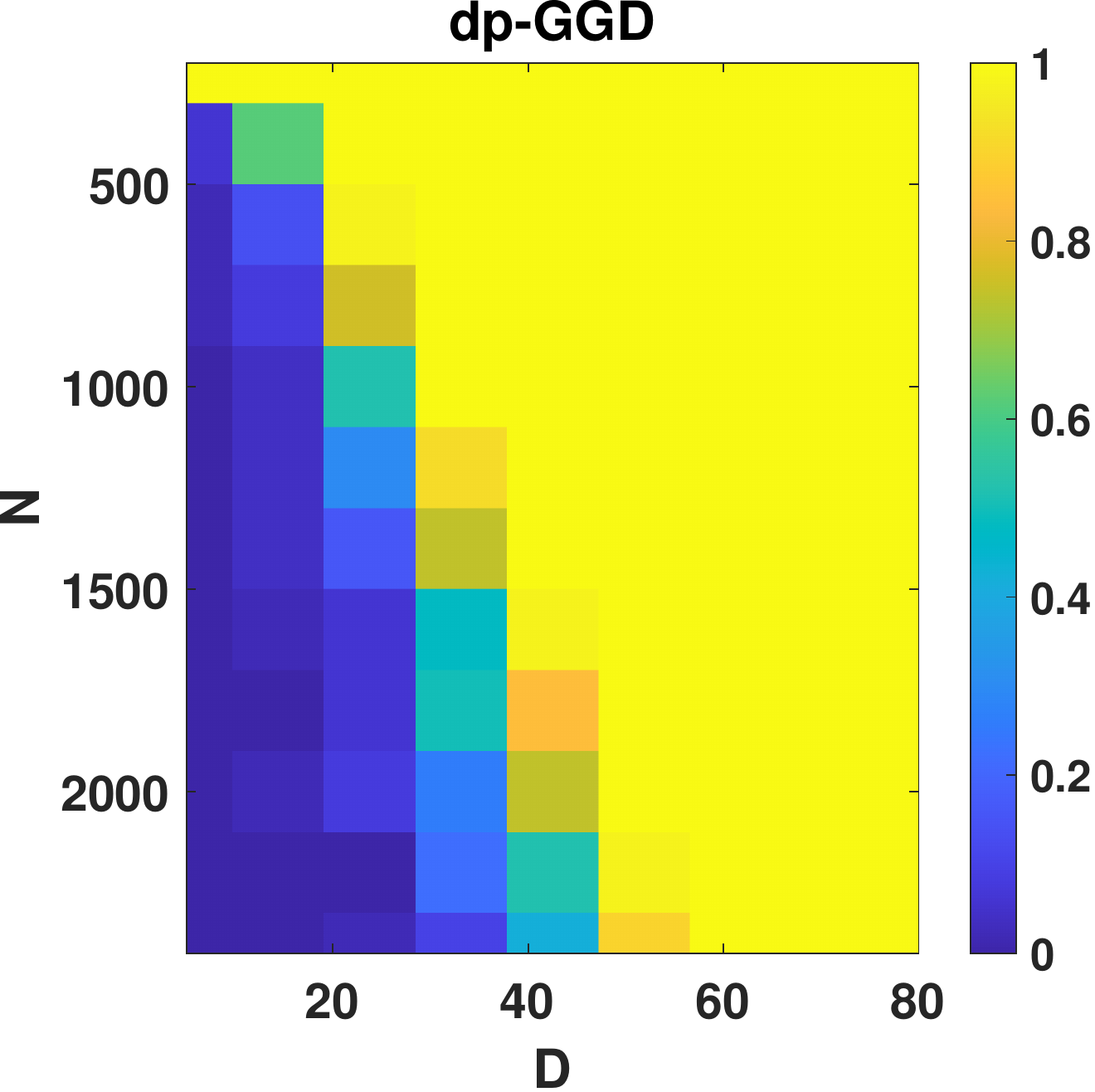}
         \includegraphics[width=.28\textwidth]{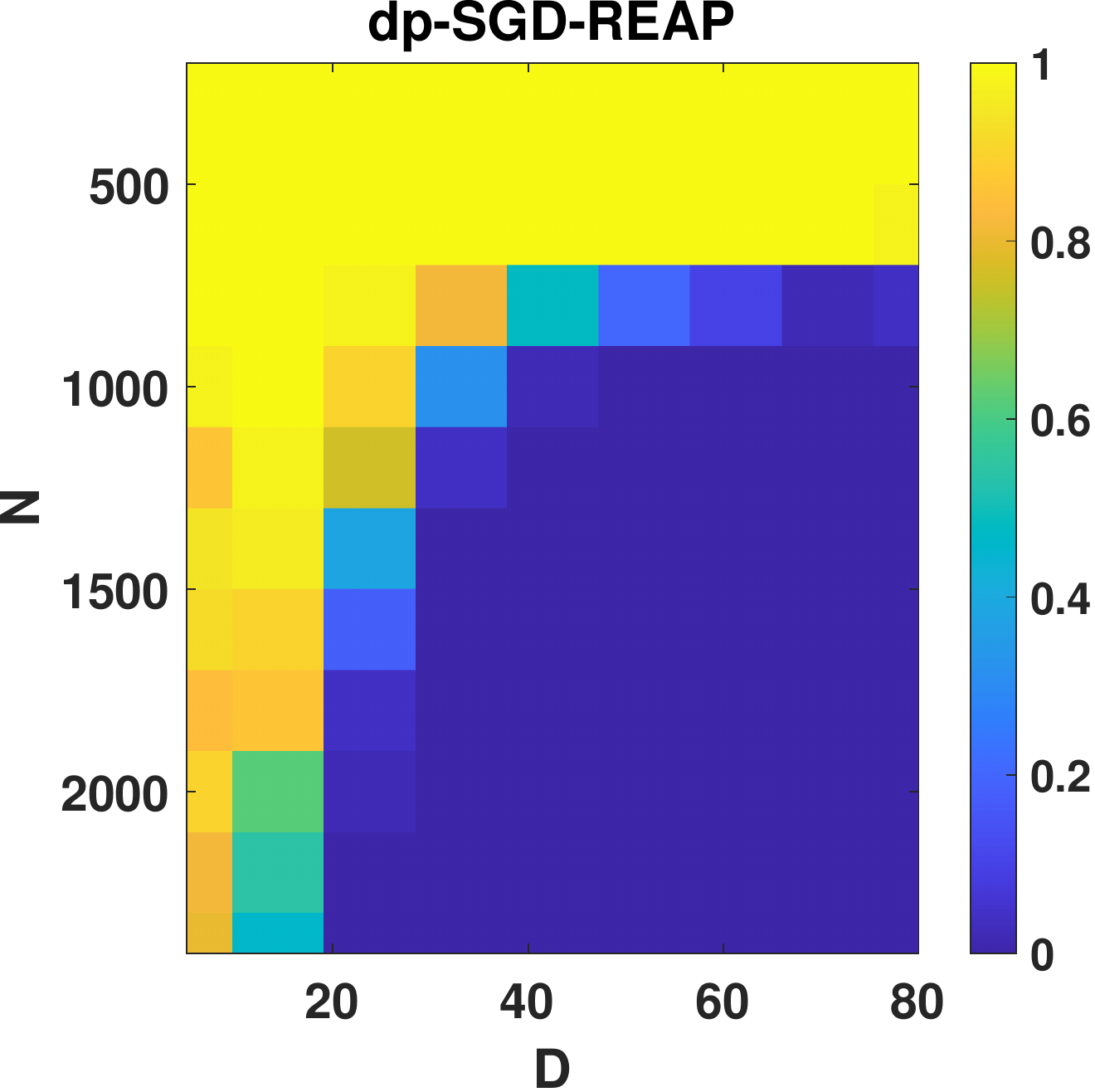}
         \includegraphics[width=.28\textwidth]{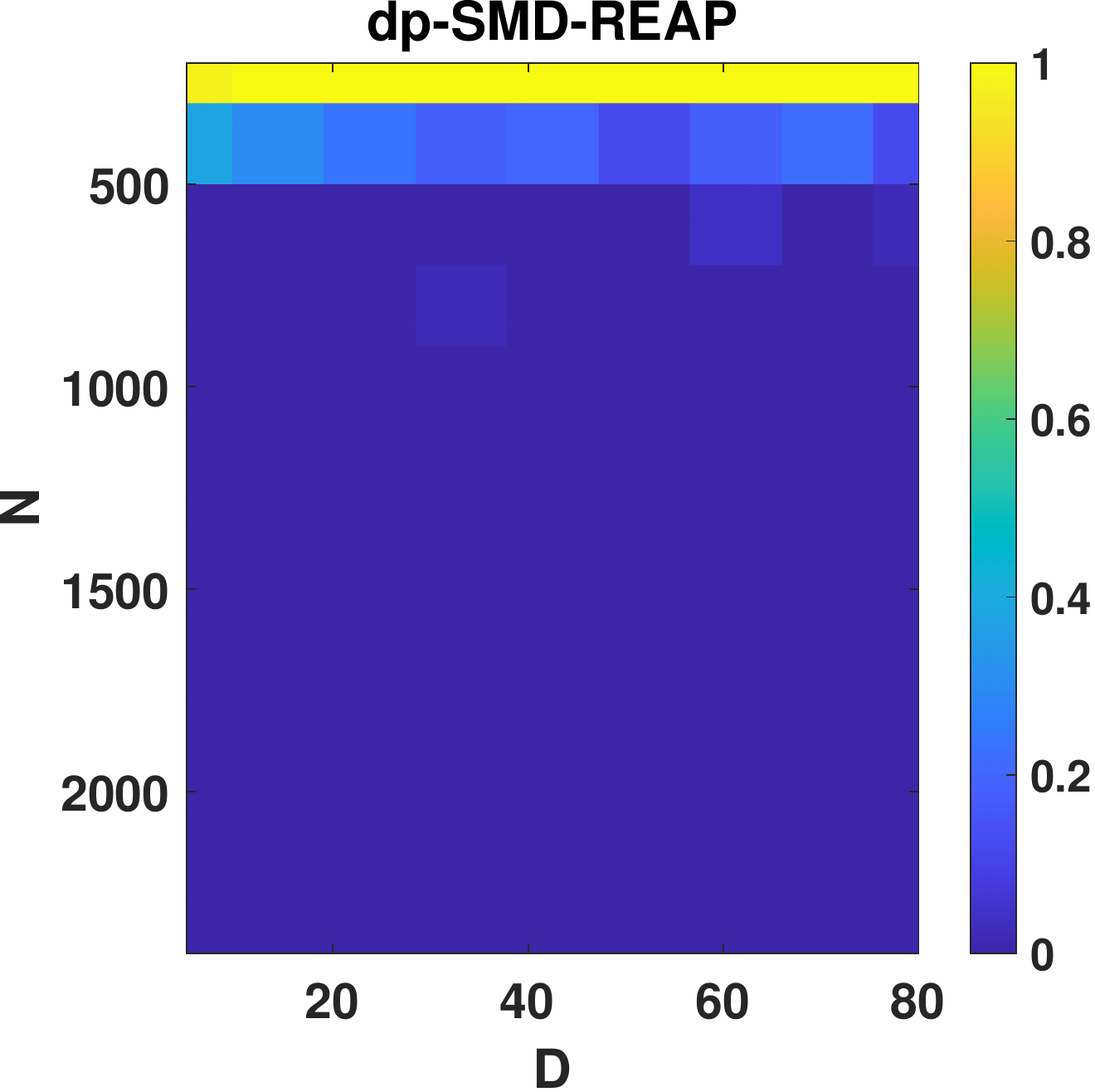}
         \includegraphics[width=.28\textwidth]{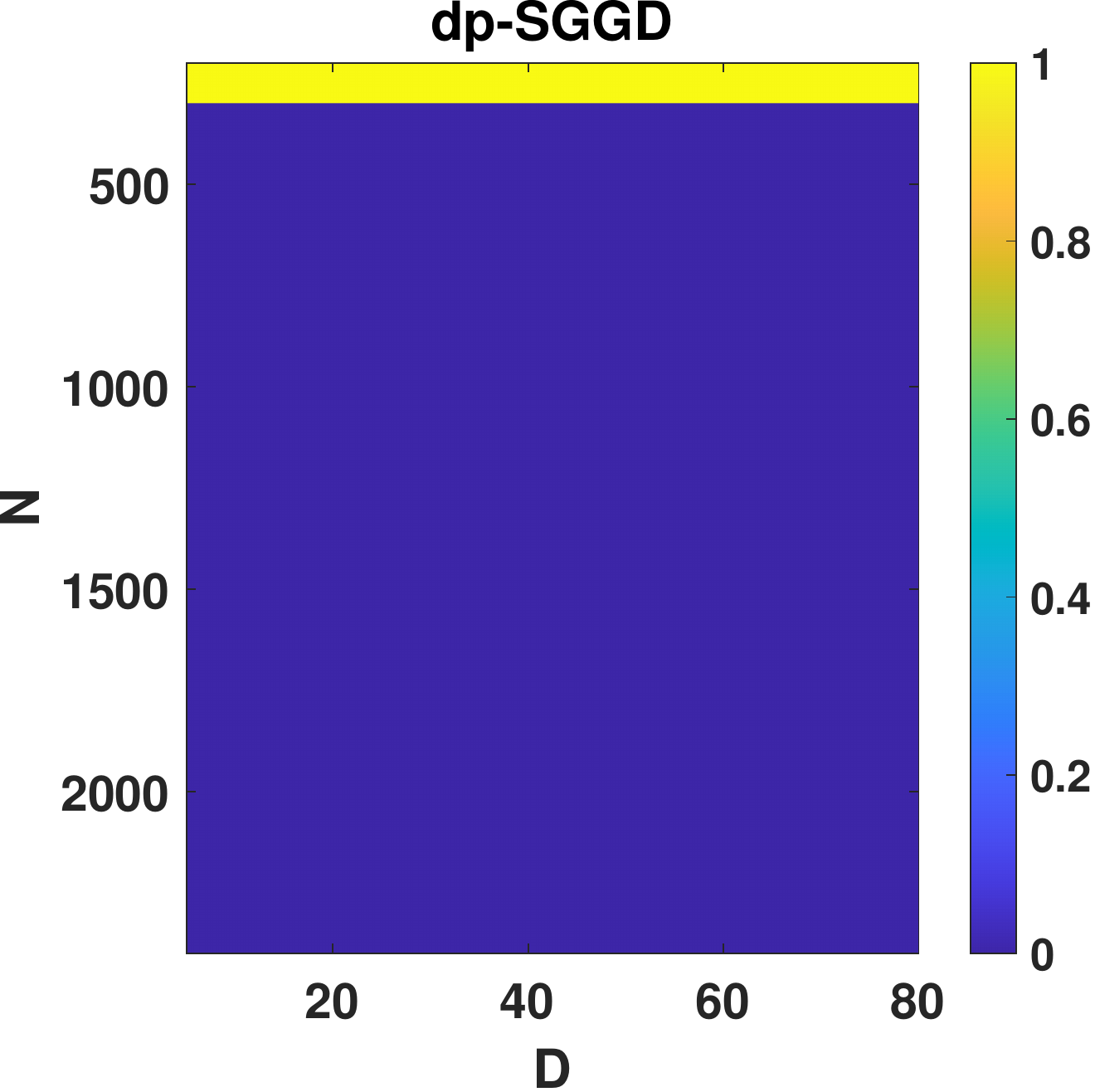}
        \caption{$D$ and $N$ phase transition plot. Each square represents the percentage of repetitions each algorithm fails to reach tolerance $10^{-2}$. }\label{fig:N_D_conv}
\end{figure}



\end{document}